\newcommand{\fairAUC}{fairAUC}
\newcommand{\maxAUC}{maxAUC}
\newcommand{\minBias}{minBias}
\newcommand{\random}{random}
\newcommand{\dm}{manager}
\newcommand{\dms}{managers}
\newcommand{\algo}{procedure}
\newcommand{\algos}{procedures}
\renewcommand{\epsilon}{\varepsilon}
\newtheorem{assumption}{Assumption}
\newtheorem{theorem}{Theorem}[section]
\newtheorem{lemma}[theorem]{Lemma}
\newtheorem{definition}[theorem]{Definition}
\newtheorem{corollary}[theorem]{Corollary}
\newtheorem{claim}[theorem]{Claim}
\newtheorem{observation}[theorem]{Observation}
\newtheorem{proposition}[theorem]{Proposition}
\newtheorem{remark}[theorem]{Remark}
\crefname{equation}{Equation}{Equations}
\crefname{figure}{Figure}{Figures}
\crefname{table}{Table}{Tables}
\crefname{section}{Section}{Sections}
\crefname{appendix}{Section}{Sections}
\crefname{algorithm}{Algorithm}{Algorithms}
\crefname{assumption}{Assumption}{Assumptions}
\crefname{theorem}{Theorem}{Theorems}
\crefname{lemma}{Lemma}{Lemmas}
\crefname{definition}{Definition}{Definitions}
\crefname{conjecture}{Conjecture}{Conjectures}
\crefname{corollary}{Corollary}{Corollaries}
\crefname{construction}{Construction}{Constructions}
\crefname{claim}{Claim}{Claims}
\crefname{observation}{Observation}{Observations}
\crefname{proposition}{Proposition}{Propositions}
\crefname{fact}{Fact}{Facts}
\crefname{question}{Question}{Questions}
\crefname{problem}{Problem}{Problems}
\crefname{remark}{Remark}{Remarks}
\crefname{example}{Example}{Examples}
\crefname{appendix}{Appendix}{Appendices}
\newcommand{\yesnum}{\addtocounter{equation}{1}\tag{\theequation}}
\newcommand{\tagnum}[1]{\addtocounter{equation}{1}{\tag{#1) \ (\theequation}}}
\newcommand{\customlabel}[2]{%
\protected@write \@auxout {}{\string \newlabel {#1}{{#2}{\thepage}{#2}{#1}{}} }%
\hypertarget{#1}{}
}
\newcommand{\red}[1]{\textcolor{black}{#1}}
\newcommand{\N}{\mathbb{N}}
\newcommand{\R}{\mathbb{R}}
\newcommand{\cD}{\mathcal{D}}
\newcommand{\sfrac}[2]{#1/#2}
\renewcommand{\epsilon}{\varepsilon}
\newcommand{\Ex}{\operatornamewithlimits{\mathbb{E}}}
\newcommand{\argmax}{\operatornamewithlimits{argmax}}
\def\abs#1{\left| #1 \right|}
\def\sabs#1{| #1 |}
\newcommand{\sinparen}[1]{(#1)}
\newcommand{\sinbrace}[1]{\{#1\}}
\newcommand{\sinangle}[1]{\langle#1\rangle}
\newcommand{\inparen}[1]{\left(#1\right)}
\newcommand{\inbrace}[1]{\left\{#1\right\}}
\newcommand{\insquare}[1]{\left[#1\right]}
\newcommand{\inangle}[1]{\left\langle#1\right\rangle}
\newcommand{\Stackrel}[2]{\stackrel{\mathmakebox[\widthof{\ensuremath{#2}}]{#1}}{#2}}
\newcommand{\zo}{\{0,1\}}
\newcommand{\sexp}[1]{{{\hbox{\tiny$($}}#1{\hbox{\tiny$)$}}}}
\newcommand{\Var}{\operatorname{Var}}
\newcommand{\Cov}{\operatorname{Cov}}
\newcommand{\cN}{\mathcal{N}}
\newcommand{\AUC}{\ensuremath{\mathrm{AUC}}}
\newcommand{\ab}{\ensuremath{\inbrace{a,b}}}
\newcommand{\bias}{\mathsf{Bias}}
\title{\bf Fairness for AUC via Feature Augmentation}
\author{Hortense Fong \\ Yale University \and Vineet Kumar \\ Yale University \and Anay Mehrotra \\ Yale University \and Nisheeth K. Vishnoi \\ Yale University}
\begin{document}

\maketitle

\begin{abstract}
  We study fairness in the context of classification where the performance is measured by the area under the
  curve (AUC) of the receiver operating characteristic. AUC is commonly used to measure the performance
  of prediction models. The same classifier can have significantly varying AUCs for different protected groups
  and, in real-world applications, it is often desirable to reduce such cross-group differences. We address the
  problem of how to acquire additional features to most greatly improve AUC for the disadvantaged group.
  We develop a novel approach, fairAUC, based on feature augmentation (adding features) to mitigate bias
  between identifiable groups. The approach requires only a few summary statistics to offer provable guarantees on AUC improvement, and allows managers flexibility in determining where in the fairness-accuracy tradeoff they would like to be. We evaluate fairAUC
  on synthetic and real-world datasets and find that it significantly improves AUC for the disadvantaged group
  relative to benchmarks maximizing overall AUC and minimizing bias between groups.
\end{abstract}

\newpage
\setcounter{tocdepth}{2}
\tableofcontents
\addtocontents{toc}{\protect\setcounter{tocdepth}{2}}

\newpage

\section{Introduction}\label{section:introduction}
Algorithms are often the basis of many important decisions in today's business world and society. There are a wide range of applications, including hiring \citep{liem2018screening,de2019bias,lambrecht2019algorithmic},  mortgage lending \citep{fuster2020mortgage}, criminal justice  \citep{berk2018fairness}, and healthcare \citep{obermeyer2019dissecting}, in which algorithms are used to make predictions, which are then used to make decisions, either with or without human supervision. Many such algorithms have been found to be unfair or discriminatory on the basis of legally and socially salient characteristics like race, gender, and age. %

Given the importance of achieving fairness across individuals and groups, a wide range of fair algorithms have been proposed. Most of the fairness interventions assume that data is already collected and fixed, and focus on how to design algorithms that are fair. However, if the original data features are collected without recognizing fairness issues, focusing on only the algorithm might not be sufficient.
Consider a scenario in which features are selected to maximize accuracy in a population with two groups. Then, it is possible that the features are perfectly predictive for the majority group but entirely uninformative for the minority group.
In this case, an algorithmic solution cannot improve the classification accuracy of the minority group.
Indeed, a survey of industry practitioners finds that it is at the \textit{data collection} step that practitioners seek guidance \citep{holstein2019improving}.

This phenomenon of additional feature acquisition is an everyday business practice where firms obtain more information about their current or prospective customers, e.g. through a credit report. To quote American Express:\footnote{\url{https://www.americanexpress.com/en-us/credit-cards/credit-intel/hard-inquiry-vs-soft-inquiry/}}

\begin{quote}
  \textit{A major fact of life is that whoever you do business with will likely evaluate your financial status – and do it regularly. Whenever you apply for a loan or sign up for a new cell phone plan, lenders and businesses will make a ``hard” pull of your credit report to get a sense of how well or poorly you handle spending and debt payments. Businesses you already have an account with, or others that may want to offer you a ``preapproval'' deal, can also take a peek at your credit report. %
  }\end{quote}

  \noindent Motivated by this problem, we propose a procedure that uses \textit{feature acquisition or augmentation} (additional feature collection) to improve the predictive performance of  disadvantaged groups.\footnote{A distinct but related literature focuses on feature selection rather than acquisition. With feature selection, the manager already has the data corresponding to all the features and chooses a subset of those to use in an algorithm \citep{guyon2003introduction,li2017feature}, whereas with feature acquisition the manager is obtaining auxiliary or additional features from a third-party like a data vendor. The data vendor might reveal either a sample of the records with each of the additional features or provides the manager with summary statistics of the features.}
  By focusing on the disadvantaged group(s), our approach aims to reduce bias, characterized by the ratio of the area under the receiver operating characteristic curves (AUCs) between protected groups. Our approach, which we call \fairAUC, is applicable to a wide variety of classification algorithms and requires only a few data distribution moments of the additional (auxiliary) features. The method is flexible enough to allow decision-makers or managers to determine where in the fairness-accuracy tradeoff they would like to be.

  AUC is a non-parametric performance measure that has long been used in binary classification problems, across a wide range of fields, including diagnostic systems, medicine, and in machine learning \citep{thompson1989statistical,bertsimas2016analytics,ahsen2019cancer}.
  AUC is derived from the receiver operating characteristic (ROC) curve, which captures classifier performance in two dimensions by plotting the true positive rate against the false positive rate, by varying the classification threshold. Integrating the area under the ROC curve summarizes the true positive and false positive rates into a single metric, the AUC, which falls between zero and one. AUC is also related to the Mann-Whitney $U$-statistic, and represents the probability that a classifier will rank a randomly chosen positive instance higher than a randomly chosen negative instance. %

  When should a \dm\ use AUC as a model performance criterion? First, classification algorithms require the \dm\ to set a threshold on scores output by a model to separate the classes. AUC provides a threshold-invariant way to obtain model performance without human judgment regarding appropriate thresholds. AUC integrates across thresholds, and is especially useful in environments where there may be multiple \dms, who have different thresholds. Second, AUC is invariant to the base rate, or the proportion of individuals in each class. For data with significant class imbalance, an algorithm would achieve high accuracy by simply always predicting the majority class. However, AUC would not assign this algorithm a high performance measure because the algorithm fails to discriminate between the positive and negative classes. Unlike accuracy, F1, and the area under the precision-recall curve, AUC is robust to changes in the base rate,\footnote{When base rates vary, accuracy, F1, and the area under the precision-recall curve will change even if the fundamental characteristics of the classes remain the same (i.e., $\Pr_{\rm train}[X|Y]=\Pr_{\rm test}[X|Y]$ but $\Pr_{\rm train}[Y] \neq \Pr_{\rm test}[Y]$).} which may vary significantly over time and place \citep{fawcett2006introduction}. Third, AUC serves as a measure of rank-ordering, which is particularly useful when there are different intensities of intervention available \citep{kallus2019fairness}. For example, a radiologist may set different thresholds for different treatment recommendations based on the outcomes of some tests. Similarly, a bank may set different interest rates depending on credit score and other factors. Thus, the \dm\ would be interested in multiple thresholds, not just one, and AUC can provide an overall characterization across all such thresholds.

  \subsection*{Our Contribution}

  We propose the \fairAUC\ procedure based on feature augmentation to maximally increase the AUC of the disadvantaged group during each round of feature acquisition. It allows the \dm\ to identify new (costly) features to acquire. We provide theoretical guarantees for how \fairAUC\ improves the AUC of each of the groups in a round of feature acquisition. Our theoretical guarantees are directly applicable to the following commonly used models, all of which belong to the class of Generalized Linear Models (GLMs): logistic and multinomial regression, linear SVM, Poisson or negative binomial regression \citep{nelder1972generalized}. We evaluate the performance of \fairAUC\ alongside benchmark procedures using synthetic data as well as in real empirical contexts (COMPAS and Diabetes datasets). We find that \fairAUC\ achieves low bias between groups, while obtaining relatively high levels of AUC. Moreover, our approach permits flexibility in determining how many features to acquire, and suggests which ones, based on AUC, fairness, or a weighted combination.

  \subsubsection*{Feature Acquisition using fairAUC}
  We use a  binormal framework to characterize the distribution of a feature and show how Fisher's linear discriminant (FLD) can be used to acquire an additional feature to maximally increase the AUC of the disadvantaged group each feature acquisition round. FLD produces the linear projection which maximizes AUC within this framework \citep{su1993linear}. %
  While other papers have also suggested searching for additional features to increase fairness \citep{hardt2016equality,chen2018why}, we provide specific recommendations on \textit{which features to acquire}.
  The binormal framework provides us easily interpretable summary statistics and theoretical guarantees.

  \begin{figure}
    \begin{center}
      \includegraphics[height=11cm]{./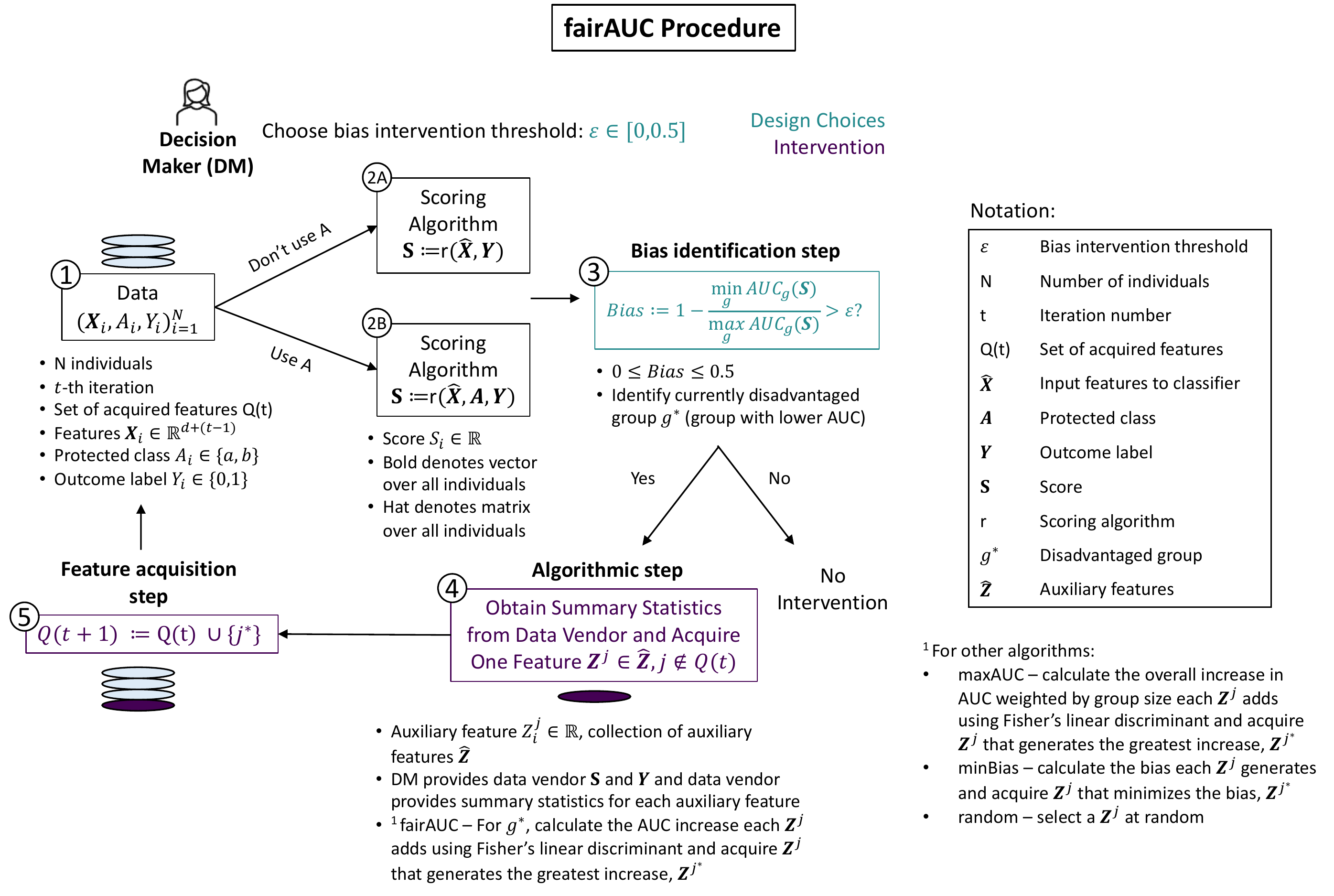}
    \end{center}
    \caption{Schematic of Proposed fairAUC Feature Acquisition Procedure}\label{figure:schematic}
  \end{figure}

  Figure \ref{figure:schematic} overviews our proposed \fairAUC\ \algo. \fairAUC\ seeks to improve the AUC of the lower-AUC group, rather than explicitly minimizing bias because the latter does not encourage learning in the long run. Our approach is greedy, focusing on feature acquisition, whereby we start with a set of initial features and then obtain additional features over rounds. The set of features available is summarized by: (a) moments of the data distribution of the auxiliary features, and (b) correlation with the data already collected. The \fairAUC\ \algo\ chooses the feature that most increases the AUC of the lower-AUC group each round, and proceeds through multiple rounds until a threshold condition is satisfied.

  Our research focuses on feature acquisition, which is related to, but distinct from the problem of feature selection.
  In a feature acquisition problem, the decision maker (manager) has a set of data corresponding to specific individuals. For example, firms have data on customers, and universities have data on applicants. These data correspond to specific features/columns. University applicants' features might include date of birth, gender, citizenship, high school GPA, and test scores. Similarly, a bank offering loans might have features including name, age, income, credit score, number of accounts, and account balance.

  Suppose a manager at a bank wants to predict the probability of repayment for a new loan applicant. The manager is also concerned about fairness across groups and finds that these features are very predictive (in the sense of achieving a high AUC) for group $a$ but not for group $b$. Group, here, could represent race, gender, or any other group of interest. The manager might want to acquire additional features from a data vendor to obtain more fair predictions. To compute the AUC improvement corresponding to each potential feature, the manager provides the data vendor with a set of individual records including an identifier $i$ for each individual, a score $S$ that is the output of a classifier predicting loan repayment, \red{and class $Y$ representing previous loan repayment behavior}. The data vendor computes the summary statistics of each available feature, specifically the class-conditional means, variances, and covariances of the features with respect to $S$. The manager then uses fairAUC to determine which feature to acquire.\footnote{A second method to compute the AUC improvement for each feature would be for the manager to acquire all the features available with the data vendor for a (small) subset of the firm's customers. The manager can then compute the class-conditional means, variances and covariances with respect to the score $S$, and then decide which feature to acquire next using fairAUC. The downside to this strategy is that it could end up being fairly expensive. For example, purchasing all the features from Aspire North, a data vendor, would cost more than \$15,000/1,000 individuals.}

  \subsubsection*{Sources of Auxiliary Features}

  The \fairAUC\ \algo\ assumes that there are auxiliary features available for acquisition. Some concrete potential sources of additional features include data vendors, like Axciom and Experian, and social media data vendors, like Brandwatch and Grepsr. Third-party data vendors acquire data by buying it, licensing it, or scraping it from public records and can build databases of thousands of features. \red{These features can include personal data, financial data, behavioral data, etc.}
  A number of startups that provide loans (e.g., Earnest, Kreditech) already use such alternative sources of data (e.g., social network data, e-commerce shopping behavior, LinkedIn profiles) to improve their loan provision decisions.\footnote{\url{https://www.upturn.org/static/files/Knowing_the_Score_Oct_2014_v1_1.pdf}} \fairAUC\ enables managers to determine which features to acquire from data vendors based on a small set of summary statistics. Note that \fairAUC\ applies to individual-level data. Table \ref{table:potential_aux_features} suggests potential auxiliary features and feature sources for several classification problems.

  \begin{table}[htbp]
    \small
    \centering
    \caption{Potential Auxiliary Features for Various Classification Problems}
    \vspace{1mm}
    \begin{tabular}{ccccc}
      \hline \hline
      Problem\tablefootnote{Loan provision \citep{kamiran2009classifying,hardt2016equality}, bail decision \citep{larson2016compas}, hiring \citep{schumann2020we}, extra medical attention \citep{obermeyer2019dissecting} are problems that have received attention in the fairness literature because of the large impact these decisions have on different groups.}
      & Prediction & First-party Data & Auxiliary Features & Source \\
      & Outcome $\hat{Y}$ & Examples $\mathbf{\hat{X}}$ & Examples $\mathbf{\hat{Z}}$ & \\
      \hline
      Loan provision & Default & Name, address, SSN, & Work history, college & Data vendor, \\
      & & credit history\tablefootnote{\url{https://apnews.com/article/47693e8cf0ee8328e82cd7ecd06d3df3}}
      & major, spending and & social data \\
      & & & saving behavior, & vendor \\
      & & & social network data\tablefootnote{\url{https://www.upturn.org/static/files/Knowing_the_Score_Oct_2014_v1_1.pdf}} & \\
      \hline
      Bail decision & Recidivism & Criminal history, & Spending  and saving & Data vendor, \\
      & & questionnaire responses\tablefootnote{\url{https://www.propublica.org/article/machine-bias-risk-assessments-in-criminal-sentencing}}
      & behavior, credit history, & social data \\
      & & & social network data & vendor \\
      \hline
      Hiring & Promotion & Resume, referral, & Social network data & Social data  \\
      & & interview & engagement & vendor \\
      \hline
      Extra Medical & Hospital & Biomarker values, & Wearables, social & Devices\tablefootnote{e.g., Apple Watch, Fitbit}, social  \\
      Attention & Readmission & comorbidities & network data & data vendor \\
      \hline \hline
    \end{tabular}
    \label{table:potential_aux_features}
  \end{table}

  \noindent More broadly, \fairAUC\ can also be thought about as a framework for determining which features should be collected by the \dm\ or decision maker going forward. For example, consider a government agency that wants to decrease homelessness in the long run across groups. The agency needs to determine whom it should allocate housing to and so the prediction problem is whether someone is going to be homeless two years after receiving housing. Suppose they know a potential set of features (e.g., family support, interests, education) from working with homeless individuals and have some prior distribution over the class-conditional means and variances but have not yet collected data corresponding to those features. Collecting all these features is costly and so the agency seeks to prioritize a single feature to collect. The decision maker can use intuition behind \fairAUC\ to guide which feature to prioritize.

  \subsubsection*{Advantages of fairAUC}

  The \fairAUC\ procedure has several appealing aspects. First, it can be used with a variety of classification algorithms. While the firm that is doing the prediction and making the decision (e.g., the bank in the case of offering a loan) has access to its classification algorithm, this algorithm does not need to be shared with the data provider.%

  In fairAUC, the classification model is retrained during each round by the manager and a score is obtained in the process. This score \red{and class (e.g., prior loan repayment)} is then shared with the data provider. The data provider uses the AUC formula to compute the AUC improvement corresponding to each of the available features. The formula only needs the scores, \red{classes,} and the new feature vector.

  Second, it uses minimal summary statistics of the auxiliary features, rather than requiring full access to the feature matrix. Third, \fairAUC\ does not treat either of the groups as permanently disadvantaged (or advantaged), unlike most research in the fairness literature. Rather, as we proceed with feature augmentation, the \textit{currently higher-AUC group can become the disadvantaged group after the addition of a new feature}. Thus, our goal in each round is to equalize the AUCs by improving the AUC of the \textit{currently disadvantaged} group, preventing reverse discrimination.

  \subsubsection*{Performance of fairAUC}

  We first characterize how the \fairAUC\ procedure improves the AUC of the lower-AUC group by a minimum threshold amount, thus providing theoretical performance guarantees \red{(under the binormal assumption)}. Next, we evaluate the performance of \fairAUC\ with synthetic data generated using a systematic data generation procedure proposed by \cite{guyon2003design}. We consider three natural benchmarks:  \minBias, which aims to directly minimize the bias in AUC across groups each round, \maxAUC, which ignores fairness to maximize the overall AUC weighted by group size, and a random feature acquisition approach, which may reflect the acquisition of easy-to-acquire data. Each \algo\ can be used with or without access to the protected class attribute during classification. This is important since certain laws prohibit the use of protected attributes.

  Compared to \maxAUC, \fairAUC\ achieves significantly greater levels of fairness (in terms of equalizing AUC), with fairly low tradeoffs in AUC. We characterize the accuracy-fairness tradeoff that is achievable using a weighted combination of fairness and AUC objectives, and find that \fairAUC\ obtains low levels of bias without significant sacrifice of overall AUC. Compared to \minBias, \fairAUC\ obtains far higher AUCs since \minBias\ fails to incentivize learning.

  We also evaluate the four procedures using the COMPAS dataset, a commonly used dataset in fairness studies, auxiliary data purchased from a data vendor, and the Diabetes dataset. We find similar to the synthetic data that fairAUC reduces bias compared to \maxAUC\ with a relatively low tradeoff in AUC. Relative to \minBias, \fairAUC\ achieves comparable levels of fairness but with far greater predictive accuracy.
  Finally, we confirm the robustness of our results using other data generating procedures and classifiers.

  \section{Related Literature}

  This paper touches on a few different streams of the fairness in algorithmic systems literature, which addresses questions around bias identification as well as bias reduction.

  \subsection{Sources of Bias}
  Researchers have documented a number of causes of bias \citep{barocas2016big} and have documented both human \citep{mejia2021transparency,benson2021potential} and algorithmic discrimination \citep{fu2020artificial}. It is critically important to understand the source of bias in order to provide guidance to firms and policymakers on how to address bias, since the recommended intervention would depend on the cause. For example, \cite{lambrecht2019algorithmic} find that advertising on Facebook with the objective of maximizing cost effectiveness inadvertently shows STEM career ads less frequently to women than men, and they report that the source of this bias is that the market bids up the advertising rates to reach women higher than that for men. Thus, in this case market forces are potentially the cause rather than an algorithm. Our study specifically considers that bias can arise due to the nature of data collected, rather than the algorithm. We focus on feature acquisition and its impact on classification performance for members of different protected groups.

  \subsection{Fairness Criterion}
  To quantify bias, a measure relevant to the problem must be used. Several fairness criteria have been proposed \citep{le2022survey}. In general, the various measures aim to achieve specific criteria, namely independence \citep{dwork2012fairness,kamiran2012preprocessing,feldman2015certifying}, sufficiency \citep{chouldechova2017fair}, and separation \citep{hardt2016equality,zafar2017fairness,kallus2019fairness}.
  It has been shown under mild assumptions that no measure of fairness can simultaneously achieve two of the three criteria \citep{kleinberg2016inherent,chouldechova2017fair,barocas2019fairness}. Therefore, the appropriate fairness criterion depends on the problem of interest (see \cref{appendix:fairness_measures} for a comparison of measures).
  Other measures distinguish between individual versus group fairness, and intertemporal ideas of fairness \citep{gupta2019individual}. We study group fairness and the criterion we focus on is separation, which recognizes that the protected attribute may be correlated with the target variable. For example, the base rates of loan repayment may differ among groups so a bank may be justified in having different lending rates for different groups \citep{barocas2019fairness}. The fairness measure we use is related to equalized odds, which achieves separation, in that it is also derived from the ROC curve. Our focus, however, is equalized AUCs, also known as accuracy equity in the literature. \red{As discussed earlier, AUC is not only a metric commonly used to measure classification performance in machine learning but also has many desirable qualities, such as being base rate invariant and threshold invariant.}

  \subsection{Bias Reduction Strategies}\label{section:lit_bias_reduction}
  Bias reduction strategies can occur prior to (pre-processing), during (in-processing), and after (post-processing) model training. Pre-processing strategies alter the feature space to be uncorrelated with the protected attribute \citep{kamiran2012preprocessing,zemel2013learning,feldman2015certifying,celis2020preprocessing,shimao2019strategic}. In-processing strategies directly incorporate the fairness constraint into the optimization problem \citep{dwork2012fairness,zafar2017fairness,woodworth2017learning,celis2019meta}. Using AUC as a fairness metric with in-processing has proven challenging and remains an open problem \citep{celis2019meta}. Post-processing strategies occur after classifier training and manipulate the classifier to be uncorrelated with the protected attribute \citep{hardt2016equality}. \cite{noriega2019active} demonstrate that post-processing strategies that rely on randomization to achieve equalized odds are inefficient and Pareto sub-optimal.
  Most proposed strategies assume the dataset to be fixed and take an algorithmic approach to reducing bias but practitioners have voiced a need for data collection guidance \citep{holstein2019improving}.
  We take a different approach by developing a \algo\ for additional feature acquisition, which occurs during the data collection stage. Our solution provides guidance on which additional features should be acquired to improve the AUC of the lower-AUC group and ultimately equalize AUCs across groups.

  \paragraph{Fair Feature Acquisition.} Our main contribution is to the feature acquisition with fairness considerations literature. This literature addresses \textit{which features} should be acquired and in some cases \textit{which individuals} \dms\ should acquire additional features for. For example, \cite{cai2020fair} develop an algorithm to jointly determine which individuals a firm should collect additional features for and then allocate resources to. \cite{noriega2019active}, \cite{bakker2021beyond}, and our work complement this stream of research by determining \textit{which} features to acquire to achieve various measures of fairness. While \cite{noriega2019active} and \cite{bakker2021beyond} require the full feature matrix to be known for all individuals in order to determine which feature to acquire, \fairAUC\ requires only a set of summary statistics. %

  \section{Preliminaries and Assumptions} \label{model}
  \subsection{Preliminaries}

  We consider a standard binary classification problem with two groups.
  The dataset consists of $N$ i.i.d. data points $(X_i,A_i,Y_i)_{i=1}^N$ sampled from a distribution $\mathcal{D}$.
  Here the input feature $X_i \in \mathbb{R}$, the group $A_i \in \{a,b\}$, and the class label $Y_i \in \{0,1\}$. Note that here we specify $X_i$ as a scalar ``score'' for notational simplicity, whereas our \fairAUC\ procedure accommodates general vectors $X_i \in \mathbb{R}^d$.\footnote{For example, with a logistic regression specified as $\Pr[Y=1|X] = \dfrac{\exp(X^\top\theta)}{1+\exp(X^\top\theta)}$, the score would be $S=X^\top\theta$, and new data that is above a score threshold would be classified as 1.}
  We discuss the general case in \cref{section:fair_auc_over_multiple_iterations}.
  Here, and subsequently, we drop the subscript from  $(X_i,A_i,Y_i)$ when we do not want to refer to a specific individual.
  Let $p_{g0}(x)\coloneqq \Pr_{(X,A,Y) \sim \mathcal{D}}[X=x|A=g,Y=0]$ denote the distribution of the input feature belonging to the negative class for each group.
  Similarly, let $p_{g1}(x)\coloneqq \Pr_{(X,A,Y) \sim \mathcal{D}}[X=x|A=g,Y=1]$ denote the distribution of the input feature belonging to the positive class for each group.

  For a data point $(X,A,Y),$ consider a  binary classifier $r$ that, for  a threshold $\tau \in \mathbb{R}$ is defined as:
  \begin{equation}
    r(X,A,Y)\coloneqq
    \begin{cases}
      1, & \text{if}\ X \geq \tau, \notag\\
      0, & \text{if}\ X < \tau.
    \end{cases}
  \end{equation}
  \noindent
  The true positive rate (TPR) measures the proportion of individuals in the positive class being correctly classified as positive.
  The false positive rate (FPR) measures the proportion of individuals in the negative class being incorrectly classified as positive.
  Thus, the TPR and FPR are bounded below by $0$ and bounded above by $1$. The TPR and FPR of group $g \in \{a,b\}$ can be written as functions of the threshold $\tau$:
  \begin{equation}\label{eq:TPR_general}
    \mathrm{TPR}_g(\tau) \coloneqq \int_{\tau }^{\infty}p_{g1}(x)dx \qquad \text{and} \qquad     \mathrm{FPR}_g(\tau) \coloneqq \int_{\tau}^{\infty}p_{g0}(x)dx.
  \end{equation}
  These two give rise to the ROC curve as follows: the TPR maps the threshold $\tau$ to the $y$-axis and the FPR maps $\tau$ to the $x$-axis.
  Formally, for a group $g$, ROC is defined as $\mathrm{ROC}_g(\tau)\coloneqq (\mathrm{FPR}_g(\tau), \mathrm{TPR}_g(\tau))$. 
  The area under the two-dimensional ROC curve (AUC) aggregates the information captured in the TPR and FPR and is defined for a group $g$ as follows:
  \begin{definition}[\bf Area under the ROC curve (AUC)]\label{AUC_def}
    \begin{equation}\label{eq:AUC_general}
      \mathrm{AUC}_g \coloneqq \int_{0}^{1}\mathrm{TPR}_g(\mathrm{FPR}_g^{-1}(x))dx.
    \end{equation}
  \end{definition}
  \noindent
  AUC ranges from $0$, which occurs when the classifier predicts the opposite of the class label, to $1$, which occurs when the classifier can perfectly classify the two classes. An AUC of $0.5$ means the classifier cannot distinguish between the two classes and is represented by the diagonal line on the ROC plane.\footnote{In reality, a classifier is unlikely to obtain an AUC less than $0.5$ since making the opposite prediction would improve the AUC. There are cases in which an AUC less than $0.5$ can occur. Consider a dataset comprised of two groups where one group makes up the vast majority of the data. Suppose the features of the majority group are negatively correlated with the same features of the minority group. If only a single set of weights is used in classification, the majority group will dominate the determination of the weights and the resulting AUC for the minority group will be worse than random.}
  {$\AUC$ depends on the specific scoring rule $r$. For now, we consider a general scoring rule and later consider specific scoring rules such as those derived from a linear classifier, a logistic regression classifier, or other generalized linear models.}

  We measure bias by comparing the AUCs obtained from the groups $g\in\{a,b\}$:
  \begin{definition}[\bf Bias]\label{bias_def}
    \begin{equation}
      \mathrm{Bias} \coloneqq 1 - \frac{\min_g(\mathrm{AUC}_g)}{\max_g(\mathrm{AUC}_g)}.
    \end{equation}
  \end{definition}
  \noindent
  Bias ranges from $0$ to $1$, with larger values representing greater inequality between groups.

  \subsection{Class-conditional Means, Variances, and the Binormal Assumption}
  The class-conditional means and variances of $X$ for each group $g$ are defined as
  $\mu_{gy} \coloneqq \mathbb{E}[X|A=g,Y=y]$
  and
  $\sigma^2_{gy} \coloneqq \mathrm{Var}[X|A=g,Y=y],$ respectively. The unconditional (class-independent) variance of $X$ for each group is $\mathrm{Var}[X|A=g]$.

  To obtain an analytical relationship between moments of the data and AUC, we assume that for each group the input feature follows a binormal distribution \citep{pesce2007reliable}, since the ROC  (and therefore its AUC) is known to be robust to departures from this assumption \citep{hanley1996binormal}.
  This binormal assumption produces four Gaussian distributions across the two classes and two groups: $\mathcal{N}(\mu_{a0},\sigma_{a0}^2)$, $\mathcal{N}(\mu_{a1},\sigma_{a1}^2)$, $\mathcal{N}(\mu_{b0},\sigma_{b0}^2)$, $\mathcal{N}(\mu_{b1},\sigma_{b1}^2)$, where $0$ and $1$ represent the classes, and $a$ and $b$ represent the groups. We assume the means of the positive classes are greater than the means of the negative classes for each group $(\mu_{a1} \geq \mu_{a0}, \;\mu_{b1} \geq \mu_{b0})$ and that the conditional variances are positive. Figure \ref{figure:intro_figures} (Left) displays a density plot of two binormal distributions for which the class-conditional variances within each group are equal but the class-conditional variances for group $a$ are smaller than those for group $b$. %

  \begin{figure*}
    \begin{minipage}{0.5\textwidth}
      \begin{center}
        \includegraphics[height=4cm]{./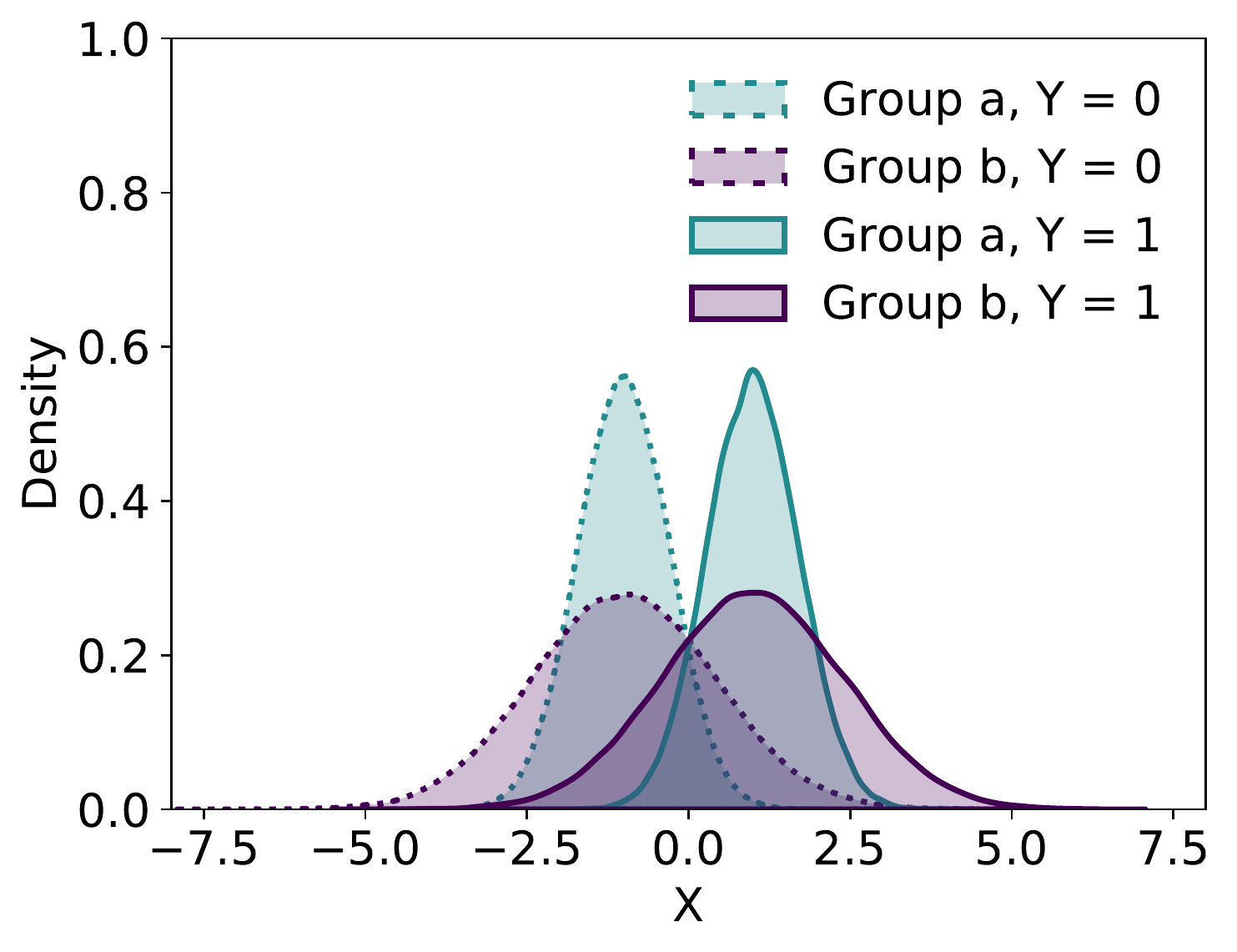}
      \end{center}
    \end{minipage}
    \begin{minipage}{0.5\textwidth}
      \begin{center}
        \includegraphics[height=4cm]{./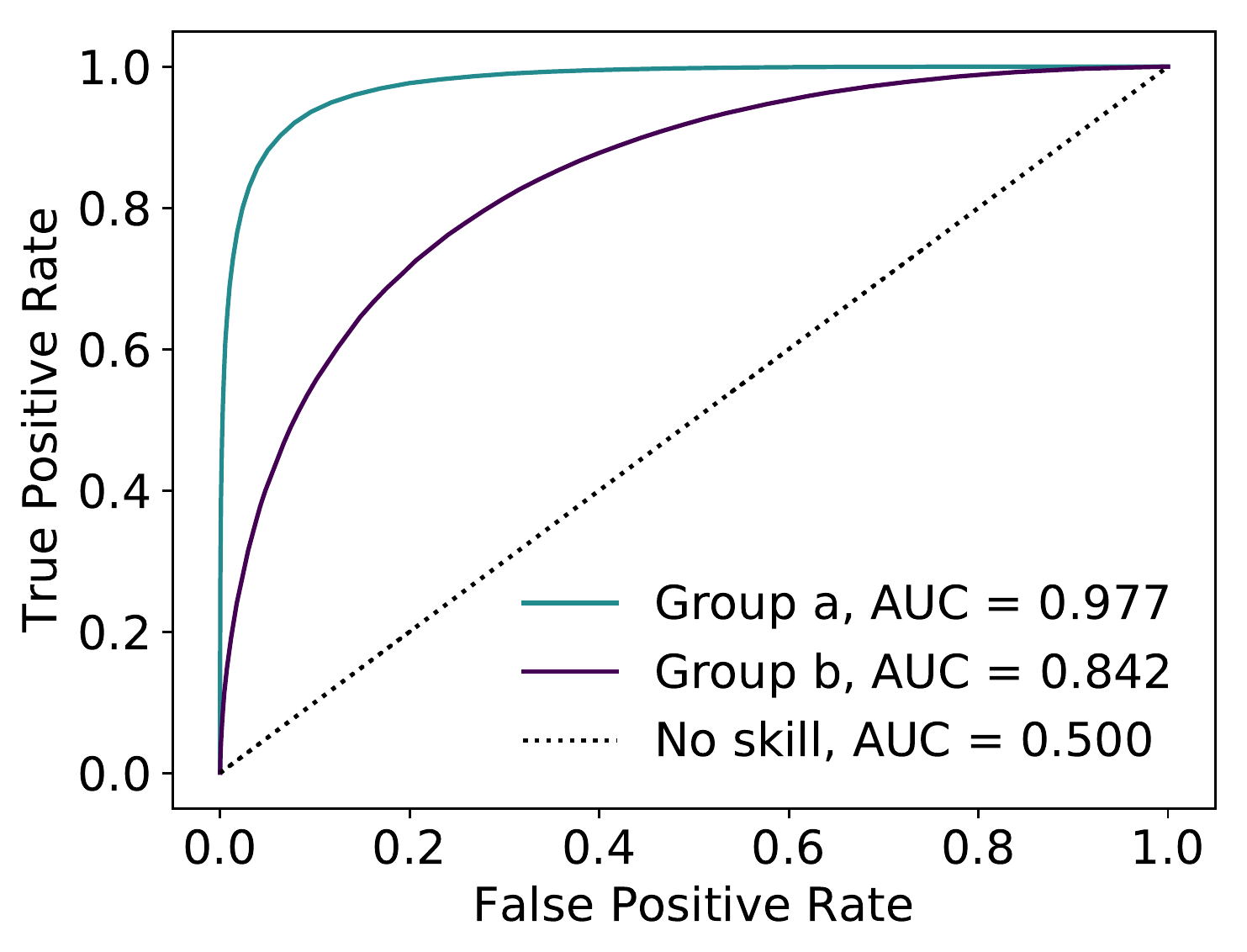}
      \end{center}
    \end{minipage}
    \caption{\footnotesize
    (Left) Binormal density plots where the class means of groups a and b are equal but the class-conditional variances of b are greater than those of a.
    (Right) ROC curves and AUC by group.}\label{figure:intro_figures}
  \end{figure*}

  Incorporating the binormal assumption, TPR and FPR from Equation \eqref{eq:TPR_general}  for group $g$ can be written as:
  \begin{align}
    \mathrm{TPR}_g(\tau) = 1 - \Phi \left(\frac{\tau-\mu_{g1}}{\sigma_{g1}} \right) \qquad \text{and} \qquad
    \mathrm{FPR}_g(\tau) = 1 - \Phi \left(\frac{\tau-\mu_{g0}}{\sigma_{g0}} \right).
  \end{align}
  Where $\Phi(\cdot)$ represents the standard normal cumulative distribution function.
  We can now express the AUC defined in Equation \eqref{eq:AUC_general} as a function of the class-conditional means and variances of each group:
  \begin{equation} \label{eq:AUC_binormal}
    \mathrm{AUC}_g = \Phi \left(\frac{\mu_{g1}-\mu_{g0}}{\sqrt{\sigma_{g0}^2+\sigma_{g1}^2}} \right).
  \end{equation}
  \noindent
  Figure \ref{figure:intro_figures} (Right) displays the ROC curves and their associated AUCs from the binormal distributions shown in Figure \ref{figure:intro_figures} (Left). The diagonal line represents random guessing.

  \section{Methodology}

  Many papers consider the unconditional distributions of the input feature for each group \citep{corbett2018measure,chen2018why,emelianov2020fair}. It may be expected that higher variance, flatter unconditional distributions generate higher AUCs since greater spread provides more information. Suppose $X$ takes just one value (is deterministic). Then the unconditional variance is zero and the classifier learns nothing from this data. Given this example, one may believe that a higher unconditional variance corresponds to better classification. However, this thought experiment conflates the separation of means with variance. Analyzing the unconditional distribution mixes together base rates, class-conditional means, and class-conditional variances, obscuring the relationship between the data and AUC. \cref{section:uncond_var} formalizes the previous ideas.

  Here we highlight which features of the data distributions pin down AUC. Equation \eqref{eq:AUC_binormal} informs us that increasing the difference in class means and decreasing the class-conditional variances increase the AUC. Figure \ref{figure:intro_figures} (Left) and (Right) visualize an example in which the differences in class means are equal between the two groups but the conditional variances of group $b$ are larger than those of group $a$. Because of the larger conditional variances, the AUC of $b$ is lower than the AUC of $a$. Finally, the base rate $\pi_g$ does not appear in the AUC formula, reinforcing the idea that differences in base rates between the two groups will not contribute to bias with respect to AUC. %

  \subsection{Strategies for Additional Feature Acquisition}
  We consider strategies for selecting additional costly features for classification with fairness considerations. One natural strategy would be to select features that minimize the bias across groups.
  We consider a greedy strategy, \minBias, that chooses the feature minimizing the difference between AUCs across groups  in each round. However, such a strategy fails to incentivize additional learning; when features that are relatively uninformative for both groups minimize bias, it will select those features rather than features that improve AUC.

  Another strategy is to select features that improve the AUC of the disadvantaged group (group with lower AUC). We develop a \algo\ which does exactly this, noting that with a greedy feature acquisition procedure over multiple rounds, the group that is (dis)advantaged may change across rounds. Note that such a \algo\ does not guarantee a reduction in bias because it is possible that the additional feature is even more predictive for the higher-AUC group or dramatically increases the AUC of the disadvantaged group much more than for the advantaged group. %

  Thus, the natural question is: given the data we have, what additional features(s) should we acquire to \textit{most improve the AUC of the currently disadvantaged group}?
  Because AUC is calculated using a scalar score, we must select a dimensionality reduction method which aggregates our existing data with the additional feature(s) into one dimension.
  Here we use Fisher's linear discriminant (FLD) as our dimensionality reduction method because it generates the linear projection which maximizes AUC \citep{su1993linear}. \red{Later, we will broaden the set of classification models to extend beyond FLD to the class of Generalized Linear Models (GLMs).}

  \subsubsection{Fisher's Linear Discriminant (FLD)}\label{section:FLD}
  We make a few simplifying assumptions.
  First, we assume only {\em one} additional feature can be acquired in each round and develop a greedy strategy. We relax this assumption in \cref{appendix:multiple_variables}.
  Second, we assume the existing input feature the \dm\ has can be represented in one dimension (i.e., the output of a score function).
  Third, we assume that a binormal distribution provides a reasonable approximation for the features.
  Using FLD, we determine the benefit of a new feature to the AUC of the disadvantaged group under consideration (denoted $g^\star$).

  Let $n$ denote the number of individuals in the disadvantaged group $g^\star$.
  Consider a dataset $(X_i,A_i,Y_i)_{i=1}^n$ where we have already collected one (non-protected) feature $X_i \in \mathbb{R}$ for each individual $i$.
  Since our focus is only on a single group, we drop the group subscript notation used in the previous sections.
  Moreover, assume we have access to an auxiliary feature $(Z_i)_{i=1}^n$, which we could choose to acquire.

  We seek to determine the benefit to $g^\star$ of acquiring $(Z_i)_{i=1}^n$.
  For each outcome class $y\in\{0,1\}$, the class-conditional mean vector and covariance matrix of $(X_i,Z_i)_{i=1}^n$ are:
  \begin{align*}
    \bm{\mu_y} \coloneqq
    \begin{bmatrix}
      \mu_{X,y}\\
      \mu_{Z,y}
      \end{bmatrix} =
      \begin{bmatrix}
        \mathbb{E}[X|Y=y]\\
        \mathbb{E}[Z|Y=y]
      \end{bmatrix}
      \quad\text{and}\quad
      \bm{\Sigma_y} \coloneqq
      \begin{bmatrix}
        \sigma_{X,y}^2 & \rho_y \sigma_{X,y} \sigma_{Z,y} \\
        \rho_y \sigma_{X,y} \sigma_{Z,y} & \sigma_{Z,y}^2
        \end{bmatrix}.
      \end{align*}
      \noindent
      Where $\sigma_{X,y}^2=\mathrm{Var}[X|Y=y]$, $\sigma_{Z,y}^2=\mathrm{Var}[Z|Y=y]$, and $\rho_y=\frac{\mathrm{Cov}[X,Z|Y=y]}{\sigma_{X,y}\sigma_{Z,y}}$. Note that $\rho_y$ represents the class-conditional correlation of $X$ and $Z$ and not the unconditional correlation.

      Let $\bm{w}$ represent a potential projection direction that projects $(X_i,Z_i)$ for each individual $i \in [n]$ to $\mathbb{R}$, combining the two features into a single value.
      Then the projected class mean $\Tilde{\mu}_y \in \mathbb{R}$ is defined as $\Tilde{\mu}_y \coloneqq \bm{w}^\top\bm{\mu_y}$ and the projected class-conditional variance $\Tilde{\sigma}^2_y \in \mathbb{R}$ is defined as $\Tilde{\sigma}^2_y \coloneqq \bm{w}^\top \bm{\Sigma_y} \bm{w}$.
      \noindent
      The FLD objective function is known to maximize AUC \citep{su1993linear}. In terms of the projected means and variances, the FLD objective is:
      $   J(\bm{w}) \coloneqq \frac{(\Tilde{\mu}_1-\Tilde{\mu}_0)^2}{\Tilde{\sigma}_0^2+\Tilde{\sigma}_1^2}.
      $
      \noindent In terms of the pre-projection mean vectors and covariance matrices, it is:
      $$J(\bm{w}) = \frac{\bm{w}^\top(\bm{\mu_1}-\bm{\mu_0})(\bm{\mu_1}-\bm{\mu_0})^\top \bm{w}}{\bm{w}^\top(\bm{\Sigma_0}+\bm{\Sigma_1}) \bm{w}}.$$
      \noindent The projection direction $\bm{w}$ which maximizes $J(\bm{w})$ can be found by solving a generalized eigenvalue problem \citep{duda2006pattern}.
      The optimal linear projection direction (when $\bm{\Sigma_0}+\bm{\Sigma_1}$ is invertible) is given by:
      \begin{equation}\label{eq:fld_projection}
        \bm{w}^\star = (\bm{\Sigma_0}+\bm{\Sigma_1})^{-1} (\bm{\mu_1}-\bm{\mu_0}).
      \end{equation}
      \noindent
      Plugging $\Tilde{\mu}_y = \bm{w}^{\star\top} \bm{\mu_y}$ and $\Tilde{\sigma}^2_y = \bm{w}^{\star \top} \bm{\Sigma_y} \bm{w}^\star$ into Equation \eqref{eq:AUC_binormal} yields the AUC of the optimal linear combination of input features $(X,Z)$:
      \begin{equation}\label{eq:fld_auc}
        \mathrm{AUC}(X,Z) = \Phi \left(\sqrt{(\bm{\mu_{1}}-\bm{\mu_{0}})^\top (\bm{\Sigma_0}+\bm{\Sigma_1})^{-1}(\bm{\mu_{1}}-\bm{\mu_{0}})} \right).
      \end{equation}
      \noindent
      The benefit to the disadvantaged group of acquiring $Z$ is the difference between this new value of AUC and the previous value of AUC that used only $X$.

      {More generally, one can consider other scoring rules such as those derived from Generalized Linear Models (GLMs).
      A GLM is specified by a vector of weights $\theta\in \R^2$ and an increasing and invertible link function $\psi\colon \R\to \R$.
      It generates scores $\psi^{-1}\inparen{\theta_1 X+\theta_2 Z},$ and given a threshold $\tau$, it labels a sample $X$ as 1 if and only if $$\psi^{-1}(\theta_1 X+\theta_2 Z)>\tau.$$
      For instance, if $\psi$ is the logit function, then the GLM is a logistic regression classifier which predicts 1 if and only if $$\frac{\exp\inparen{\theta_1 X+\theta_2 Z}}{1+\exp\inparen{\theta_1 X+\theta_2 Z}}>\tau.$$
      It turns out that the ROC curve of the GLM $G$ specified by $\theta$ and $\psi$ is the same as \red{that for} the linear classifier $L$ which uses the linear projection $\theta_1 X+\theta_2 Z$ and, hence, the AUC of $G$ and $L$ is the same.\footnote{This is because $G$ at threshold $\tau$ makes the same prediction as $L$ at threshold $\psi(\tau)$ --  see \cref{claim:auc_indep_of_link}.}
      Thus, \cref{eq:fld_auc} is also the highest AUC achievable by a GLM that uses $X$ and $Z$ to predict $Y$.}

      So far, $Z$ has represented an arbitrary feature available for acquisition.
      When given a choice over many possible features, which feature $Z$ maximizes Equation \eqref{eq:fld_auc} for a given $X$?

      \subsection{fairAUC Procedure}\label{section:fair_auc_over_multiple_iterations}
      We now present our \fairAUC\ procedure, for which Equation \eqref{eq:fld_auc} serves as the backbone.
      It relies on knowing only a few summary statistics of the data. It can be used with data vendors who provide costly features. Alternatively, \dms\ may collect a small sample of additional features and estimate the benefit  of each using this strategy prior to collecting the features for all individuals.

      It begins by taking in the data the \dm\ has for $N$ individuals $(\bm{X}_i,A_i,Y_i)_{i=1}^N$, the \dm's scoring algorithm $r$, and the level of acceptable bias $\epsilon$.
      As before, we drop the subscript from  $(\bm{X}_i,A_i,Y_i)$ when we do not refer to a specific individual.
      The input data $\bm{X} \in \mathbb{R}^d$, the group $A \in \{a,b\}$, and the class $Y \in \{0,1\}$.

      The \dm\ aggregates the features in $\bm{X}$ into a single score $S \in \mathbb{R}$ using a fixed scoring algorithm $r$.
      Our framework allows for any scoring algorithm $r$. When the data is binormal, FLD generates the optimal AUC. Otherwise, FLD is meant to serve as a heuristic and one can replace it with a different scoring algorithm $r$. We show in \cref{section:nonlinear_classification} that different classifiers, including nonlinear ones, end up acquiring nearly all the same features, \red{although at times in different orders}.
      Furthermore, $r$ may or may not use the protected attribute $A$ depending on the context.
      For instance, FICO is prohibited from using characteristics like race, gender, and marital status in producing its credit score.
      We refer to using $A$ as using separate classifiers for each group and not using $A$ as using only a single classifier for both groups.

      In each round $t$ of the \fairAUC\ \algo, there is a bias identification step, an algorithmic step, and a feature acquisition step.
      In the bias identification step, we first calculate the AUC for each of the groups from the scores $S$.
      The bias of the model is calculated from the AUCs of the two groups.
      If the bias is smaller than a given tolerance level $\epsilon$, the \dm\ does not need to take any intervention to reduce bias.
      However, if the bias is larger than $\epsilon$, the \dm\ acquires one additional feature per round.
      The group with lower AUC is referred to as the currently disadvantaged group, which can vary over the rounds of feature acquisition, and is denoted $g^\star$.

      In the algorithmic step, \fairAUC\ aims to acquire the feature that most increases the AUC of $g^\star$ using the FLD heuristic explained in the previous section (i.e., Equation \eqref{eq:fld_auc}).
      As previously discussed, FLD generates a linear combination of features which maximizes AUC and requires only summary statistics to calculate.
      Let $(\bm{Z_i})_{i=1}^N$ where $\bm{Z_i} \in \mathbb{R}^{d'}$ represent the auxiliary features available for acquisition.
      Let $m = d + d'$ capture the total number of features that exist in the data the \dm\ owns and in the auxiliary data available for acquisition.
      Let $\{1,\ldots,d'\}$ denote the indices of all auxiliary features that are available for acquisition.
      In any given round $t$, we use $Q(t) \subset \{1,\ldots,d'\}$ to denote the set of auxiliary features acquired so far.
      Initially, we have that $Q(0)=\emptyset$.
      Hence, the set of features available is $[d']\setminus Q$.
      We assume that the cost of each feature is the same and that a feature is acquired for all $N$ individuals.

      For group $g^\star$, the \dm\ obtains the class-conditional means of each of the features available for acquisition in $[d']\setminus Q$ as well as the class-conditional covariance matrices of each of the available features and the score $S$, requiring the \dm\ to share $S$ with the data vendor each round and $Y$ the first round.\footnote{Note that in the case of working with a data vendor the \fairAUC\ \algo\ assumes that the data vendor also knows $A$. In practice, this information may need to be shared. A benefit of the \fairAUC\ \algo\ is that it does not require the \dm\ to share $X$ with the data vendor, only $S$. It also does not require the data vendor to share more than a few summary statistics.}  We relax this requirement in \cref{appendix:zero_correlation}, where we assume the score $S$ to be uncorrelated with the auxiliary features, eliminating the need to share $S$.
      The conditional means and covariances inform how valuable each of the additional features is to the \dm\ in terms of increasing the AUC of $g^\star$.
      The \dm\ acquires the feature which maximizes the AUC of group $g^\star$.

      In the feature acquisition step, $Q(t)$ is updated to include this new feature. The feature acquired is concatenated with the existing dataset and becomes the input to the next round.
      Procedure 1 formalizes each iteration of \fairAUC.
      To simplify the notation, let $\hat{\bm{X}}$ denote the collection of $(\bm{X}_i)_{i=1}^N$, and similarly $\hat{\bm{Z}}$ the collection of $(\bm{Z}_i)_{i=1}^N$.
      $\bm{Z}_i$ denotes a row vector of features for each individual, $\bm{Z}^j$ denotes a column feature vector across all individuals, and $\hat{\bm{Z}}$ denotes the collection of features for all individuals.

      \begin{algorithm}%
        \SetAlgorithmName{Procedure 1}{procedure}{}
        \caption{{\fairAUC} ($t$-th iteration)\label{Procedure}}
        \footnotesize
        \KwIn{data owned $(\bm{X_i},A_i,Y_i)_{i=1}^N$, scoring algorithm $r$, bias threshold $\epsilon$, set of acquired features $Q(t)$, data available for acquisition $(\bm{Z_i})_{i=1}^N$;}
        \KwOut{$Q(t+1)$;}
        \eIf{$\bm{A}$ cannot be used}{
        $\bm{S} \coloneqq r(\hat{\bm{X}},\bm{Y})$\;
        }{
        $\bm{S} \coloneqq r(\hat{\bm{X}},\bm{A},\bm{Y})$\;
        }
        \For {group $g \in \{a,b\}$}{
        compute $\mathrm{AUC}_g(S)$ (Definition \ref{AUC_def}) \;
        }
        $g^\star  \coloneqq \arg\min_g\mathrm{AUC}_g(S)$ (Disadvantaged group) \;
        $\mathrm{Bias} \coloneqq 1 - \frac{\min_g(\mathrm{AUC}_g(S))}{\max_g(\mathrm{AUC}_g(S))}$ (Definition \ref{bias_def}) \;
        \eIf{$\mathrm{Bias}>\epsilon$}{
        \For {feature $\bm{Z}^j \in \hat{\bm{Z}},j \notin Q(t)$}{
        for feature $\bm{Z}^j$, group $g^\star$, and score $\bm{S}$, obtain class-conditional means, $\bm{\mu_{0}},\bm{\mu_{1}}$, and covariance matrices, $\bm{\Sigma_{0}},\bm{\Sigma_{1}}$ (Summary Statistics by Group Subroutine)\;
        $h(\bm{S},\bm{Z}^j) \coloneqq \Phi \left(\sqrt{(\bm{\mu_{1}}-\bm{\mu_{0}})^\top (\bm{\Sigma_0}+\bm{\Sigma_1})^{-1}(\bm{\mu_{1}}-\bm{\mu_{0}})} \right)$\;
        }
        $j^\star \coloneqq \arg\max_j h(\bm{S},\bm{Z}^j)$\;
        acquire feature $\bm{Z}^{j^\star}$\;
        return $Q(t+1) \coloneqq Q(t) \cup \{j^\star\}$\;
        }{
        no intervention;
        }
      \end{algorithm}

      \begin{algorithm}%
        \SetAlgorithmName{Subroutine}{subroutine}{list of subroutines}
        \caption{{Summary Statistics by Group} \label{Subroutine}}
        \footnotesize
        \KwIn{feature available for acquisition $\bm{Z}$, group $g$, existing score $\bm{S}$;}
        \KwOut{class-conditional mean vectors $\bm{\mu_{0}},\bm{\mu_{1}}$, class-conditional covariance matrices $\bm{\Sigma_{0}},\bm{\Sigma_{1}}$;}
        \For {class $y \in \{0,1\}$}{
        $n \coloneqq n_{A=g,Y=y}$ \;
        $\bm{\mu_y} \coloneqq
        \begin{bmatrix}
          \Bar{S}_y\\
          \Bar{Z}_y
          \end{bmatrix} =
          \begin{bmatrix}
            \frac{1}{n}\sum_{i:
            \begin{smallmatrix}
              A_i=g\\
              Y_i=y
              \end{smallmatrix}}S_i \\
              \frac{1}{n}\sum_{i:
              \begin{smallmatrix}
                A_i=g\\
                Y_i=y
                \end{smallmatrix}}Z_i
                \end{bmatrix}$ \;
                $\bm{\Sigma_y} \coloneqq
                \begin{bmatrix}
                  \sigma_{S,y}^2 & \rho_y \sigma_{S,y} \sigma_{Z,y} \\
                  \rho_y \sigma_{S,y} \sigma_{Z,y} & \sigma_{Z,y}^2
                  \end{bmatrix}$ \\
                  \noindent where
                  $\sigma_{S,y}^2=
                  \frac{1}{n-1} \sum_{i:
                  \begin{smallmatrix}
                    A_i=g\\
                    Y_i=y
                    \end{smallmatrix}}(S_i-\Bar{S}_y)^2$,
                    $\sigma_{Z,y}^2=
                    \frac{1}{n-1}  \sum_{i:
                    \begin{smallmatrix}
                      A_i=g\\
                      Y_i=y
                      \end{smallmatrix}}(Z_i-\Bar{Z}_y)^2$,
                      and $\rho_y=
                      \frac{1}{(n-1)\sigma_{s,y}\sigma_{Z,y}} \sum_{i:
                      \begin{smallmatrix}
                        A_i=g\\
                        Y_i=y
                        \end{smallmatrix}}(S_i-\Bar{S}_y)(Z_i-\Bar{Z}_y)$\;
                        }
                        return $\bm{\mu_{0}},\bm{\mu_{1}},\bm{\Sigma_{0}},\bm{\Sigma_{1}}$
                      \end{algorithm}

                      \subsection{Theoretical Guarantees on Improvement of AUC by fairAUC}\label{section:theoretical_guarantees}
                      In this section, we present our theoretical result on the fairAUC procedure in the binormal framework.

                      {Different classifiers that use the same set of features can have different AUCs.
                      We define the AUC of group $g$ at the start of iteration $t$, as the highest AUC that a GLM can achieve for samples in group $g$ using features $Q(t)$.
                      For instance, if the link function $\psi$ is the logit function (corresponding to logistic regression), then AUC of group $g$ is equal to the highest AUC achieved by a logistic regression classifier using $Q(t)$.
                      The specific choice of $\psi$ does not affect the definition because it turns out that changing the link function $\psi$ does not change the ROC curve of a GLM and, hence, does not change its AUC (see \cref{claim:auc_indep_of_link}).}

                      {At a high level, our result uses the fact that if $S$ is an FLD-based score on the acquired features $Q(t)$,
                      then the AUC achieved by scores $S$ can be computed using \cref{eq:AUC_binormal} and it is equal to the AUC of group $g$ (as defined above).
                      To prove our result, we provide fairness guarantees for \fairAUC{} that uses FLD-based scores $S$.}
                      We begin by analyzing the improvement in AUC of the disadvantaged group $g^\star$.
                      In each iteration $t\in \N$ of \fairAUC{}, where the AUC for $g^\star$ is bounded away from 1, and there is at least one auxiliary feature present which has ``low'' class-conditional covariances with the current scores $S$  for $g^\star$ and has ``bounded'' class-conditional variances and means for $g^\star$,
                      \fairAUC{} is guaranteed to improve the AUC of $g^\star$ by at least a constant in iteration $t$.

                      \begin{proposition}{\bf (Theoretical guarantee on fairAUC; informal statement. See Theorem \ref{thm:main_result} for formal version).}\label{thm:main_result_main_body}
                        In the binormal framework, if the summary statistics subroutine outputs the unbiased means and covariances of the queried features
                        {and $S$ is the FLD-based score,} then
                        we can provably guarantee that in each iteration of the fairAUC procedure, the AUC value of the disadvantaged group $g^\star$ increases by at least
                        $$\max_{\ell}\frac{1}{4} \cdot \gamma^{\sfrac{3}{2}} \cdot \beta^2_\ell  \cdot (1-\delta_\ell)^2.$$
                        Here $\ell$ runs over unacquired features, $\gamma$ is the distance of the current AUC value of $g^\star$ from $1$, $\beta_\ell$ is the difference in the normalized class-conditional means of the $\ell$th unacquired feature on $g^\star$ (Equation~\eqref{eq:cond_on_aux_1}), and $\delta_\ell$ is the absolute value of the normalized class-conditional covariance between the {FLD-based} score $S$ and the $\ell$th unacquired feature on $g^\star$ (Equation~\eqref{eq:cond_on_aux_2}).
                      \end{proposition}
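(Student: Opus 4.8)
We build on the reduction already set up in \cref{section:theoretical_guarantees}: by \cref{claim:auc_indep_of_link} the AUC of a GLM does not depend on its link function, and combined with the fact that FLD produces the AUC-maximizing linear projection, the AUC of group $g^\star$ on its current feature set equals $\Phi(d_S)$ where, by \cref{eq:AUC_binormal}, $d_S = (\mu_{S,1}-\mu_{S,0})/\sqrt{\sigma_{S,0}^2+\sigma_{S,1}^2}$ for the current FLD score $S$ restricted to $g^\star$; equivalently $d_S = \Phi^{-1}(1-\gamma)$ with $\gamma = 1-\mathrm{AUC}_{g^\star}$. Since the summary-statistics subroutine is assumed to return the exact (population) class-conditional moments, there is no estimation error to control. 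For a candidate unacquired feature $Z_\ell$, \cref{eq:fld_auc} gives the AUC of the FLD combination of $(S,Z_\ell)$ as $\Phi(d_{S,Z_\ell})$ with $d_{S,Z_\ell}^2 = (\bm{\mu_1}-\bm{\mu_0})^\top(\bm{\Sigma_0}+\bm{\Sigma_1})^{-1}(\bm{\mu_1}-\bm{\mu_0})$ for the $2\times2$ moments of $(S,Z_\ell)$ on $g^\star$; a Schur-complement / Cauchy--Schwarz argument shows this never exceeds the AUC of the retrained FLD classifier on the enlarged feature set, so it suffices to bound $\Phi(d_{S,Z_\ell})-\Phi(d_S)$ from below for each $\ell$ and then take the maximum, since \fairAUC{} acquires the feature maximizing $d_{S,Z_\ell}$.

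\emph{Gain in discriminability.} Fix $Z=Z_\ell$ and write $\Delta_S \coloneqq \mu_{S,1}-\mu_{S,0}$, $\Delta_Z \coloneqq \mu_{Z,1}-\mu_{Z,0}$, $v_S \coloneqq \sigma_{S,0}^2+\sigma_{S,1}^2$, $v_Z \coloneqq \sigma_{Z,0}^2+\sigma_{Z,1}^2$, and $c \coloneqq \mathrm{Cov}[S,Z\mid Y=0]+\mathrm{Cov}[S,Z\mid Y=1]$, so that $\bm{\Sigma_0}+\bm{\Sigma_1}=\left(\begin{smallmatrix} v_S & c\\ c & v_Z\end{smallmatrix}\right)$, which is positive definite and hence satisfies $c^2 < v_S v_Z$. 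Completing the square (equivalently, a Schur complement) yields
\[
  d_{S,Z}^2 \;=\; \frac{\Delta_S^2}{v_S} \;+\; \frac{\big(\Delta_Z-(c/v_S)\Delta_S\big)^2}{v_Z - c^2/v_S} \;=\; d_S^2 + \frac{\big(\Delta_Z-(c/v_S)\Delta_S\big)^2}{v_Z - c^2/v_S},
\]
which also re-derives monotonicity ($d_{S,Z}^2 \ge d_S^2$: acquiring a feature never lowers the FLD AUC). By scale-invariance of the FLD objective in the $Z$-coordinate we may normalize $v_Z = 1$; then $\beta \coloneqq |\Delta_Z|$ is the normalized class-mean gap of the feature on $g^\star$ and $\delta \coloneqq |c|/\sqrt{v_S}\in[0,1)$ is the normalized class-conditional covariance of $S$ with the feature on $g^\star$ --- these are the $\beta_\ell,\delta_\ell$ appearing in the formal \cref{thm:main_result}. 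Since $|(c/v_S)\Delta_S| = (|c|/\sqrt{v_S})(\Delta_S/\sqrt{v_S}) = \delta\,d_S$, applying the triangle inequality to the numerator and $v_Z - c^2/v_S = 1-\delta^2 \le 1$ to the denominator, the low-covariance and boundedness hypotheses of \cref{thm:main_result} convert this into a bound of the form $d_{S,Z_\ell}^2 - d_S^2 \ge \beta_\ell^2(1-\delta_\ell)^2 \eqqcolon K_\ell$.

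\emph{From discriminability to AUC.} From $d_{S,Z_\ell}\ge\sqrt{d_S^2+K_\ell}$ and monotonicity of the standard normal density $\phi=\Phi'$ on $[0,\infty)$,
\[
  \mathrm{AUC}(S,Z_\ell)-\mathrm{AUC}(S) \;\ge\; \Phi\!\big(\sqrt{d_S^2+K_\ell}\big) - \Phi(d_S) \;=\; \int_{d_S}^{\sqrt{d_S^2+K_\ell}} \phi(t)\,dt \;\ge\; \phi\!\big(\sqrt{d_S^2+K_\ell}\big)\cdot\frac{K_\ell}{\sqrt{d_S^2+K_\ell}+d_S}.
\]
Because $K_\ell$ is bounded under the hypotheses, $\sqrt{d_S^2+K_\ell}\le d_S + O(1)$, so $\phi(\sqrt{d_S^2+K_\ell})\ge c_0\,\phi(d_S)$ for an absolute $c_0>0$; and the Gaussian tail estimate $\gamma=\Phi(-d_S)\le\min\{\tfrac12,\ \phi(d_S)/\max(d_S,1)\}$ lets us lower-bound $\phi(d_S)/(\sqrt{d_S^2+K_\ell}+d_S)$ by a constant multiple of $\gamma^{3/2}$, uniformly in $d_S\ge0$. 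Collecting constants gives $\mathrm{AUC}(S,Z_\ell)-\mathrm{AUC}(S)\ge\tfrac14\gamma^{3/2}\beta_\ell^2(1-\delta_\ell)^2$; taking the maximum over $\ell$ --- precisely the feature choice made in the algorithmic step of \fairAUC{} --- yields the claimed bound, with the exact hypotheses and constants as in \cref{thm:main_result}.

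\emph{Main obstacle.} The first two steps are the stated reduction plus a $2\times2$ linear-algebra identity and bookkeeping with the definitions of $\beta_\ell$ and $\delta_\ell$. The delicate part is the last step: establishing $\phi(d_S)/\big(\sqrt{d_S^2+K}+d_S\big)\gtrsim\gamma^{3/2}$ \emph{uniformly} over $d_S\in[0,\infty)$, i.e., simultaneously for $d_S\approx0$ (where $\gamma\approx\tfrac12$ and the AUC gain is governed by $\sqrt{K}$-type behavior rather than by $\gamma$) and in the tail regime $d_S\to\infty$ (where $\gamma\to0$, $d_S\asymp\sqrt{2\ln(1/\gamma)}$, and one needs sharp Mills-ratio asymptotics). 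Obtaining a single clean exponent $\tfrac32$ and constant $\tfrac14$ valid across this entire range --- rather than a regime-dependent bound --- is where the care concentrates, and it is also why the hypothesis bounding $\beta_\ell$ (so that $K$ stays bounded) cannot be removed.
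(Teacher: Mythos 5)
Your architecture is the same as the paper's: reduce to FLD scores via link-function invariance (\cref{claim:auc_indep_of_link}), use the projection inequality $\AUC(X,Z)\ge \AUC(S,Z)$ (\cref{fact:information_inequality}), expand the $2\times 2$ FLD AUC by a Schur complement to get an additive gain inside $\Phi(\sqrt{\cdot})$ (\cref{fact:lower_bound_sz}), lower-bound that gain by $\beta_\ell^2(1-\delta_\ell)^2$, convert it to an AUC increment carrying the $\gamma^{3/2}$ factor, and take the maximum over $\ell$ because \fairAUC{} selects the argmax. However, two steps are not actually carried out, and one of them is wrong as written. Your $\delta \coloneqq |c|/\sqrt{v_S}$ is not the paper's $\delta_\ell$: Equation~\eqref{eq:cond_on_aux_2} normalizes the summed class-conditional covariance by $\Delta v_\ell$ (the class-mean gap of $Z^\ell$), not by $\sqrt{v_S}$, and with your normalization the displayed identity does not yield $d_{S,Z}^2-d_S^2\ge \beta_\ell^2(1-\delta_\ell)^2$ (take $\Delta_Z=c$ with $v_S$ large: the true gain is $0$ while your bound is positive); ``triangle inequality plus the hypotheses'' does not repair this. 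The clean route, which the paper takes, uses the FLD-specific identity $\Delta\mu_S=\Delta\mu^\top(\Sigma_0+\Sigma_1)^{-1}\Delta\mu=\sigma_{0S}^2+\sigma_{1S}^2$ (Equations~\eqref{eq:value_of_mus} and \eqref{eq:value_of_ss}), so the cross term in your numerator is exactly $\rho_{0\ell}+\rho_{1\ell}$ and the bound follows from $\abs{\Delta v_\ell-(\rho_{0\ell}+\rho_{1\ell})}\ge \Delta v_\ell(1-\delta_\ell)$, split into the paper's Cases A and B.

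The step you yourself flag as the crux---a bound of order $\gamma^{3/2}$ uniform in $d_S\ge 0$ with the explicit constant $\tfrac14$---is only asserted. In the paper this is exactly \cref{fact:upper_bound_alpha} and \cref{fact:lower_bound_normal}: from $\Phi(x)\ge 1-\tfrac12 e^{-x^2/2}$ one gets $d_S^2<-2\ln(2\gamma)$, and then the elementary inequalities $e^{-x/2}/\sqrt{x+y}\ge e^{-3x/2}/\sqrt{2/e+y}$ and $1-e^{-x/2}\ge x/(2+x)$, together with $\beta_\ell^2(1-\delta_\ell)^2\le 1$, give $\Phi(\sqrt{\alpha+\Delta_0})-\Phi(\sqrt{\alpha})\ge \tfrac{2}{\sqrt{\pi}}\gamma^{3/2}\Delta_0/\bigl(\sqrt{2/e+\Delta_0}\,(2+\Delta_0)\bigr)\ge \tfrac14\gamma^{3/2}\Delta_0$. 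Your Mills-ratio sketch is plausible and morally equivalent, but until this lemma is proved with constants valid across both the $d_S\approx 0$ and $d_S\to\infty$ regimes, the exponent $3/2$ and the constant $\tfrac14$ in the stated guarantee are not established; so as it stands the proposal reproduces the paper's skeleton but leaves its two quantitative lemmas unproven, with the intermediate $\beta,\delta$ bookkeeping needing the correction above.
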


                      \noindent The theorem implies that the improvement is more when $\gamma$ and $\beta_\ell$ are large and $\delta_\ell$ is close to 0 for at least one unacquired feature, and less when $\gamma$ is close to 0 or for all features either $\beta_\ell$ is close to 0 or $\delta_\ell$ is close to 1.
                      To see why the improvement value may depend on these quantities, note that:
                      \begin{enumerate}
                        \item If $\gamma$ is close to 0, then the AUC of the disadvantaged group is close to 1, which is its maximum value, and hence, cannot increase significantly. In contrast, when we have data with features that are not as informative, then the potential improvement $\gamma$ and actual improvement are higher.
                        \item Intuitively, the difference between class-conditional means ($\beta_\ell$) helps create separation between the two classes, and including features with high separation improves AUC. If $\beta_\ell$ is close to 0, then Equation~\eqref{eq:AUC_binormal} tells us that the classifier using the $\ell$th unacquired feature to predict the outcome has a low AUC, and so the $\ell$th unacquired feature is not a good predictor and does not increase the AUC significantly.
                        \item The normalized covariance provides a measure of dependence between the new feature to be acquired and the score that summarizes existing features. This is related to mutual information, and since FLD is a linear discriminant, covariance provides a characterization of the mutual information. When $\delta_\ell$ is close to 1, then the $\ell$th unacquired feature is highly correlated with the score derived from the existing features, and so does not add additional information.
                      \end{enumerate}

                      \noindent  There are a few points to note.
                      First, while a specific feature $\ell$ might create separation for the disadvantaged group, it might also create such separation for the advantaged group. Recall that \fairAUC{} by design only focuses on improving the performance of the disadvantaged group. However, we will prove in Theorem \ref{thm:result_for_advantaged_group} in the Appendix similar bounds on the increase in AUC for the advantaged group.

                      Second, note that it is possible for the AUC of the currently disadvantaged group to exceed that of the advantaged group due to this feature acquisition. If that happens, the definition of disadvantaged group changes for the next round.%

                      {Third, the lower bound in Proposition 1 applies when \fairAUC{} uses FLD as the scoring rule. However, once the features are selected, the data manager can retrain a GLM of their choice, say logistic regression.
                      If the lower bound in \cref{thm:main_result_main_body} is $\Delta(t)$ at the $t$-th iteration, then the AUC of the logistic regression classifier trained by the data manager at the $t$-th iteration is at least addtiviely $\Delta(t)$ larger than the AUC of the logistic regression classifier from the previous iteration.\footnote{The same also holds for GLMs with link functions $\psi$ other than logit. This is because the link function does not affect the AUC of a GLM classifier – see \cref{claim:auc_indep_of_link}.}}

                      \subsection{fairAUC with Bias Guarantees: noisy fairAUC}\label{section:noisy_fairAUC_main}
                      The \fairAUC\ \algo\ can increase bias if the acquired feature is even more informative for the advantaged group or if the AUC of the disadvantaged group greatly overshoots the AUC of the advantaged group after the acquisition of a new feature.
                      More formally, suppose we have data $\mathbf{X}$ and we acquire feature $\mathbf{Z}$ so that our new data is $\mathbf{X}' \coloneqq (\mathbf{X},\mathbf{Z})$.
                      We can then have $\text{Bias}(\mathbf{X}')>\text{Bias}(\mathbf{X})$, where $\text{Bias}(\mathbf{X})$ and $\text{Bias}(\mathbf{X'})$ are the values of bias obtained using features $\mathbf{X}$ and $\mathbf{X'}$ respectively.
                      If our aim is to always decrease bias, one strategy is to alter \fairAUC\ by adding a noisy version of feature $\textbf{Z}$ instead. The noisy version of $\mathbf{Z}$ introduces noise to the group that would have higher AUC after adding $\textbf{Z}$. Specifically, for this group, we use $$\textbf{Z}' \coloneqq \lambda \textbf{Z} + (1-\lambda) N(0,1)$$ where $0\leq \lambda\leq 1$ indicates how much weight to place on $\textbf{Z}$.

                      The intuition for noisy fairAUC comes from the following idea: When a new feature is added, if we add pure noise to the feature (say a random normal variable of zero mean and some variance), then the AUC of the classifier with the data including the new feature is monotonically decreasing in the variance of the noise added. The intuition is more readily apparent if we consider an extreme case: consider adding pure noise as the new feature to the advantaged group $a$ (so $\lambda_a=0$), and therefore completely ignoring the newly acquired feature. In that case, the AUC of the advantaged group would remain the same. In contrast, for the disadvantaged group $b$, for any $0<\lambda_b<1$, the AUC would increase by the addition of the new feature. Given this disparate impact on the AUCs across the groups, bias would decrease during this round of feature acquisition.

                      More generally, we can apply this intuition to the less extreme case of adding noise to the new feature only for the advantaged group. By choosing an appropriate weight to place on the noise,
                      we obtain a provable guarantee that in noisy fairAUC, the bias between the two groups always (weakly) decreases during each round of feature acquisition (see \cref{section:noisy_auc_proof}). Moreover, the noisy fairAUC algorithm retains the desirable property of the original fairAUC algorithm that the AUC of both groups is also guaranteed to (weakly) increase in each round as features are acquired.
                      Thus, this approach of adding noise addresses the concern that bias could potentially increase in fairAUC. Note that this noisy fairAUC algorithm does not restrict the features in any way, or impose any additional assumptions beyond what is required for fairAUC.

                      \textit{What is the tradeoff then?}
                      Of course, the bias guarantee in noisy fairAUC will come at the cost of a reduction in the AUC improvement for the advantaged group (but it is important to note that it will never result in a \textit{decrease} in AUC for any group). By adding noise to the advantaged group’s acquired feature, we are reducing the AUC of the advantaged group relative to the fairAUC case, where no noise was added. We detail how the main theoretical result that all groups see an increase in AUC during a feature acquisition round continues to hold with noisy \fairAUC\ (see \cref{remark:noisyproof} in \cref{section:noisy_auc_proof}).
                      The above tradeoff positions noisy fairAUC at a different point on the fairness-accuracy curve relative to fairAUC. This positioning allows a manager to decide which algorithm to use based on how much they value the requirement that bias be non-increasing.

                      \section{Empirical Results}

                      We compare four feature acquisition strategies, namely \fairAUC, \maxAUC, \minBias, and \random. During each round of feature acquisition, the \fairAUC\ \algo\ selects the feature that most improves AUC for the group with lower AUC according to FLD. The \maxAUC\ \algo\ selects the feature that most improves the \textit{overall weighted AUC} using  FLD weighted by group size (see \cref{appendix:maxAUC} for the \maxAUC\ procedure). The \minBias\ \algo\ selects the feature that minimizes the bias between the two groups. The \random\ \algo\ selects a feature at random, and represents a baseline in which the \dm\ collects additional data in an uninformed manner.

                      \subsection{Synthetic Data}
                      We use the data generation strategy proposed by \cite{guyon2003design} for the controlled benchmarking of variable selection algorithms in binary classification problems. We generate $N=20,000$ individuals with 50 non-protected continuous binormally distributed features, one binary protected feature (group), and a binary outcome (class). Of the 50 features, half are \textit{informative} in that the class-conditional distributions of each of the features have means that are separated from each other. The remaining features are \textit{uninformative} random noise features. Group $a$ constitutes 70\% and group $b$ 30\%. The base rate of positive class labels in both groups is 25\%.

                      For the synthetic data, there is nothing fundamentally different between the two groups besides the number of individuals in each group so any difference in predictive performance stems from the feature acquisition procedure. %
                      We set the level of acceptable bias $\epsilon=10^{-6}$ to demonstrate the various \algos\ over many rounds. We acquire 10 additional features and use logistic regression for classification.
                      We randomly select one feature to represent the data the \dm\ begins with (Round 0) for classification. %

                      \subsubsection{Synthetic Data Results}
                      Figure \ref{figure:synthetic_results} (Top) compares the \fairAUC, \maxAUC, \minBias, and \random\ \algos\ in terms of group-wise AUCs when the protected attribute is used in the scoring function (i.e., each group has a separate classifier).
                      Under \fairAUC, the initially disadvantaged minority group $b$ quickly obtains predictive performance equal to majority group $a$. Under \maxAUC, group $b$'s AUC always trails group $a$'s AUC even though separate classifiers are trained for each group. The \minBias\ \algo\ quickly reduces bias and maintains low bias but fails to select informative features. \fairAUC\ and \maxAUC\ outperform the \random\ and \minBias\ \algos\ in terms of group-wise AUC.
                      Figure \ref{figure:rounds_sim_single_eqsep} in \cref{appendix:single_classifier} compares the \algos\ when the protected attribute is not used. The overall patterns among the \algos\ remain the same.

                      We graph the accuracy-fairness tradeoff (where accuracy is measured by AUC) in Figure \ref{figure:synthetic_results} (Center) that results from using \fairAUC\ rather than \maxAUC.
                      Ideally, a \algo\ generates points in the lower right of the graph, i.e., low bias and high AUC. The dotted lines connect the corresponding rounds between \fairAUC\ and \maxAUC. All of the lines have a positive slope, indicating that \fairAUC\ reduces bias but at the cost of overall AUC.
                      For \fairAUC, we observe that the bias does not monotonically decrease but rather jumps around. After all the rounds are complete, the \dm\ can evaluate the accuracy (AUC) versus bias tradeoff, according to their requirements. If the \dm\ requires a lower bias, they could choose the round that corresponds to the lowest level of bias (Round 7), whereas if they prefer to tradeoff a higher level of bias for a higher AUC, they might choose Round 10. The crucial aspect is that the feature acquisition procedure provides the \dm\ with a flexible set of options at various points on the accuracy-bias spectrum.
                      The \minBias\ \algo, as expected, produces low bias values but at the cost of significantly lower AUC. The \random\ \algo\ generates bias values between \fairAUC\ and \maxAUC\ but at far worse AUC values than either.
                      Figure \ref{figure:tradeoff_sim_single_eqsep} in \cref{appendix:single_classifier} graphs the tradeoff when the protected attribute A is not used and we observe the same patterns. %

                      Finally, we evaluate convex combinations of the fairness and maximum AUC objectives to generate a Pareto frontier for bias and overall AUC. Figure \ref{figure:synthetic_results} (Bottom) shows the intermediate bias and overall AUC values that can be achieved by altering the weight of the two objectives over different feature augmentation rounds when A is used.\footnote{For earlier rounds, many weight combinations select the same feature acquisition strategy, resulting in overlap.} Full weight on the fairness objective represents \fairAUC\ and full weight on the AUC objective represents \maxAUC.
                      The \dm\ therefore also has flexibility in determining how much weight to give to each of the objective functions. Figure \ref{figure:pareto_sim_single_eqsep} in \cref{appendix:single_classifier} graphs the Pareto frontier when A is not used and shows a similar pattern. \maxAUC\ generates far higher levels of bias when A is not used in classification.

                      \begin{figure*}
                        \begin{minipage}{\textwidth}
                          \begin{center}
                            \includegraphics[width=9.5cm]{./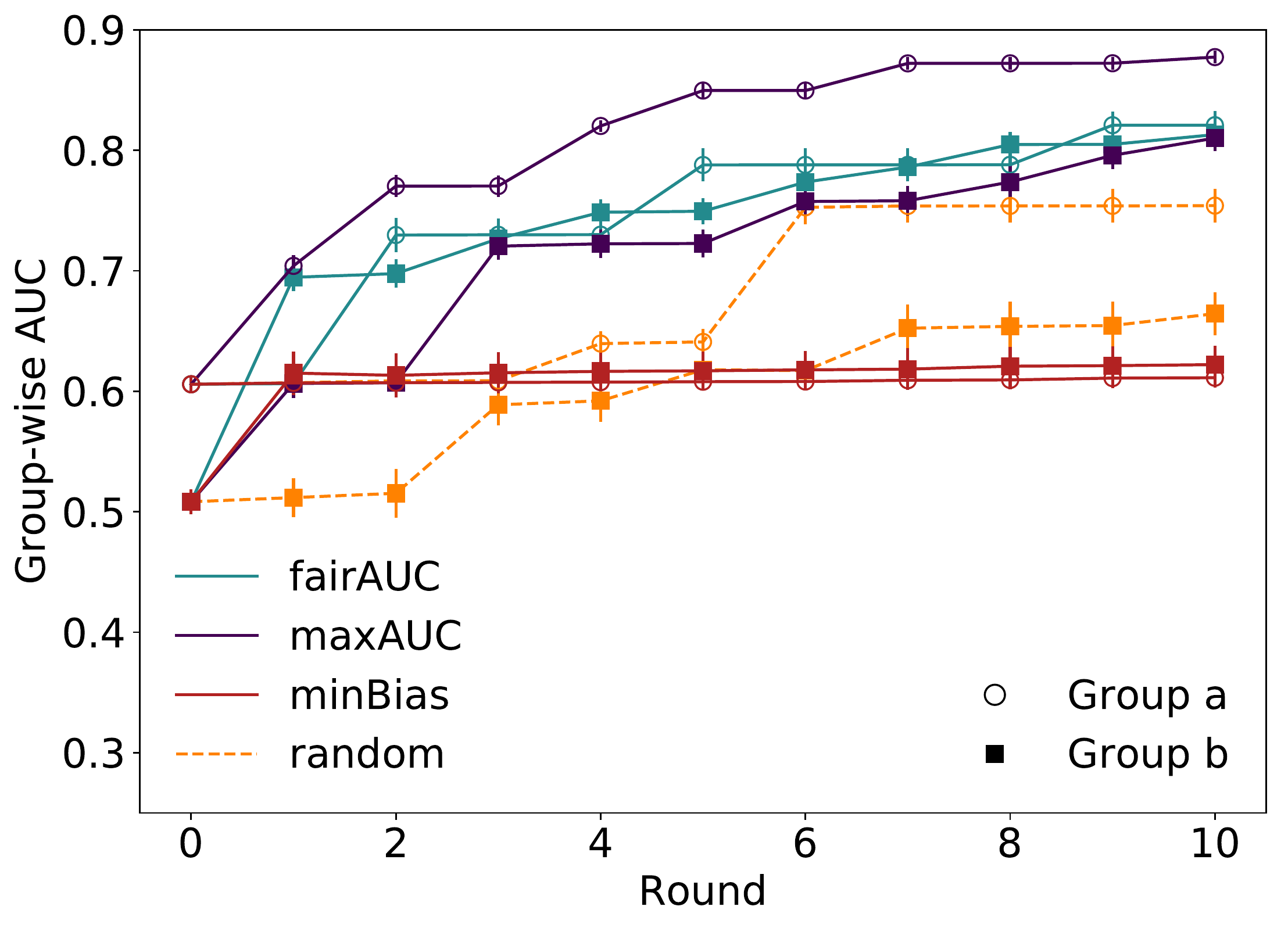}
                          \end{center}
                        \end{minipage}
                        \begin{minipage}{\textwidth}
                          \begin{center}
                            \includegraphics[width=9.5cm]{./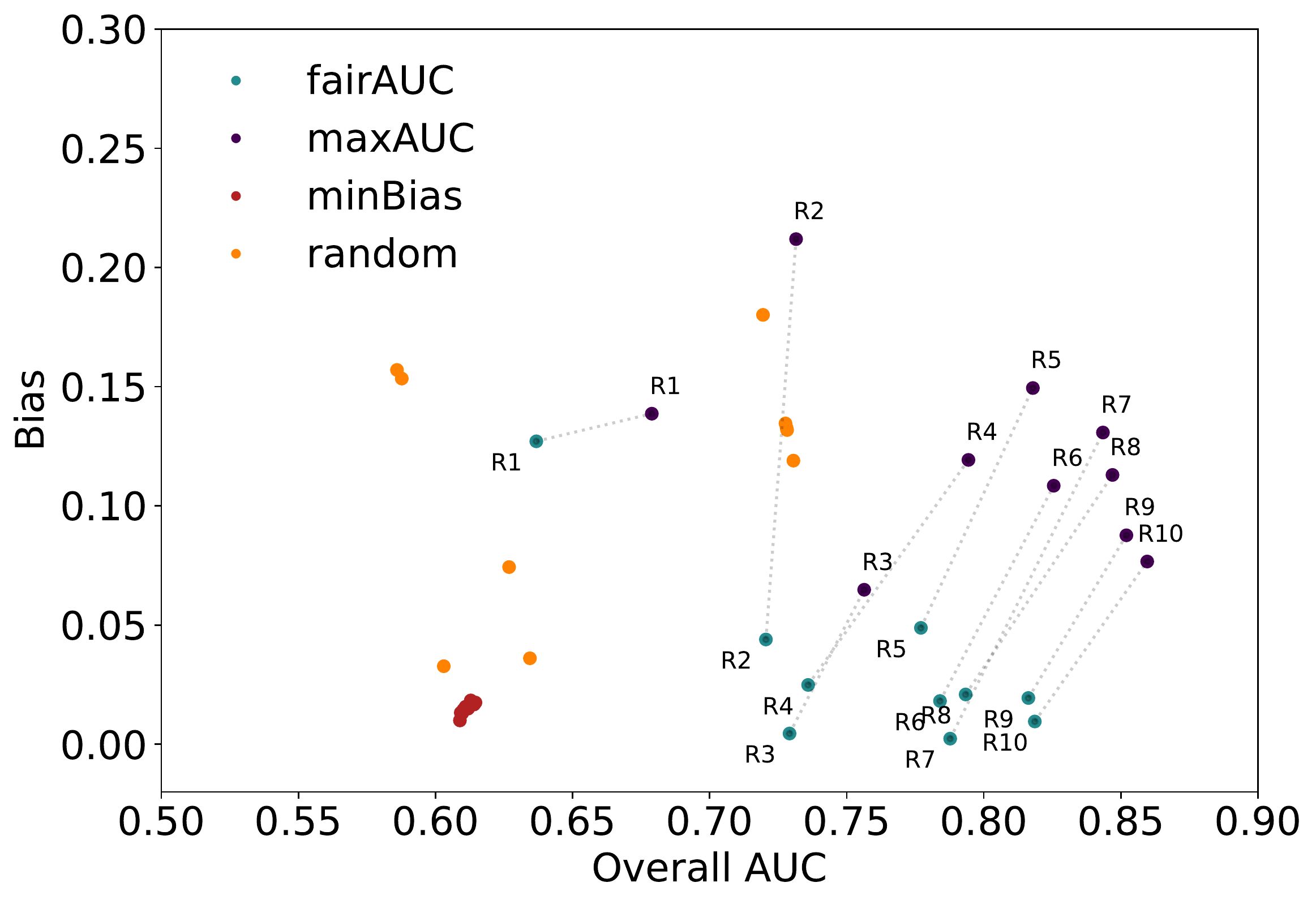}
                          \end{center}
                        \end{minipage}
                        \begin{minipage}{\textwidth}
                          \begin{center}
                            \includegraphics[width=9.5cm]{./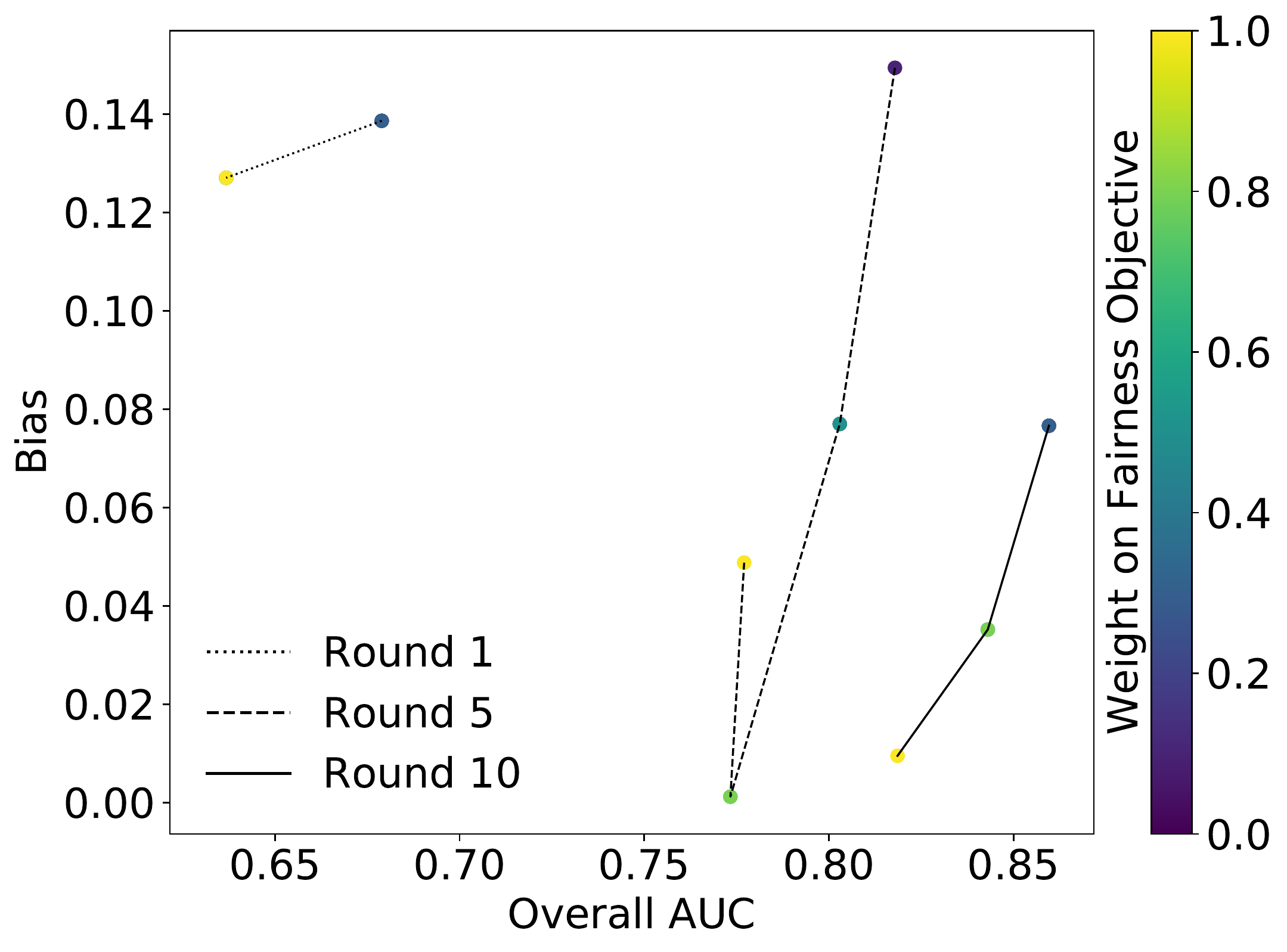}
                          \end{center}
                        \end{minipage}
                        \caption{ (Top) AUC by group over feature acquisition rounds using different feature acquisition strategies and using the protected attribute. The vertical error bars represent 95\% confidence intervals on the AUCs.
                        (Center) Comparison of accuracy-fairness tradeoff among feature acquisition strategies using the protected attribute.
                        (Bottom) Pareto frontier of convex combinations of the fairness and AUC objectives for several rounds of feature acquisition using the protected attribute.} \label{figure:synthetic_results}
                      \end{figure*}

                      The results highlight a number of pitfalls that can occur in data collection and prediction algorithm design. First, collecting data to maximize overall AUC or accuracy can inadvertently hurt the minority group. This can occur even when the two groups are equally separable and separate classifiers are trained for each group. %
                      Second, a strategy that aims to only minimize bias can result in the collection of features that are not predictive for either group.

                      \subsubsection{Robustness of fairAUC to Assumptions}\label{section:robustness_main_paper}

                      \fairAUC\ uses FLD as a heuristic to determine which feature to acquire. Recall that FLD is the linear classifier that maximizes AUC when the data is binormally distributed. We test the robustness of \fairAUC\ to other data generating processes, specifically using a different distribution which can accommodate a range of possible probabilities in \cref{appendix:other_data_distributions}. We find that the results are robust to using data that follow various gamma distributions. Next, we have specified the FLD framework and the associated theoretical results as applying to the class of Generalized Linear Models (GLMs). Therefore, we test the robustness of the results to using nonlinear classifiers like random forest and nonlinear SVM, in place of a linear classifier, logistic regression (\cref{section:nonlinear_classification}). Here too, we find similar results that our proposed \fairAUC\ \algo\ achieves much lower bias across groups, while obtaining overall AUC at a slightly lower to similar level as maxAUC. Finally, \fairAUC\ requires back and forth data exchange between the \dm\ and the data vendor to calculate covariances. One strategy to simplify the \algo\ even further is to assume independence between the auxiliary features and the firm's data. Table \ref{table:zero_corr} in \cref{appendix:zero_correlation} shows that out of the ten features acquired when correlations are accounted for, eight are acquired when the correlations are assumed to be zero.
                      Ignoring the class-conditional correlations generally results in the acquisition of the same features but in a less efficient order. We next apply the \fairAUC\ method to two real-world datasets from two distinct application areas, namely criminal justice and health care.

                      \subsection{Application: Predicting Violent Recidivism}\label{section:compas_data}

                      \subsubsection{COMPAS Dataset}
                      The dataset covers 6,172 criminal defendants from Broward County, Florida and contains information on their COMPAS score, demographics (gender, race, age), criminal history, and whether they actually recidivated within a two-year period after release. Our target variable of interest is violent recidivism and the protected attribute is age\footnote{Note that we define age as age at charge, which is different from the age recorded in the ProPublica dataset. ProPublica records defendants' age in 2016, the year the data was collected, rather than the age at charge. We calculate age at charge by subtracting date of birth from the date the defendant went to jail.} (under 25 vs. 25+). Those under 25 represent 33\% of the data and have a 14\% violent recidivism rate while those over 25 have a 10\% violent recidivism rate.

                      \subsubsection{Data Pre-processing}
                      We take log of the numerical variables in the dataset (e.g., number of priors) to reduce the impact of outliers. We also convert the categorical variables (e.g., race) into binary variables. We do not use the risk assessment levels or decile scores generated by COMPAS as inputs since they are the outcome variables, i.e. essentially what we seek to predict.

                      Suppose a judge has data on gender (initial independent variable), age group (protected attribute), and whether each defendant violently reoffended within two years of being released (outcome variable). Defendants under 25 years of age are the initially disadvantaged (lower-AUC) group based on the data the judge has. Given that features are costly to acquire, our focus is on which additional feature a judge should collect to better predict the likelihood of violent recidivism for defendants under 25.

                      \subsubsection{COMPAS Results}
                      We use logistic regression as the classifier and compare the \fairAUC, \maxAUC, \minBias, and \random\ procedures over ten rounds. Figure \ref{figure:compas_separate} plots the performance of the four \algos. The AUCs have fairly large error bars but the means follow the pattern seen in the synthetic data. \fairAUC\ improves the AUC for the group of defendants under 25 and decreases bias while \maxAUC\ does not close the gap between the two groups.

                      \begin{figure}
                        \centering
                        \caption{  \footnotesize Predicting Violent Recidivism using Protected Attribute (Age)}
                        \includegraphics[width=9.5cm]{./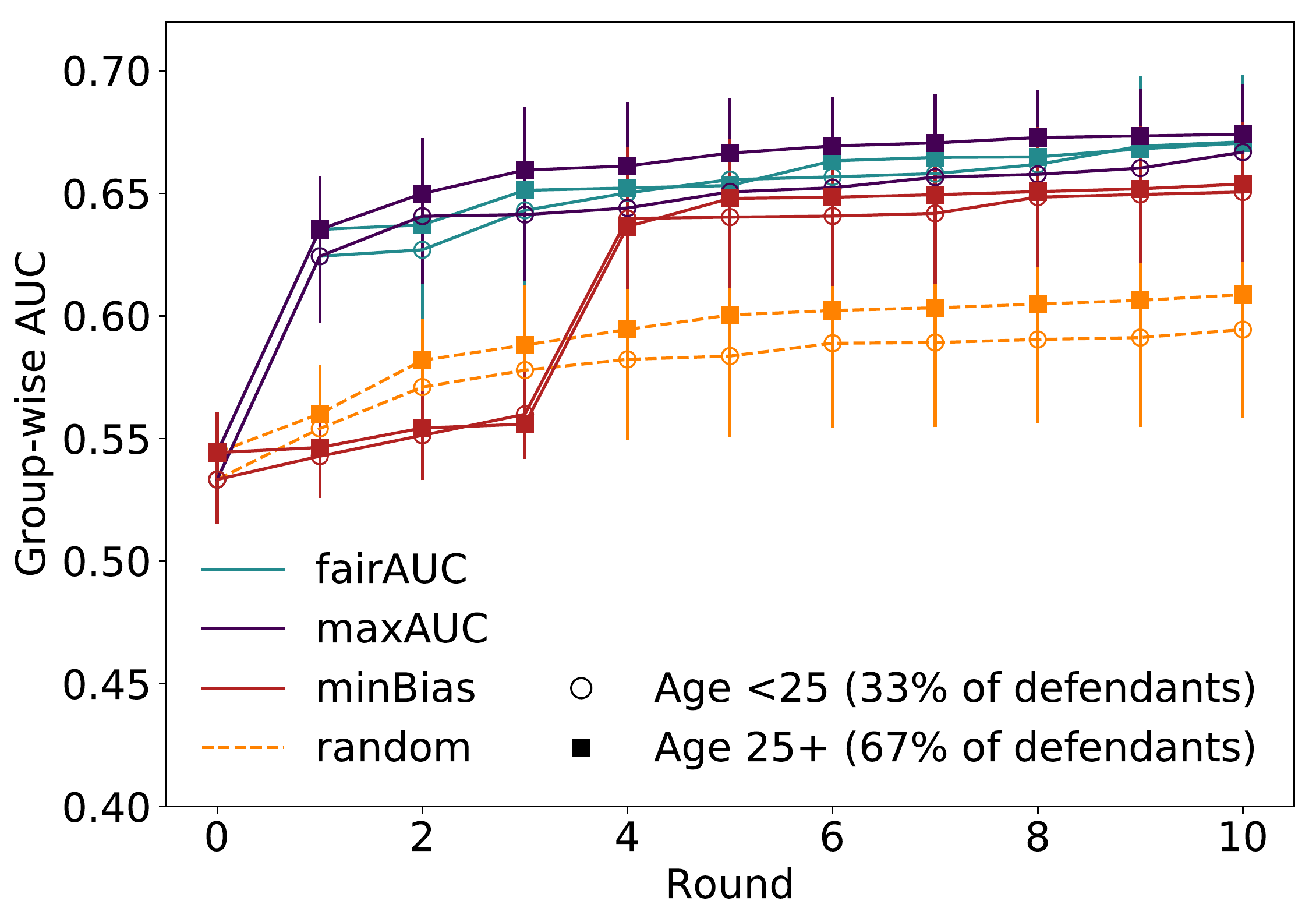}
                        \label{figure:compas_separate}
                      \end{figure}

                      \subsubsection{Acquiring Additional Features from a Data Vendor}\label{section:aspire_north}

                      In the previous section, we treat the features in the COMPAS dataset as features that can be acquired. In this section, we treat the COMPAS data as given and purchase additional features from a data vendor. %
                      Because the COMPAS dataset includes names and dates of birth for the defendants, it is possible to purchase additional features for these individuals. We note that the data available for purchase is current (2022) and not dated back to when the defendant was arrested (2013/2014), making it inappropriate for the actual prediction task at hand. The goal here is to demonstrate the practical feasibility of obtaining data from another source and combining it with first-party data.

                      We acquire data from Aspire North, a data vendor that works with small- to medium-sized businesses. The vendor sells a core set of roughly 550 features for \$80/1,000 individuals and charges more for specific features, such as ethnicity, net worth, and spending in different categories. Purchasing all available features exceeds \$15,000/1,000 individuals. See Table \ref{table:aspire_north_features} in \cref{appendix:data_vendor}  for additional details about the features.

                      We provide the data vendor the following information: name, birthday, zip codes in Broward county for the first pass of matching, and zip codes in Florida for the second pass of matching. Of the 6,172 defendants in the COMPAS dataset, the data vendor successfully matched 1,679 individuals. Some individuals may have changed their name, moved out of state, or it could be that the data vendor does not have data on all individuals.\footnote{When given name and address, Experian claims a match rate of  85\%.} Defendants under 25 represent 30\% of the data and have a 10.3\% violent recidivism rate while those over 25 have a 9.6\% violent recidivism rate. In addition to the core set of features, we purchase nine additional-fee features.

                      \subsubsection{Data Vendor Feature Acquisition Results}
                      To compare the performance of the four \algos, we begin with the COMPAS features as the initial data and acquire features from the purchased dataset. Where there are missing values in the acquired data, we replace the values with the means from each group. We use logistic regression as the classifier. Figure \ref{figure:compas_aspire_separate} plots the performance of the four algorithms. For this dataset, the initial data is more predictive of violent recidivism for the defendants under 25 years of age. Compared to \maxAUC, \fairAUC\ greatly reduces bias between the two groups. %

                      \begin{figure}[ht]
                        \centering
                        \caption{Predicting Violent Recidivism using Protected Attribute (Age) with Data Vendor Features}
                        \includegraphics[width=9.5cm]{./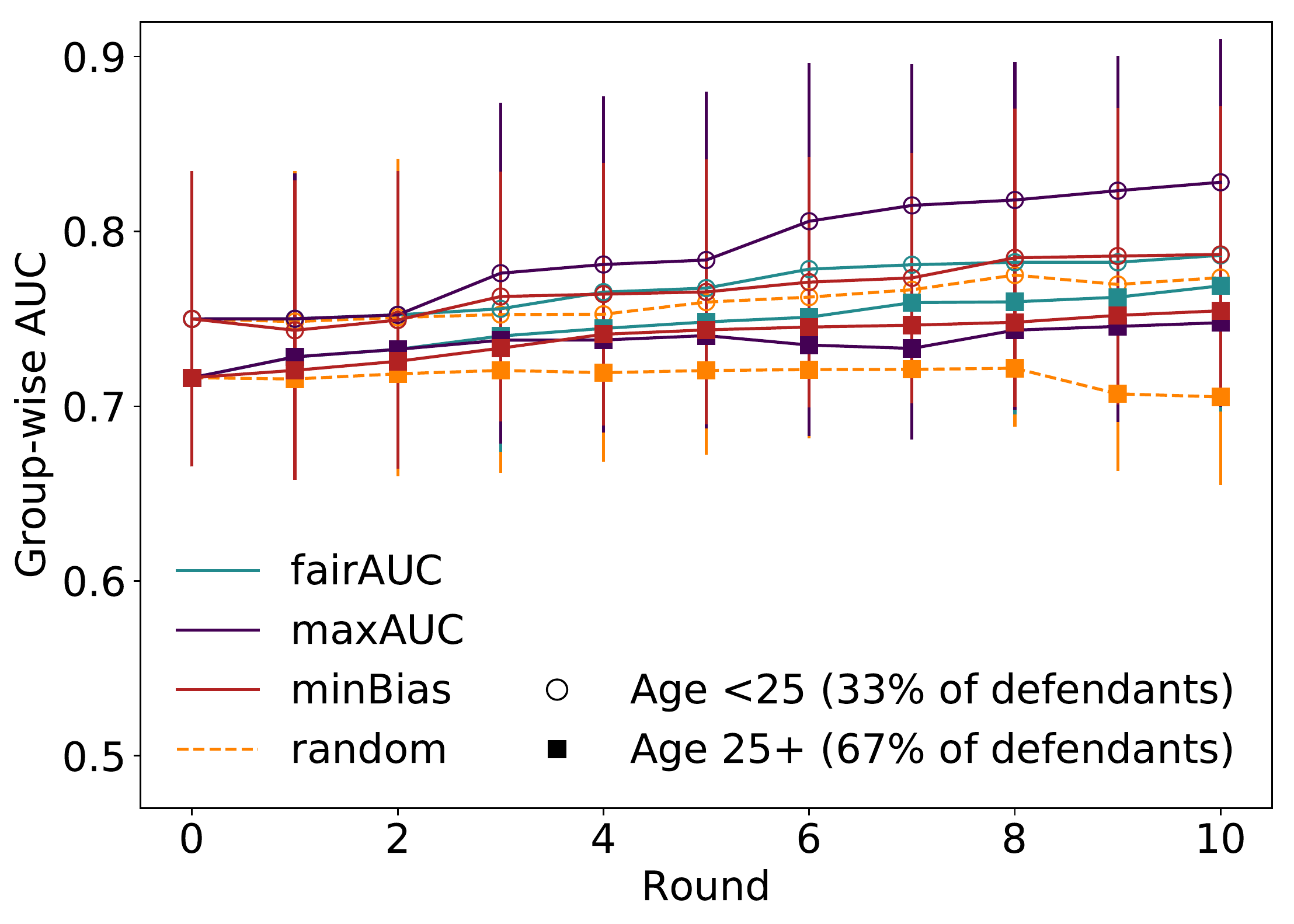}
                        \label{figure:compas_aspire_separate}
                      \end{figure}

                      \subsection{Application: Predicting Hospital Readmission}\label{section:diabetes_data}

                      Finally, we test the various procedures on a second completely distinct dataset in a healthcare application using the publicly available Diabetes dataset \citep{strackdiabetes}.\footnote{\url{https://archive.ics.uci.edu/ml/datasets/diabetes+130-us+hospitals+for+years+1999-2008}} The dependent variable examined is hospital readmission within 30 days, and the explanatory (predictor) variables include reason for admission, time in hospital, intervention with drugs, number of medications, and other health-related data, as well as demographic data on gender, race, and age. After cleaning the data, there are 45,715 observations. We take log of the numeric variables to reduce the impact of outliers and convert the categorical variables to indicator variables.

                      We find that when examining race as a protected group (77\% Caucasian, 23\% Non-Caucasian), our fairAUC procedure reduces bias, while obtaining a high AUC for both Caucasian and Non-Caucasian groups, compared to the maxAUC procedure, which obtains much higher AUC for the group of Caucasian individuals.

                      \begin{figure}[ht]
                        \centering
                        \caption{Predicting Hospital Readmission using Protected Attribute (Race)}
                        \includegraphics[width=9.5cm]{./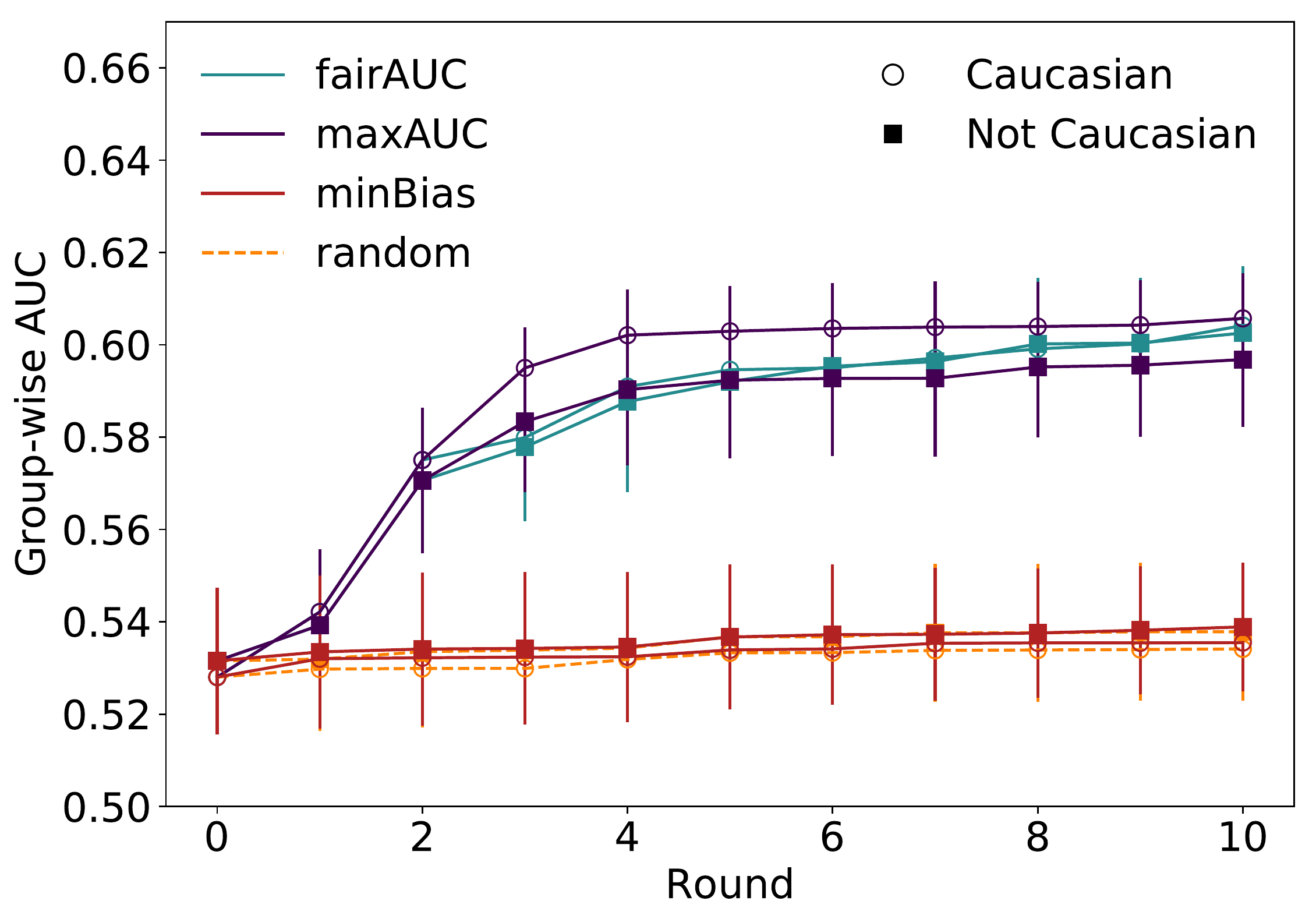}
                        \label{figure:diabetes}
                      \end{figure}

                      \noindent  Our real-world applications span across multiple domains, lending external validity to the broad range of applications where fairAUC can be useful.

                      \section{Conclusion}
                      We propose \fairAUC, an approach to feature acquisition that helps achieve fairness in the AUC measure. %
                      Our approach, which can incorporate a wide variety of classification algorithms, aims to improve the performance of the lower-AUC group. %
                      First, using a theoretical analysis we show provable AUC improvements for the disadvantaged group. Second, we test our approach using synthetic data as well as in real-world contexts and find that our approach performs well in reducing bias, while also increasing AUC for both the disadvantaged and advantaged groups.

                      While our method has many advantages, it is not without limitations. First, our method applies to cases with binary outcome labels, although in principle it could be extended to a multiclass classification problem. Second, if two ROC curves cross,  then one classifier performs better in one region of ROC space and the other classifier performs better in the other region of ROC space. Our approach would only consider the overall AUC. In practical situations, we might want to weight false positives and true positives differently or consider a notion of partial AUC.
                      In practice, the algorithm can be altered to account for such asymmetric weights.
                      Third, our \algo\ assumes the underlying data distributions are approximately binormal. While \fairAUC\ is meant to provide guidance as a heuristic, large deviations from normality may undermine its effectiveness. However, we do find that \fairAUC\ continues to perform well with different gamma distributions and in real-world data.

                      We trust that this paper is a first step in identifying and directly addressing fairness as it relates to the data collection process and AUC. This work complements other work that focuses on data collection through rows \citep{cai2022adaptive}, rather than features. We expect that more broadly these areas and their interaction will be further investigated in future research.

                      \paragraph{Acknowledgments.} This project is supported in part by an NSF Award (CCF-2112665).

                      \section{Proofs}

                      \subsection{Proof That AUC Can Be Non-monotone in Unconditional Variance}\label{section:uncond_var}

                      In this section, we formalize the idea that the unconditional variance does not inform AUC by writing the unconditional variance as a function of the conditional variances, where $\pi_g\coloneqq\Pr[Y=1|A=g]$ represents the proportion of observations from the positive class for group $g$.
                      The conditional variances, $\sigma_{g1}^2$ and $\sigma_{g0}^2$, and the unconditional variance, $\mathrm{Var}[X|A=g]$, can be written as:
                      \begin{eqnarray}\label{eq:cond_var_pos}
                        \sigma_{g1}^2 &=& \mathrm{Var}[X|A=g,Y=1] \nonumber \\
                        &=& \mathbb{E}[X^2|A=g,Y=1] - \mathbb{E}[X|A=g,Y=1]^2 \nonumber \\
                        &=& \int x^2p_{g1}(x)dx - \mu_{g1}^2,
                      \end{eqnarray}

                      \begin{eqnarray}\label{eq:cond_var_neg}
                        \sigma_{g0}^2 &=& \mathrm{Var}[X|A=g,Y=0] \nonumber \\
                        &=& \mathbb{E}[X^2|A=g,Y=0] - \mathbb{E}[X|A=g,Y=0]^2 \nonumber \\
                        &=& \int x^2p_{g0}(x)dx - \mu_{g0}^2,
                      \end{eqnarray}
                      and
                      \begin{align*}
                        \yesnum\label{eq:uncond_var}
                        \mathrm{Var}[X|A=g] &= \mathbb{E}[X^2|A=g] - \mathbb{E}[X|A=g]^2\\
                        &=\pi_g \int x^2p_{g1}(x)dx  +  (1-\pi_g)\int x^2p_{g0}(x)dx - (\pi_g\mu_{g1}+(1-\pi_g)\mu_{g0})^2.
                      \end{align*}

                      \noindent It follows from Equations \eqref{eq:cond_var_pos}, \eqref{eq:cond_var_neg}, and \eqref{eq:uncond_var} that:
                      \begin{eqnarray}\label{eq:uncond_var_result}
                        \mathrm{Var}[X|A=g] &=& \pi_g(1-\pi_g)(\mu_{g1} - \mu_{g0})^2  + \pi_g \sigma_{g1}^2 + (1-\pi_g) \sigma_{g0}^2.
                      \end{eqnarray}

                      \noindent When we hold the difference in class means and base rate constant, different combinations of $\sigma_{g0}^2$ and $\sigma_{g1}^2$ can produce the same unconditional variance in Equation \eqref{eq:uncond_var_result}. According to the binormal AUC formula (Equation \eqref{eq:AUC_binormal}), these combinations of $\sigma_{g0}^2$ and $\sigma_{g1}^2$ do not all map to the same AUC for group $g$. Indeed, the same unconditional variance can be mapped to multiple AUCs.
                      The left half of Table \ref{table:var_auc} (Constant $\mathrm{Var}[X|A=g]$) shows a numerical example of a single unconditional variance mapping to multiple AUCs for different conditional variances.
                      The right half of Table \ref{table:var_auc} (Increasing $\mathrm{Var}[X|A=g]$) shows that there is not a monotonic relationship between the unconditional variance and AUC.

                      \begin{table}[htbp]
                        \small
                        \centering
                        \caption{Unconditional Variance and AUC}
                        \vspace{1mm}
                        \begin{tabular}{rrrrrrrrr}
                          \hline \hline
                          \multicolumn{4}{c}{Constant $\mathrm{Var}[X|A=g]$}& & \multicolumn{4}{c}{Increasing $\mathrm{Var}[X|A=g]$} \\
                          \cline{1-4}
                          \cline{6-9}
                          $\sigma_{g0}^2$ & $\sigma_{g1}^2$ & Var & AUC & & $\sigma_{g0}^2$ & $\sigma_{g1}^2$ & Var & AUC \\
                          \hline
                          10 & 1 & 18.80 & 0.82 &  & 2 & 4 & 19.60 & 0.95 \\
                          4 & 2.5 & 18.80 & 0.94 & & 12 & 3 & 20.80 & 0.75 \\
                          2 & 3 & 18.80 & 0.98 & & 4 & 8 & 23.20 & 0.80 \\
                          \hline \hline
                          \multicolumn{9}{l}{\textit{Note}: $\pi_g = 0.8$ and $\mu_{g1} - \mu_{g0} = 10$.}
                        \end{tabular}
                        \label{table:var_auc}
                      \end{table}

                      \begin{observation}[Non-informativeness of Unconditional Variance]\label{obs:1}
                        The ranking of the unconditional variance between groups is not informative of the ranking of AUC between groups. For groups $a$ and $b$, if $\mathrm{Var}[X|A=a] > \mathrm{Var}[X|A=b]$, AUC$_a$ can be greater than, equal to, or less than AUC$_b$.
                      \end{observation}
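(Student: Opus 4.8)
The plan is to establish \cref{obs:1} by explicit construction: for each of the three possible orderings of $\mathrm{AUC}_a$ relative to $\mathrm{AUC}_b$, exhibit a binormal instance with $\mathrm{Var}[X\mid A=a]>\mathrm{Var}[X\mid A=b]$ realizing that ordering. The two facts I would use are both already in the excerpt. From \cref{eq:AUC_binormal}, $\mathrm{AUC}_g=\Phi\!\big((\mu_{g1}-\mu_{g0})/\sqrt{\sigma_{g0}^2+\sigma_{g1}^2}\big)$, so $\mathrm{AUC}_g$ depends on the group only through the separation ratio $\Delta_g\coloneqq(\mu_{g1}-\mu_{g0})/\sqrt{\sigma_{g0}^2+\sigma_{g1}^2}$; in particular it does not depend on the base rate $\pi_g$, nor on how a fixed total $\sigma_{g0}^2+\sigma_{g1}^2$ is allocated between the two classes. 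From \cref{eq:uncond_var_result}, $\mathrm{Var}[X\mid A=g]=\pi_g(1-\pi_g)(\mu_{g1}-\mu_{g0})^2+\pi_g\sigma_{g1}^2+(1-\pi_g)\sigma_{g0}^2$, which does depend on both. This decoupling is the whole point: the ``free'' parameters $\pi_g$ and the within-group split of conditional variance move $\mathrm{Var}[X\mid A=g]$ without moving $\mathrm{AUC}_g$, so the unconditional-variance ordering can be set independently of the AUC ordering.

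Concretely, for the equality case I would take the two groups to have identical class-conditional means with $\mu_{g1}>\mu_{g0}$ and identical conditional variances $\sigma_{g0}^2=\sigma_{g1}^2=\sigma^2$, so that $\Delta_a=\Delta_b$ and hence $\mathrm{AUC}_a=\mathrm{AUC}_b$; then $\mathrm{Var}[X\mid A=g]=\pi_g(1-\pi_g)(\mu_{g1}-\mu_{g0})^2+\sigma^2$ is strictly increasing in $\pi_g(1-\pi_g)$, so choosing $\pi_a=\tfrac12$ and $\pi_b$ bounded away from $\tfrac12$ gives $\mathrm{Var}[X\mid A=a]>\mathrm{Var}[X\mid A=b]$. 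For the two strict cases I would fix $\pi_a=\pi_b=\pi$ and $\mu_{g1}-\mu_{g0}$ common across groups, so that the $\pi(1-\pi)(\mu_{g1}-\mu_{g0})^2$ term drops out of the comparison of unconditional variances, and then play the sum $\sigma_{g0}^2+\sigma_{g1}^2$ (which controls AUC) off against the $\pi$-weighted combination $\pi\sigma_{g1}^2+(1-\pi)\sigma_{g0}^2$ (which controls the remainder of $\mathrm{Var}[X\mid A=g]$): e.g.\ with $\pi$ close to $1$, loading variance onto the high-probability class $1$ for group $a$ while keeping its total sum smaller than group $b$'s yields $\mathrm{AUC}_a>\mathrm{AUC}_b$ together with $\mathrm{Var}[X\mid A=a]>\mathrm{Var}[X\mid A=b]$, and the symmetric choice (group $a$ with the larger sum) yields $\mathrm{AUC}_a<\mathrm{AUC}_b$ with the same unconditional-variance ordering. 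The ``Increasing $\mathrm{Var}[X\mid A=g]$'' block of \cref{table:var_auc} already provides numerical instances of this flavor.

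What remains is purely a matter of verification: checking that each chosen configuration is a legitimate binormal instance (positive conditional variances and $\mu_{g1}\ge\mu_{g0}$) and that the claimed inequalities in $\mathrm{AUC}$ and in $\mathrm{Var}[X\mid A=g]$ hold, which is immediate from plugging into \cref{eq:AUC_binormal} and \cref{eq:uncond_var_result} (the AUC comparisons only use monotonicity of $\Phi$, so no evaluation of $\Phi$ is actually needed). There is no genuine obstacle here; the only thing worth stating carefully is the structural observation underlying all three examples, namely that $\mathrm{AUC}_g$ is invariant to the base rate and to the allocation of conditional variance across classes whereas $\mathrm{Var}[X\mid A=g]$ is not, so nothing ties their cross-group orderings together.
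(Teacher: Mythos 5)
Your proposal is correct and follows essentially the same route as the paper: both arguments rest on the decomposition $\mathrm{Var}[X\mid A=g]=\pi_g(1-\pi_g)(\mu_{g1}-\mu_{g0})^2+\pi_g\sigma_{g1}^2+(1-\pi_g)\sigma_{g0}^2$ together with the binormal AUC formula, and then exhibit configurations in which the unweighted sum $\sigma_{g0}^2+\sigma_{g1}^2$ (which drives AUC) is ordered independently of the $\pi$-weighted combination (which drives the unconditional variance). The only cosmetic difference is that you realize the equality case by varying base rates while the paper keeps $\pi_a=\pi_b=\pi<0.5$ and obtains the $\mathrm{AUC}_a\geq\mathrm{AUC}_b$ regime (equality included) through the boundary condition on $\sigma_{b1}^2-\sigma_{b0}^2$; both are instances of the same decoupling idea.
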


                      \begin{proof}[Proof of Observation 1]
                        Set $\pi_a=\pi_b = \pi$, $\mu_{a1}=\mu_{b1}=\mu_{1}$, and $\mu_{a0}=\mu_{b0}=\mu_{0}$.
                        Then, using Equation \eqref{eq:uncond_var_result},
                        $\mathrm{Var}[X|A=a] > \mathrm{Var}[X|A=b]$ implies:
                        \begin{equation}\label{eq:proof_var_inequality}
                          \pi \sigma_{a1}^2 + (1-\pi) \sigma_{a0}^2 > \pi \sigma_{b1}^2 + (1-\pi) \sigma_{b0}^2.
                        \end{equation}
                        \noindent
                        Further, suppose that $\pi<0.5$ and $\mu_1\neq \mu_0$.
                        Consider the following two cases that demonstrate that $\mathrm{AUC}_a$ can be greater than or less than $\mathrm{AUC}_b$.

                        \noindent
                        \begin{enumerate}
                          \item Let $\sigma^2_{a0}=\sigma^2_{a1}=\sigma^2_{a}$ and $\sigma^2_{b0}=\sigma^2_{b1}=\sigma^2_{b}$. It follows from Equation \eqref{eq:proof_var_inequality} that $\sigma^2_{a}>\sigma^2_{b}$ so $\mathrm{AUC}_a = \Phi \left(\frac{\mu_{1}-\mu_{0}}{\sqrt{2\sigma_{a}^2}} \right) <  \Phi \left(\frac{\mu_{1}-\mu_{0}}{\sqrt{2\sigma_{b}^2}} \right) = \mathrm{AUC}_b$.
                          Here, we also use the fact that  $\mu_1\neq \mu_0$ and that $\Phi(\cdot)$ is a monotonically increasing function.
                          \item Let $\sigma^2_{a0}=\sigma^2_{a1}=\sigma^2_{a}$. It follows from Equation \eqref{eq:proof_var_inequality} that:
                          \begin{equation*}
                            \sigma^2_{a}>\pi \sigma^2_{b1}+(1-\pi)\sigma^2_{b0}.
                          \end{equation*}
                          \noindent
                          Let
                          \begin{equation}\label{eq:prop_case2}
                            \sigma^2_{a}=\pi \sigma^2_{b1}+(1-\pi)\sigma^2_{b0}+\epsilon
                          \end{equation}
                          where $\epsilon>0$. We want to find conditions under which $\mathrm{AUC}_a \geq \mathrm{AUC}_b$. It follows from Equation \eqref{eq:AUC_binormal} that $\mathrm{AUC}_a \geq \mathrm{AUC}_b$ when $2\sigma^2_{a} \leq \sigma^2_{b1}+\sigma^2_{b0}$ (since $\Phi(\cdot)$ is a monotonically increasing function).
                          Incorporating Equation \eqref{eq:prop_case2}, the AUC condition requires:
                          \begin{equation*}
                            \sigma^2_{b1}+\sigma^2_{b0} \geq 2 \pi \sigma^2_{b1} +2(1-\pi) \sigma^2_{b0} +2\epsilon,
                          \end{equation*}
                          \noindent
                          which simplifies to:
                          \begin{equation}\label{eq:prop_boundary}
                            \sigma^2_{b1} \geq \sigma^2_{b0} + \frac{2\epsilon }{1-2\pi}
                          \end{equation}
                          \noindent
                          when $\pi<0.5$.
                        \end{enumerate}

                        \noindent
                        Note that the smaller class needs to have higher variance for Equation \eqref{eq:prop_boundary} to hold. Class-conditional variances are weighted in the expected overall unconditional variance but not weighted in the AUC formula. The closer we are to class balance (i.e., $\pi=0.5$) the greater the difference in class-conditional variances we need for $\mathrm{AUC}_a \geq \mathrm{AUC}_b$.
                      \end{proof}

                      \subsection{Proof of \cref{thm:main_result_main_body}} \label{section:theoretical_results}
                      In this section, we prove \cref{thm:main_result_main_body}, which lower bounds the AUC-improvement for the disadvantaged group in each iteration of \fairAUC{}.
                      We also prove an analogous result for the advantage group (\cref{thm:result_for_advantaged_group}).

                      Toward this, we analyze the fairAUC procedure in the binormal framework for features~\citep{su1993linear} (where the features follow a normal distribution conditioned on the class and the protected group).
                      We show that if \fairAUC{} uses FLD-based scores $S$ (Equation~\eqref{eq:def_s_star}), then
                      in each iteration $t\in \N$, where the AUC for the disadvantaged group is bounded away from 1 and there is at least one auxiliary feature which has ``low'' class-conditional covariances with the current scores $S$ and has ``bounded'' class-conditional variances and means,
                      \fairAUC{} improves the AUC of the disadvantaged group by at least a constant in iteration $t$ (\cref{thm:main_result}).

                      From \cref{section:fair_auc_over_multiple_iterations}, recall that there are a total of $m$ features, out of which, the decision-maker initially has access to $d$ {\em acquired features}:
                      \begin{align*}
                        X\coloneqq (X^1,X^2,\dots,X^d)\in \R^d,\tag{Acquired features}
                      \end{align*}
                      and has the option to augment $d^\prime\coloneqq m-d$ {\em auxiliary features}:
                      \begin{align*}
                        \qquad\qquad\qquad\qquad\ \ \hspace{0.5mm}
                        Z \coloneqq (Z^1,Z^2,\dots,Z^{d^\prime})\in \R^{d^\prime}. \tag{Auxiliary features}
                      \end{align*}
                      The $m$ features $X\cup Z$, together with class label $Y$ and group $A$, are assumed to follow the following binormal framework in this section.
                      \begin{definition}[\bf Binormal framework]\label{def:binormal_framework}
                        The $m$ features $X\cup Z$, the class label $Y$, and the group label $A$ are distributed according to a distribution $\cD$ over $\R^d\times \R^{d^\prime} \times \zo\times \ab$, such that
                        for each $y\in\zo$ and $g\in\ab$, conditioned on $A=g$ and $Y=y$, the $m$ features, follow a $m$-variate normal distribution with an invertible covariance matrix.
                        {(Note that conditioned on $A=g$ and $Y=y$ different features can be correlated with each other.)}
                      \end{definition}
                      \noindent From the distribution $\cD$ (in \cref{def:binormal_framework}), $N\in \N$ independent samples are drawn to construct a dataset $D$ before starting the \fairAUC{} procedure.
                      The summary statistics subroutines (Subroutines SSR and SSR2) use $D$, every time they are queried, to compute the approximations to first and second moments of the distribution $\cD$;
                      we assume that these approximations have a negligible error (\cref{asmp:ssr}). %
                      \begin{assumption}\label{asmp:ssr}
                        Assume that the sample means and covariances computed by two summary statistics subroutines (Subroutines SSR and SSR2) are equal to the corresponding true means and covariances of draws from $\cD$.
                      \end{assumption}
                      \noindent Since the subroutines use independent samples from $\cD$, where the features follow a normal distribution, from the concentration inequalities of the normal distribution~\citep{tropp2015introduction}, we expect the samples means and covariances of the features on $D$ to be ``good approximations'' of the true means and covariances of the features on $\cD$ for large $N$.

                      At each iteration, \fairAUC{} acquires one auxiliary feature.
                      For each $t\in [d^\prime]$, let $Q(t)\subseteq [d^\prime]$ denote the set of all auxiliary features acquired before the start of the $t$-th iteration; %
                      where we have $Q(1)\coloneqq \emptyset$.
                      Further, let $X(t)$ denote the tuple of all features in $Q(t)$ and the $d$ features $(X^1,X^2,\dots,X^d)$, i.e.,
                      \begin{align*}
                        X(t)\coloneqq (X^1,X^2,\dots,X^d) \  \cup \ \inparen{Z^\ell}_{\ell\in Q(t)}.
                      \end{align*}
                      Note that because $Q(1)=\emptyset$, $X(1)=X$.

                      \begin{algorithm}[!t]
                        \SetAlgorithmName{Procedure}{procedure}{}
                        \caption{{\fairAUC{} ($t$-th iteration)}}
                        \small
                        \vspace{1mm}
                        \KwIn{Data owned $({X_i},A_i,Y_i)_{i=1}^N$, $[d^\prime]$, indices acquired $Q(t)\subseteq [d^\prime]$, and data acquired $\sinparen{Z^\ell_i}_{i\in [N],\ \ell\in Q(t)}$}
                        \vspace{1mm}
                        \KwOut{Set $Q(t+1)\subseteq [d^\prime]$ of the auxiliary features augmented}
                        \vspace{1mm}
                        \hrule
                        \vspace{2mm}

                        \For {group $g \in \{a,b\}$}{

                        {\bf Query} $\Sigma^{\sexp{g}}_0,\Sigma^{\sexp{g}}_1,\mu^{\sexp{g}}_0,\mu^{\sexp{g}}_1 = $ \texttt{SSR}($X\cup \sinparen{Z^\ell}_{\ell\in Q(t)},A,g$)\\
                        {\bf Compute} $\Delta\mu^\sexp{g} \coloneqq \sinparen{\sabs{\mu^{\sexp{g}}_{11}-\mu^{\sexp{g}}_{01}}, \dots, \sabs{\mu^{\sexp{g}}_{1d}-\mu^{\sexp{g}}_{0d}}}$\\
                        {\bf Compute} $\Sigma^\sexp{g}\coloneqq \Sigma^{\sexp{g}}_0+\Sigma^{\sexp{g}}_1$\\
                        {\bf Initialize} ${S} \coloneqq \inparen{0}_{i=1}^{N}$\\
                        \For {$i \in [N]$}{
                        \uIf{$A_i = a$}{
                        {\bf Set} ${S}_i \coloneqq {(\Delta\mu^\sexp{a})^\top (\Sigma^\sexp{a})^{-1} X_i}$ \hfill \textcolor{gray}{\footnotesize // Compute FLD scores}\\
                        }
                        \Else{
                        {\bf Set} ${S}_i \coloneqq {(\Delta\mu^\sexp{b})^\top (\Sigma^\sexp{b})^{-1} X_i}$ \hfill \textcolor{gray}{\footnotesize // Compute FLD scores}\\
                        }
                        }
                        }\vspace{3mm}

                        \For {group $g \in \{a,b\}$}{
                        {\bf Compute}
                        $\AUC_g(X)\coloneqq \Phi\inparen{\sqrt{ (\mu^{\sexp{g}}_1-\mu^{\sexp{g}}_0)^\top (\Sigma^{\sexp{g}}_0+\Sigma^{\sexp{g}}_1)^{-1} (\mu^{\sexp{g}}_1-\mu^{\sexp{g}}_0)}}$
                        }
                        $g(t)  \coloneqq \arg\min_{g\in \ab}(\AUC_g(X))$ \hfill \textcolor{gray}{\footnotesize // Find disadvantaged group} \vspace{3mm}

                        \For {auxiliary feature $\ell \in [d^\prime]$}{
                        \vspace{2mm}
                        \textcolor{gray}{// For group $g(t)$ query: class-conditional means ${\mu_{0}},{\mu_{1}}\in \R^{2}$, and }\\
                        \textcolor{gray}{// covariance matrices ${\Sigma_{0}},{\Sigma_{1}}\in \R^{2\times 2}$  between score $S$ and auxiliary feature $Z^\ell$.}\\
                        \vspace{2mm}
                        {\bf Query} $\Sigma_0, \Sigma_1, \mu_0, \mu_1=$ \texttt{SSR2}($\ell,g(t),S$)\\
                        {\bf Compute}
                        $\AUC_{g(t)}({S},{Z}^\ell) \coloneqq \Phi \left(\sqrt{({\mu_{1}}-{\mu_{0}})^\top ({\Sigma_0}+{\Sigma_1})^{-1}({\mu_{1}}-{\mu_{0}})} \right)$
                        }\vspace{3mm}

                        $i \coloneqq \arg\max_{\ell\in [d^\prime]} \AUC_{g(t)}({S},{Z}^\ell)$
                        $Q(t+1) = Q(t)\cup \inbrace{i}$\\\vspace{3mm}
                        {\bf return} $Q(t+1)$.
                        \vspace{3mm}
                      \end{algorithm}

                      \begin{algorithm}[!b]\label{subroutine:1}
                        \SetAlgorithmName{Subroutine}{subroutine}{list of subroutines}
                        \caption{{\texttt{SSR} (summary statistic subroutine)}}
                        \small
                        \vspace{1mm}
                        \KwIn{Acquired features $\sinbrace{X_i^j}_{i\in [N], j\in [d+t]}$, protected attributes $\inbrace{A_i}_{i=1}^N$, group $g\in \ab$}
                        \vspace{1mm}
                        \KwOut{class-conditional mean vectors ${\mu_{0}},{\mu_{1}}\in \R^d$, class-conditional covariance matrices ${\Sigma_{0}},{\Sigma_{1}}\in \R^{d\times d}$}
                        \vspace{1mm}\hrule
                        \vspace{2mm}
                        \For {class $y \in \{0,1\}$}{
                        \vspace{2mm}
                        {\bf Compute} $n \coloneqq \sum_{i}\mathbb{I}[A_i=g, Y_i=y]$ \hfill \textcolor{gray}{\footnotesize // Total elements with $A_i=g$ and $Y_i=y$}\\\vspace{1mm}
                        {\bf Compute} ${\mu_y} \coloneqq
                        \frac{1}{n}\begin{bmatrix}
                        \sum_{i:
                        \begin{smallmatrix}
                          A_i=g, Y_i=y
                          \end{smallmatrix}}X_i^1,
                          \dots,
                          \sum_{i:
                          \begin{smallmatrix}
                            A_i=g, Y_i=y
                            \end{smallmatrix}}X_i^d
                          \end{bmatrix}$  \hfill \textcolor{gray}{\footnotesize // Empirical mean of $X$ when $(A_i,Y_i)=(g,y)$}\\\vspace{1mm}
                          {\bf Compute} matrix ${\Sigma_y}\in \R^{d\times d}$, where for all $\ell,k\in [d]$ \hfill \textcolor{gray}{\footnotesize // Empirical covariance of $X$ when $(A_i,Y_i)=(g,y)$}
                          $$(\Sigma_y)_{\ell,k}\coloneqq \frac{1}{n-1} \sum_{i:\begin{smallmatrix}
                          A_i=g, Y_i=y
                          \end{smallmatrix}}(X_i^\ell-(\mu_y)_\ell)(X_i^k-(\mu_y)_k)$$
                          }

                          \vspace{3mm}
                          {\bf return} ${\mu_{0}},{\mu_{1}},{\Sigma_{0}},{\Sigma_{1}}$
                          \vspace{3mm}
                        \end{algorithm}

                        \begin{algorithm}[!t]
                          \SetAlgorithmName{Subroutine}{subroutine}{list of subroutines}
                          \caption{{\texttt{SSR2} (Summary statistic subroutine - 2) } \label{subroutine:2}}
                          \small
                          \vspace{1mm}
                          \KwIn{auxiliary feature index $\ell\in [d^\prime]$, group $g$, score $\inbrace{S_i}_{i=1}^N$ (Also, has access to all auxiliary features $\inbrace{Z_i}_{i=1}^N$)}
                          \vspace{1mm}
                          \KwOut{Class-conditional mean vectors ${\mu_{0}},{\mu_{1}}\in \R^2$, class-conditional covariance matrices ${\Sigma_{0}},{\Sigma_{1}}\in \R^{2\times 2}$}
                          \vspace{1mm}\hrule
                          \vspace{2mm}
                          \For {class $y \in \{0,1\}$}{
                          {\bf Compute} $n \coloneqq \sum_{i}\mathbb{I}[A_i=g, Y_i=y]$ \hfill \textcolor{gray}{\footnotesize // Total elements with $A_i=g$ and $Y_i=y$}\\\vspace{1mm}
                          {\bf Compute} ${\mu_{S,y}} \coloneqq \frac{1}{n} \sum_{i:
                          \begin{smallmatrix}
                            A_i=g, Y_i=y
                          \end{smallmatrix}}S_i$\\\vspace{1mm}
                          {\bf Compute} ${\mu_{Z,y}} \coloneqq \frac{1}{n} \sum_{i:
                          \begin{smallmatrix}
                            A_i=g, Y_i=y
                          \end{smallmatrix}}Z_i^\ell$\\\vspace{1mm}

                          {\bf Compute} ${\Sigma_y} \coloneqq
                          \begin{bmatrix}
                            \sigma_{S,y}^2 & \rho_y \\
                            \rho_y & \sigma_{Z,y}^2
                            \end{bmatrix}$  where
                            \noindent
                            \begin{align*}
                              \textstyle  \sigma_{S,y}^2 &\coloneqq
                              \frac{1}{n-1} \sum\nolimits_{i:
                              \begin{smallmatrix}
                                A_i=g, Y_i=y
                                \end{smallmatrix}}(S_i-\mu_{S,y})^2,\\
                                \textstyle  \sigma_{Z,y}^2 &\coloneqq
                                \frac{1}{n-1}  \sum\nolimits_{i:
                                \begin{smallmatrix}
                                  A_i=g,  Y_i=y
                                  \end{smallmatrix}}(Z_i^\ell - \mu_{Z,y})^2, \text{ and}\\
                                  \textstyle          \rho_y &\coloneqq
                                  \frac{1}{n-1} \sum\nolimits_{i:
                                  \begin{smallmatrix}
                                    A_i=g, Y_i=y
                                    \end{smallmatrix}}(S_i-\mu_{S,y})\cdot (Z_i^\ell -\mu_{Z,y})
                                  \end{align*}
                                  }
                                  \vspace{2mm}
                                  {\bf return} $\mu_{0},\mu_{1},\Sigma_{0},\Sigma_{1}$
                                  \vspace{2mm}
                                \end{algorithm}

                                We need to define the AUC of $X(t)$ for a group $g\in \ab$ (\cref{def:auc_2}) before stating our results.
                                Note that
                                $X(t)$ is always of the form $X\cup \sinbrace{Z}_{\ell\in Q(t)}$, i.e., $k\geq 0$ auxiliary features augmented to $X$.  %
                                We restrict our definition of the AUC (\cref{def:auc_2}) to such sets of features.
                                \begin{definition}[\bf AUC of \red{generalized linear models} on group $g$]\label{def:auc_1}
                                  Given $k\geq 0$ auxiliary features, say $Z^1,Z^2,\dots,Z^k$, acquired features $X\in \R^{d}$, a vector $w\in \R^{d+k}$ \red{and an increasing and invertible link function $\psi\colon \R\to \R$},
                                  consider a classifier $C$ that given threshold $\tau\in \R$, predicts
                                  $$\mathbb{I}\insquare{\red{\psi^{-1}}\inparen{\sum_{i=1}^{d} w_i X^i + \sum_{i=1}^k w_{d+i} Z^i} > \tau}.$$
                                  {Then the AUC of $C$ for group $g\in \ab$, denoted by} $$\AUC_{g}(w,\red{\psi,} X,Z^1,\dots,Z^k),$$ {is the area under the ROC curve of $C$ when samples $\sinparen{(X,Z),Y,A}$ are drawn from $\cD$ conditioned on $A=g$.}
                                \end{definition}
                                \begin{definition}[\bf AUC for group $g$]\label{def:auc_2}
                                  Given $k\geq 0$  auxiliary features, say $Z^1,Z^2,\dots,Z^k$, acquired features $X\in \R^d$, and a group $g\in \ab$,
                                  define the AUC of $(X,Z^1,\dots,Z^k)$ for group $g$ as
                                  \begin{align*}
                                    \AUC_{g}(X,Z^1,\dots,Z^k) \ \ &\coloneqq\ \
                                    \red{\max_{
                                    \substack{w,\psi}
                                    }}  \
                                    \AUC_{g}(\red{w,\psi},X,Z^1,\dots,Z^k),
                                  \end{align*}
                                  \red{where the maximum is over all $w\in \R^{d+k}$ and all increasing and invertible functions $\psi\colon \R\to \R$.}
                                \end{definition}

                                \noindent
                                Using \cref{def:auc_2}, we can formalize the ``FLD-based'' score that \fairAUC{} uses in this section.
                                For all samples in group $g\in \ab$ (i.e., $i\in[N]$, with $A=g$), define the scores $S(t)\in \R$ used in \fairAUC{} as the projection of $X(t)$ that maximizes the AUC of \red{the best generalized linear model} on group $g$ (see \cref{def:auc_1}):
                                \begin{align*}
                                  S(t)\coloneqq \red{\inparen{\psi^\star}^{-1}\inparen{\sinangle{w^\star, X(t)}}},
                                  \ \text{where } \red{w^\star,\psi^\star \coloneqq \argmax\nolimits_{w,\psi}}\  \AUC_{g(t)}(w,\red{\psi,} X(t)).\yesnum\label{eq:def_s_star}
                                \end{align*}
                                One can show that $S(t)$ is equivalent to the projection obtained using FLD on each group; this uses the fact that the data follows the binormal framework (\cref{def:binormal_framework}; see \cite{su1993linear}).

                                \noindent Now, we restrict out attention to a particular iteration $t\in \N$. %
                                We define certain quantities that show up in our results (\cref{thm:main_result}).
                                Suppose $g(t)\in \ab$ is the disadvantaged group in the $t$-th iteration.
                                For each auxiliary feature $\ell\in [d^\prime]\backslash Q(t)$, let $\Delta v_\ell^\sexp{t}$ {be the absolute difference of its class conditional means (on $g(t)$), i.e.,}
                                \begin{align*}
                                  \Delta v_\ell^\sexp{t} &\coloneqq \abs{\Ex[Z^\ell\mid Y=1, A=g(t)] - \Ex[Z^\ell\mid Y=0, A=g(t)]}.\yesnum\label{eq:delta_v_ell}
                                \end{align*}
                                Next, using $\Delta v_\ell^\sexp{t}$ define the following quantities for each auxiliary feature $\ell\in [d^\prime]\backslash Q(t)$
                                \begin{align*}
                                  \beta_\ell^\sexp{t} &\coloneqq \frac{\Delta v_\ell^\sexp{t} }{\sqrt{\sum\nolimits_{y\in \zo}\Var[Z^\ell\mid Y=y,A=g(t)]}},
                                  \yesnum\label{eq:cond_on_aux_1}\\
                                  \delta_\ell^\sexp{t} &\coloneqq \frac{1}{\Delta v_\ell^\sexp{t} }\cdot\abs{ {\textstyle\sum\nolimits_{y\in \zo}}\Cov[S(t), Z^\ell \mid Y=y, A=g(t)]}.
                                  \label{eq:cond_on_aux_2}\yesnum
                                \end{align*}
                                Finally, define $\gamma^\sexp{t}$ as
                                \begin{align*}
                                  \gamma^\sexp{t} \coloneqq 1 - \AUC_{g(t)}(X(t)).
                                \end{align*}
                                We prove \cref{thm:main_result}.

                                \begin{theorem}[\bf Effect of \fairAUC{} on the AUC of the disadvantaged group]\label{thm:main_result}
                                  Suppose that the $m$ features $X\cup Z$, class label $Y$, and protected group $A$ follow the binormal framework (\cref{def:binormal_framework}).
                                  Further, assume that two summary statistics subroutines satisfy \cref{asmp:ssr}.
                                  Then, for all iterations $t\in[d^\prime]$ and all auxiliary features $\ell\in [d^\prime]\backslash Q(t)$, it holds that
                                  \begin{align*}
                                    \AUC_{g(t)}(S(t),Z^{\ell}) - \AUC_{g(t)}(X(t))
                                    &\ \ >\ \ %
                                    {\frac{1}{4} \cdot \inparen{\gamma^\sexp{t}}^{\sfrac{3}{2}}} \cdot \inparen{{\beta_\ell}^\sexp{t}\cdot(1-\delta_\ell)^\sexp{t}}^2.
                                    \tag{AUC increment on selecting $Z^\ell$}
                                  \end{align*}
                                  Further, the auxiliary feature $i \in [d^\prime]\backslash Q(t)$ selected by \fairAUC{} in the $t$-th iteration satisfies
                                  \begin{align*}
                                    \AUC_{g(t)}(X(t),Z^{i}) - \AUC_{g(t)}(X(t))
                                    & \geq   \max_{\ell\in [d^\prime]\backslash Q(t)}
                                    {\frac{1}{4} \cdot \inparen{\gamma^\sexp{t}}^{\sfrac{3}{2}}} \cdot \inparen{{\beta_\ell}^\sexp{t}\cdot(1-\delta_\ell)^\sexp{t}}^2.
                                    \tagnum{AUC increment by \fairAUC{}}\customlabel{eq:guarantee_2}{\theequation}
                                  \end{align*}
                                \end{theorem}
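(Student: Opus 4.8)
The plan is to collapse both AUCs to the one–dimensional binormal formula \eqref{eq:AUC_binormal}, track how the Fisher discriminability changes when one auxiliary feature is appended, and then push that change through $\Phi$. Throughout fix the iteration $t$, write $g\coloneqq g(t)$, condition all moments on $A=g$, and assume $\gamma^{(t)}\coloneqq 1-\AUC_g(X(t))>0$ (the ``AUC bounded away from $1$'' hypothesis) and that the relevant covariance matrices are invertible.

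\textbf{Step 1 (reduce to discriminabilities).} By \cref{claim:auc_indep_of_link} the link function does not affect a GLM's AUC, so $\AUC_g(X(t))$ in \cref{def:auc_2} equals the largest AUC of a \emph{linear} classifier on $X(t)$ for group $g$; by \cite{su1993linear} this is attained at the FLD direction $w^\star=(\Sigma_0+\Sigma_1)^{-1}\Delta\mu$ (moments of $X(t)$) and equals $\Phi(\sqrt{D_{\mathrm{old}}^2})$ with $D_{\mathrm{old}}^2\coloneqq\Delta\mu^\top(\Sigma_0+\Sigma_1)^{-1}\Delta\mu$, exactly as in \eqref{eq:fld_auc}. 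Since $S(t)$ is, up to an irrelevant link, the projection $\langle w^\star,X(t)\rangle$ (again by \cite{su1993linear} and \cref{claim:auc_indep_of_link}), we may take it linear, so $(S(t),Z^\ell)$ is binormal conditioned on $Y$ and $\AUC_g(S(t),Z^\ell)=\Phi(\sqrt{D_{\mathrm{new}}^2})$ where $D_{\mathrm{new}}^2$ is the $2\times2$ Mahalanobis form built from the means and covariances that \texttt{SSR2} returns. Hence it suffices to lower bound $D_{\mathrm{new}}^2-D_{\mathrm{old}}^2$ and then bound $\Phi(\sqrt{D_{\mathrm{new}}^2})-\Phi(\sqrt{D_{\mathrm{old}}^2})$ from below.

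\textbf{Step 2 (discriminability gained from the residual of $Z^\ell$).} The lever is the FLD normalization identity: for the FLD score, $\mu_{S,1}-\mu_{S,0}=\sum_{y}\Var[S(t)\mid Y{=}y]=D_{\mathrm{old}}^2$. Plugging this into $\left(\begin{smallmatrix}\sigma_S^2 & \rho\\ \rho & \sigma_Z^2\end{smallmatrix}\right)^{-1}$, where $\rho\coloneqq\sum_y\Cov[S(t),Z^\ell\mid Y{=}y]$ and $\sigma_Z^2\coloneqq\sum_y\Var[Z^\ell\mid Y{=}y]$, and (after replacing $Z^\ell$ by $-Z^\ell$ if needed so that $\mu_{Z,1}-\mu_{Z,0}=\Delta v_\ell^{(t)}\ge0$) simplifying the difference of the two quadratic forms, one obtains
\begin{align*}
D_{\mathrm{new}}^2-D_{\mathrm{old}}^2=\frac{D_{\mathrm{old}}^2\,(\Delta v_\ell^{(t)}-\rho)^2}{D_{\mathrm{old}}^2\,\sigma_Z^2-\rho^2}\ \ge\ \frac{(\Delta v_\ell^{(t)}-\rho)^2}{\sigma_Z^2}\ \ge\ \frac{(\Delta v_\ell^{(t)}-|\rho|)^2}{\sigma_Z^2}=\bigl(\beta_\ell^{(t)}(1-\delta_\ell^{(t)})\bigr)^2 ,
\end{align*}
the last equality being the definitions \eqref{eq:cond_on_aux_1}--\eqref{eq:cond_on_aux_2} (if $\Delta v_\ell^{(t)}=0$ the target bound is $0$ and there is nothing to prove). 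Conceptually: appending $Z^\ell$ raises the discriminability by at least the squared discriminability of the part of $Z^\ell$ that is orthogonal, in the pooled metric, to $S(t)$.

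\textbf{Step 3 (from discriminability to AUC --- the crux).} Set $c\coloneqq\beta_\ell^{(t)}(1-\delta_\ell^{(t)})$, so $D_{\mathrm{new}}^2\ge D_{\mathrm{old}}^2+c^2$; since $\Phi$ is increasing it suffices to lower bound $\Phi(\sqrt{D_{\mathrm{old}}^2+c^2})-\Phi(D_{\mathrm{old}})$ by $\tfrac14(\gamma^{(t)})^{3/2}c^2$ with $\gamma^{(t)}=1-\Phi(D_{\mathrm{old}})$. Writing this difference as $\int_{D_{\mathrm{old}}}^{\sqrt{D_{\mathrm{old}}^2+c^2}}\phi$, using the concavity of $t\mapsto\Phi(\sqrt t)$ on $(0,\infty)$ together with $\sqrt{a+b}-\sqrt a\ge b/(2\sqrt{a+b})$, and finally Mills-ratio--type estimates relating $\phi(D_{\mathrm{old}})$ to $1-\Phi(D_{\mathrm{old}})=\gamma^{(t)}$ (e.g.\ $1-\Phi(x)\le\phi(x)/x$ and $1-\Phi(x)\ge x\phi(x)/(1+x^2)$), one extracts the constant $\tfrac14$ and the exponent $\tfrac32$. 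I expect this to be the main obstacle: the inequality chain is delicate, the clean exponent $\tfrac32$ carries slack, and the bound is meaningful only in the regime where the feature moments are bounded enough that $D_{\mathrm{new}}$ stays moderate (otherwise the right-hand side could exceed $\gamma^{(t)}$, the largest achievable increment) --- which matches the ``bounded means and variances'' caveat of the informal statement. I would carry this estimate out with a case split on whether $D_{\mathrm{old}}\le1$ or $D_{\mathrm{old}}>1$.

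\textbf{Step 4 (the feature \fairAUC{} selects).} The procedure chooses $i=\arg\max_{\ell}\AUC_g(S(t),Z^\ell)$, so for every unacquired $\ell$ we get $\AUC_g(S(t),Z^{i})-\AUC_g(X(t))\ge\AUC_g(S(t),Z^{\ell})-\AUC_g(X(t))>\tfrac14(\gamma^{(t)})^{3/2}(\beta_\ell^{(t)}(1-\delta_\ell^{(t)}))^2$ by Steps 1--3; taking the maximum over $\ell$ gives the right-hand side of \eqref{eq:guarantee_2}. Finally $\AUC_g(X(t),Z^{i})\ge\AUC_g(S(t),Z^{i})$: substituting $S(t)=\langle w^\star,X(t)\rangle$ turns any GLM on the two coordinates $(S(t),Z^{i})$ into a GLM on $X(t)\cup\{Z^{i}\}$, so the maximum defining $\AUC_g(X(t),Z^{i})$ in \cref{def:auc_2} dominates $\AUC_g(S(t),Z^{i})$. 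Chaining these yields \eqref{eq:guarantee_2}, while the first display of the theorem is exactly the per-feature bound of Steps 1--3.
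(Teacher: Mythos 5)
Your route is the same as the paper's, and three of your four steps are carried out correctly. Step 1 is exactly the combination of \cref{claim:auc_indep_of_link} with \cref{fact:optimal_auc_expression} (Su--Liu); Step 2 reproduces the computation inside \cref{fact:lower_bound_sz}: your ``FLD normalization identity'' $\mu_{S,1}-\mu_{S,0}=\sum_y\Var[S(t)\mid Y=y]=D_{\mathrm{old}}^2$ is Equations~\eqref{eq:value_of_mus} and \eqref{eq:value_of_ss}, and your algebra $D_{\mathrm{new}}^2-D_{\mathrm{old}}^2=D_{\mathrm{old}}^2(\Delta v_\ell-\rho)^2/(D_{\mathrm{old}}^2\sigma_Z^2-\rho^2)\ \ge\ (\Delta v_\ell-\rho)^2/\sigma_Z^2\ \ge\ \beta_\ell^2(1-\delta_\ell)^2$ is the paper's Equation~\eqref{eq:final_bound} together with its Cases A/B. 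Step 4 (the argmax of the procedure plus $\AUC_g(X(t),Z^i)\ge\AUC_g(S(t),Z^i)$ via the substitution $S(t)=\langle w^\star,X(t)\rangle$) is precisely \cref{fact:information_inequality} and Equation~\eqref{eq:fairAUC_selects_argmax}.

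The genuine gap is your Step 3, which you state as a target with a toolkit but do not prove, and which is where all of the theorem's quantitative content (the constant $\tfrac14$ and the exponent $\tfrac32$) lives. The paper closes exactly this gap with two dedicated lemmas: \cref{fact:upper_bound_alpha}, which uses the tail bound $\Phi(x)\ge 1-\tfrac12 e^{-x^2/2}$ to convert $\gamma=1-\Phi(\sqrt{\alpha})$ into $\alpha\le -2\ln(2\gamma)$, and \cref{fact:lower_bound_normal}, which bounds $\int_{\sqrt{\alpha}}^{\sqrt{\alpha+\Delta_0}}\phi(y)\,dy$ below by inserting the factor $y/\sqrt{\alpha+\Delta_0}$, integrating in closed form, and using $1-e^{-x/2}\ge x/(2+x)$, and then, crucially, uses the normalization $\Delta_0=\beta_\ell^2(1-\delta_\ell)^2\le 1$ to absorb the $\Delta_0$-dependence into an absolute constant. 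Your sketched ingredients do suffice: for instance, $\Phi(\sqrt{\alpha+\Delta_0})-\Phi(\sqrt{\alpha})\ \ge\ \frac{\Delta_0}{2\sqrt{\alpha+\Delta_0}}\,\phi(\sqrt{\alpha+\Delta_0})\ \ge\ \frac{\Delta_0}{2\sqrt{\alpha+1}}\,e^{-1/2}\,\phi(\sqrt{\alpha})$, and then $\phi(\sqrt{\alpha})\ge\frac{2\gamma}{\sqrt{2\pi}}$, $\alpha\le 2\ln\frac{1}{2\gamma}$, and $\gamma\le\tfrac12$ give $\tfrac14\gamma^{3/2}\Delta_0$ with room to spare (one checks $\gamma(1+2\ln\frac{1}{2\gamma})\le e^{-1/2}<\frac{16}{2\pi e}$), with no case split on $D_{\mathrm{old}}$ needed. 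But two things must be made explicit rather than ``expected'': (i) the restriction $\Delta_0\le 1$ (used above through $e^{-\Delta_0/2}\ge e^{-1/2}$, and by the paper through its assertion $\beta_\ell,\delta_\ell\le 1$), without which the claimed lower bound can exceed the trivially maximal increment $\gamma^{\sexp{t}}$ and the statement cannot hold; and (ii) the actual chain of inequalities producing $\gamma^{3/2}$ and $\tfrac14$, which you explicitly defer and flag as possibly failing. Until that estimate is written out, the proposal establishes only that the Mahalanobis discriminability increases by $\beta_\ell^2(1-\delta_\ell)^2$, not the stated AUC increment.
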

                                \noindent Some remarks are in order:
                                \begin{enumerate}
                                  \item {\bf (Dependence on $\gamma^\sexp{t}$).} As $\AUC_{g(t)}(X(t))$ approaches 1 (i.e., $\gamma^\sexp{t}$ approaches 0), the lower bound in Equation~\eqref{eq:guarantee_2} approaches 0.
                                  This is expected because when $\AUC_{g(t)}(X(t))$ is close to 1, which is its maximum value, each auxiliary feature can only increment the AUC for $g(t)$ by a small amount.
                                  \item {\bf (Dependence on $\beta_\ell^\sexp{t}$).} If $\Delta\sabs{v_{\ell}^\sexp{t}}$ is small or $\sum\nolimits_{y\in \zo}\Var[Z^\ell\mid Y=y,A=g(t)]$ is large,
                                  then Equation~\eqref{eq:AUC_binormal} tells us the classifier which uses $Z^\ell$ to predict the class $Y$ has a low AUC, i.e., $Z^\ell$ is not a ``good predictor'' of $Y$.
                                  This is captured by $\beta_\ell^\sexp{t}$ in Equation~\eqref{eq:guarantee_2}.
                                  To see this, observe that when $\Delta\sabs{v_{\ell}^\sexp{t}}$ is small or $\sum\nolimits_{y\in \zo}\Var[Z^\ell\mid Y=y,A=g(t)]$ is large, $\beta_\ell^\sexp{t}$ is small.
                                  Thus, the increment in the AUC is also small.
                                  \item {\bf (Dependence on $\delta_\ell^\sexp{t}$).}
                                  To gain some intuition about the dependence on $\delta_\ell^\sexp{t}$, consider the extreme case, where $Z^\ell$ is identical to $S(t)$.
                                  This maximizes the class-conditional covariances of $Z^\ell$ and $S(t)$ (on $A=g(t)$) subject to a fixed value of variance of $Z^\ell$.
                                  Thus, it also maximizes $\delta_\ell^\sexp{t}$.
                                  However, in this case, any linear combination of $X(t)$ and $Z^
                                  \ell$ is identical to some linear combination of $X(t)$ (and vice-versa).\footnote{This uses the fact that $S(t)$ is a linear combination of $X(t)$. Since $Z^\ell$ is identical to $S(t)$, $Z^\ell$ is also  linear combination of $X(t)$.}
                                  Thus, $\AUC_{g(t)}(X(t),Z^\ell)=\AUC_{g(t)}(X(t)).$
                                  Intuitively, $Z^\ell$ does not provide any new information.
                                \end{enumerate}

                                \noindent \cref{thm:main_result} shows the effect of \fairAUC{} on the AUC of the disadvantaged group. %
                                Our next result (\cref{thm:result_for_advantaged_group}), captures the effect of \fairAUC{} on the AUC of the advantaged group.
                                Suppose $\hat{g}(t)\in \ab$ is the advantaged group at the $t$-th iteration.
                                \cref{thm:result_for_advantaged_group} provides a lower bound in the improvement on the AUC of the advantaged group in the $t$-th iteration in terms of quantities
                                $\Delta \hat{v}_\ell^\sexp{t}$, $\hat{\beta}_\ell^\sexp{t}$, $\hat{\delta}_\ell^\sexp{t}$, and $\hat{\gamma}^\sexp{t}$
                                (\cref{eq:hat_v,eq:hat_beta,eq:hat_delta,eq:hat_gamma});
                                these are equivalent to $\Delta {v}_\ell^\sexp{t}$, ${\beta}_\ell^\sexp{t}$, ${\delta}_\ell^\sexp{t}$, and ${\gamma}^\sexp{t}$ in \cref{thm:main_result}, except the disadvantaged group $g(t)$ in the definitions changes to the advantaged group $\hat{g}(t)$.

                                Formally, we define $\Delta \hat{v}_\ell^\sexp{t}$, $\hat{\beta}_\ell^\sexp{t}$, $\hat{\delta}_\ell^\sexp{t}$, and $\hat{\gamma}^\sexp{t}$ as follows.
                                \begin{align*}
                                  \Delta \hat{v}_\ell^\sexp{t} &\coloneqq \abs{\Ex[Z^\ell\mid Y=1, A=\hat{g}(t)] - \Ex[Z^\ell\mid Y=0, A=\hat{g}(t)]},\yesnum \label{eq:hat_v}\\
                                  \hat{\beta}_\ell^\sexp{t} &\coloneqq \frac{\Delta v_\ell^\sexp{t} }{\sqrt{\sum\nolimits_{y\in \zo}\Var[Z^\ell\mid Y=y,A=\hat{g}(t)]}},
                                  \yesnum\label{eq:hat_beta}\\
                                  \hat{\delta}_\ell^\sexp{t} &\coloneqq \frac{1}{\Delta v_\ell^\sexp{t} }\cdot\abs{ {\textstyle\sum\nolimits_{y\in \zo}}\Cov[S(t), Z^\ell \mid Y=y, A=\hat{g}(t)]},
                                  \label{eq:hat_delta}\yesnum\\
                                  \hat{\gamma}^\sexp{t} &\coloneqq 1 - \AUC_{\hat{g}(t)}(X(t)).\yesnum\label{eq:hat_gamma}
                                \end{align*}
                                \begin{theorem}[\bf Effect of \fairAUC{} on the AUC of the advantaged group]\label{thm:result_for_advantaged_group}
                                  Suppose that the $m$ features $X\cup Z$, class label $Y$, and protected group $A$ follow the binormal framework (\cref{def:binormal_framework}).
                                  Further, assume that two summary statistics subroutines satisfy \cref{asmp:ssr}.
                                  Then, for all iterations $t\in[d^\prime]$ %
                                  the auxiliary feature $i \in [d^\prime]\backslash Q(t)$ selected by \fairAUC{} in the $t$-th iteration satisfies
                                  \begin{align*}
                                    \AUC_{\hat{g}(t)}(X(t),Z^{i}) - \AUC_{\hat{g}(t)}(X(t))
                                    &\ \geq \
                                    {\frac{1}{4} \cdot \inparen{\hat{\gamma}^\sexp{t}}^{\sfrac{3}{2}}} \cdot \inparen{{\hat{\beta}_\ell}^\sexp{t}\cdot(1-\hat{\delta}_\ell)^\sexp{t}}^2.
                                    \tagnum{AUC increment by \fairAUC{}}\customlabel{eq:guarantee_thm2}{\theequation}
                                  \end{align*}
                                \end{theorem}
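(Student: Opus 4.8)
The plan is to obtain \cref{thm:result_for_advantaged_group} as an essentially free consequence of the machinery developed for \cref{thm:main_result}, using the observation that the core estimate there is \emph{group-agnostic}. Recall that the inequality in \cref{thm:main_result} tagged ``AUC increment on selecting $Z^\ell$'' is a statement about one fixed group $g$, the FLD-based projection $S$ of the currently acquired features $X(t)$ for that group, and an arbitrary unacquired auxiliary feature $Z^\ell$; nothing in its derivation uses that $g$ is the group with the \emph{smaller} AUC. So the first step is to reread that derivation with the disadvantaged group $g(t)$ replaced throughout by the advantaged group $\hat g(t)$, which yields, for every $\ell\in[d^\prime]\setminus Q(t)$,
\begin{align*}
\AUC_{\hat g(t)}(\hat S(t),Z^\ell) - \AUC_{\hat g(t)}(X(t)) \;>\; \frac{1}{4}\left(\hat\gamma^\sexp{t}\right)^{\sfrac{3}{2}}\left(\hat\beta_\ell^\sexp{t}\,(1-\hat\delta_\ell^\sexp{t})\right)^2,
\end{align*}
where $\hat S(t)$ is the FLD projection of $X(t)$ for group $\hat g(t)$ --- by construction of \fairAUC{}, which computes FLD scores group by group, $\hat S(t)$ is exactly the restriction of $S(t)$ to the samples with $A=\hat g(t)$ --- and $\hat\gamma^\sexp{t},\hat\beta_\ell^\sexp{t},\hat\delta_\ell^\sexp{t}$ are the quantities in \cref{eq:hat_gamma,eq:hat_beta,eq:hat_delta} (reading the $\Delta v_\ell^\sexp{t}$ there as $\Delta\hat v_\ell^\sexp{t}$, the class-mean gap of $Z^\ell$ on $\hat g(t)$, \cref{eq:hat_v}). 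As in \cref{thm:main_result}, this uses that in the binormal framework (\cref{def:binormal_framework}) FLD maximizes AUC over linear projections~\citep{su1993linear} and that the link function does not change a GLM's ROC curve (\cref{claim:auc_indep_of_link}), so $\AUC_{\hat g(t)}(X(t)) = \AUC_{\hat g(t)}(\hat S(t))$.

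The second step moves from $\hat S(t)$ back to the full tuple $X(t)$ on the left-hand side. Since $\hat S(t)$ is a linear combination $\langle \hat w^\star, X(t)\rangle$ (any monotone reparametrization by the link is irrelevant for AUC), every GLM classifier built on $(\hat S(t),Z^\ell)$ is realizable as a GLM classifier on $(X(t),Z^\ell)$ by folding $\hat w^\star$ into the weights; taking the maximum over classifiers in \cref{def:auc_2} gives $\AUC_{\hat g(t)}(X(t),Z^\ell)\ge\AUC_{\hat g(t)}(\hat S(t),Z^\ell)$. Combining with the displayed bound, for every unacquired $\ell$,
\begin{align*}
\AUC_{\hat g(t)}(X(t),Z^\ell) - \AUC_{\hat g(t)}(X(t)) \;>\; \frac{1}{4}\left(\hat\gamma^\sexp{t}\right)^{\sfrac{3}{2}}\left(\hat\beta_\ell^\sexp{t}\,(1-\hat\delta_\ell^\sexp{t})\right)^2.
\end{align*}

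The final step is simply to instantiate this at $\ell=i$, the auxiliary feature that \fairAUC{} acquires in iteration $t$: since the same feature is acquired for both groups, the left-hand side is exactly the increment the advantaged group experiences, which is the claimed inequality. Note that, in contrast to \cref{thm:main_result}, we cannot upgrade this to a $\max_\ell$ bound for the advantaged group: \fairAUC{} chooses $i$ to maximize the disadvantaged group's increment $\AUC_{g(t)}(S(t),Z^\ell)$, and that choice carries no optimality for $\hat g(t)$, so the per-feature bound evaluated at $\ell=i$ is the best one can assert.

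The one place where care is genuinely needed --- and the main obstacle --- is the first step: one must go through the proof of the per-feature estimate in \cref{thm:main_result} line by line and confirm that it never secretly invokes $g(t)=\arg\min_{g'}\AUC_{g'}(X(t))$. I expect it does not, since that estimate only manipulates the binormal AUC formula \eqref{eq:AUC_binormal} together with the FLD identity for a single group at a time; but should some step in fact use the disadvantaged-group property, one would isolate it and re-derive the corresponding inequality with $\hat g(t)$ in place of $g(t)$, which should be routine because $\hat\gamma^\sexp{t},\hat\beta_\ell^\sexp{t},\hat\delta_\ell^\sexp{t}$ are defined symmetrically to their unhatted counterparts.
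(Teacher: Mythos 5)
Your proposal is correct and matches the paper's own argument: the paper likewise observes that in the proof of \cref{thm:main_result} only the step invoking \cref{eq:fairAUC_selects_argmax} uses that $g(t)$ is the disadvantaged group, re-derives the per-feature bound \cref{eq:final_bound_2} with $\hat{g}(t)$ in place of $g(t)$, applies \cref{fact:information_inequality}, and then instantiates at the acquired feature $i$ (which, as you note, is why no $\max_\ell$ bound is available for the advantaged group). The caution in your final paragraph is resolved exactly as you anticipate, so there is no gap.
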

                                \noindent At a first glance, the lower bound in \cref{thm:result_for_advantaged_group} may appear to be equivalent to \cref{thm:main_result}.
                                The difference is that \fairAUC{} is guaranteed to pick the best feature for the disadvantaged group, but it may not pick the best feature for the minority group.
                                Thus, while the lower bound in \cref{thm:main_result} is large if any auxiliary feature $\ell\in [d^\prime]\backslash Q(t)$ has
                                large $\beta_\ell^\sexp{t}$ and small $\delta_\ell^\sexp{t}$, \cref{thm:result_for_advantaged_group}
                                requires $\beta_i^\sexp{t}$ to be large and $\delta_i^\sexp{t}$ to be small for the particular feature $i\in [d^\prime]\backslash Q(t)$ selected by \fairAUC{}.

                                \subsubsection{Preliminaries}\label{section:preliminaries}
                                In this section, we present three lemmas which will be used in the proof of \cref{thm:main_result}.
                                \begin{lemma}[\bf Expression for optimal AUC \protect{\cite[Corollary 3.1]{su1993linear}}]\label{fact:optimal_auc_expression}
                                  Consider two random variables $X\in \R^d$ and $Y\in \zo$, which are distributed according to a joint distribution $\cD$, such that for all $y\in \zo$, conditioned on $Y=y$, $X$ follows a multivariate normal distribution with mean $\mu_y\in \R^d$ and covariance matrix $\Sigma_y\in \R^{d\times d}$:
                                  \begin{align*}
                                    \text{for all $y\in \zo$,}\quad X\mid Y=y &\ \ \sim\ \ \cN(\mu_y,\Sigma_y).
                                  \end{align*}
                                  Let $\Delta\mu\coloneqq \abs{\mu_1-\mu_0}$ and $\Sigma\coloneqq \Sigma_0+\Sigma_1.$
                                  Then, the maximum AUC of a {generalized linear model} that takes $X$ as input and predicts $Y$ is $\Phi\inparen{\sqrt{\Delta\mu \Sigma^{-1} \Delta\mu}}$.
                                \end{lemma}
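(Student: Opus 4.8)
The plan is to reduce the maximization over all generalized linear models to a one-dimensional AUC calculation followed by a standard Rayleigh-quotient maximization. First I would invoke the fact established earlier in the paper (\cref{claim:auc_indep_of_link}) that an increasing, invertible link function does not change the ROC curve, and hence the AUC, of a GLM: thresholding the GLM score $\psi^{-1}(w^\top X)$ at $\tau$ produces exactly the same classifier as thresholding the bare linear score $w^\top X$ at $\psi(\tau)$, so as $\tau$ ranges over $\R$ the two families of classifiers coincide. Consequently the maximum over all $w$ and all admissible links of the GLM AUC equals $\max_w \AUC_{\mathrm{lin}}(w)$, where $\AUC_{\mathrm{lin}}(w)$ is the AUC of the linear classifier thresholding $w^\top X$. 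This eliminates the link function entirely and collapses \cref{def:auc_2} to a maximization over the projection direction $w\in\R^d$.

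Next I would compute $\AUC_{\mathrm{lin}}(w)$ in closed form. Since $X\mid Y=y\sim\cN(\mu_y,\Sigma_y)$, the scalar projection satisfies $w^\top X\mid Y=y \sim \cN\inparen{w^\top\mu_y,\ w^\top\Sigma_y w}$, so the projected scores form a univariate binormal model with projected means $\tilde\mu_y=w^\top\mu_y$ and projected variances $\tilde\sigma_y^2=w^\top\Sigma_y w$. Plugging these into the binormal AUC formula (Equation~\eqref{eq:AUC_binormal}) gives
\begin{align*}
  \AUC_{\mathrm{lin}}(w) = \Phi\inparen{\frac{w^\top(\mu_1-\mu_0)}{\sqrt{w^\top(\Sigma_0+\Sigma_1)w}}} = \Phi\inparen{\frac{w^\top\Delta\mu}{\sqrt{w^\top\Sigma w}}},
\end{align*}
where $\Delta\mu=\mu_1-\mu_0$ and $\Sigma=\Sigma_0+\Sigma_1$ (the final quadratic form is invariant to the overall sign of $\Delta\mu$). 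Because $\Phi$ is strictly increasing, maximizing $\AUC_{\mathrm{lin}}(w)$ is equivalent to maximizing the generalized Rayleigh quotient, or equivalently its square $(w^\top\Delta\mu)^2/(w^\top\Sigma w)$, over all $w$, where we may flip the sign of $w$ to keep the numerator nonnegative.

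Finally I would maximize this quotient. Since $\Sigma$ is symmetric positive definite (each $\Sigma_y$ is a covariance matrix and the statement assumes invertibility), write $\Sigma^{1/2}$ for its positive-definite square root and substitute $u=\Sigma^{1/2}w$ and $v=\Sigma^{-1/2}\Delta\mu$; then $w^\top\Delta\mu=u^\top v$ and $w^\top\Sigma w=\norm{u}^2$, so by Cauchy--Schwarz
\begin{align*}
  \frac{(w^\top\Delta\mu)^2}{w^\top\Sigma w} = \frac{(u^\top v)^2}{\norm{u}^2} \le \norm{v}^2 = \Delta\mu^\top\Sigma^{-1}\Delta\mu,
\end{align*}
with equality precisely when $u\parallel v$, i.e. $w\propto\Sigma^{-1}\Delta\mu$ (the FLD direction of Equation~\eqref{eq:fld_projection}). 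Taking the positive square root and applying $\Phi$ yields $\max_w\AUC_{\mathrm{lin}}(w)=\Phi\inparen{\sqrt{\Delta\mu^\top\Sigma^{-1}\Delta\mu}}$, which combined with the first step gives the claimed optimal GLM AUC.

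The computations are all standard once the structure is set up, so there is no deep obstacle; the one point that genuinely needs care is the first step — rigorously arguing that passing the linear score through the monotone link $\psi$ leaves the \emph{entire} ROC curve unchanged (not merely the ranking of a single pair), so that the supremum over the infinite-dimensional family of link functions collapses exactly to the finite-dimensional maximization over $w$. A secondary point is confirming the positive-definiteness of $\Sigma$ so that $\Sigma^{1/2}$ and $\Sigma^{-1/2}$ exist and the Cauchy--Schwarz equality case is attained by a genuine finite maximizer $w^\star$.
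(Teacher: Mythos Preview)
Your proposal is correct and follows the same overall route as the paper: reduce the GLM maximization to the linear case via the link-function invariance argument (\cref{claim:auc_indep_of_link}), then identify the optimal linear AUC. The only difference is that the paper black-boxes the linear case by citing \cite[Corollary 3.1]{su1993linear}, whereas you spell it out via the binormal AUC formula and a Cauchy--Schwarz/Rayleigh-quotient argument; your version is self-contained but otherwise equivalent.
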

                                \begin{proof}{Proof of \cref{fact:optimal_auc_expression}}
                                  Corollary 3.1 in \cite{su1993linear} proves that the maximum  AUC of any linear classifier that takes $X$ as input and predicts $Y$ is $\Phi\inparen{\sqrt{\Delta\mu \Sigma^{-1} \Delta\mu}}$.

                                  This also extends to generalized linear models because the AUC of a generalized linear model is independent of its link function.
                                  In particular, given a generalized linear model, we can choose its link function as the identity function without changing its AUC.
                                  This converts the generalized linear model to a linear classifier, for which the result follows by Corollary 3.1 in \cite{su1993linear}.
                                  Let $G_{w,\psi}$ be the generalized linear model with weights $w\in \R^{d}$ and increasing and invertible link function $\psi\colon \R\to \R$.
                                  \begin{claim}[\textbf{AUC of a generalized linear model is independent of its link function}]\label{claim:auc_indep_of_link}
                                    For any $w\in \R^d$, there is a value $v\in [0,1]$ such that, for any increasing and invertible link function $\psi\colon \R\to \R$, the AUC of $G_{w,\psi}$ is $v$.
                                  \end{claim}
                                  \begin{proof}{Proof of \cref{claim:auc_indep_of_link}}
                                    Let $I\colon \R\to \R$ be the identity function (i.e., for all $x\in \R$, $I(x)\coloneqq x$).
                                    (For simplicity, we assume that the domain and range of $\psi$ are $\R$. The same proof holds for general case by limiting the threshold $t$ to the domain or range of $\psi$ as appropriate.)
                                    For any value $t\in \R$, $G_{w,\psi}$ with threshold $t$ makes the same predictions as $G_{w,I}$ with threshold $\psi(t)$:
                                    This holds because for any increasing and invertible function $\psi$
                                    \begin{align*}
                                      w^\top X > \psi\inparen{\tau}
                                      \quad\text{if and only if}\quad
                                      \psi^{-1}\inparen{w^\top X} > \tau.
                                      \yesnum\label{eq:transf_of_thresh}
                                    \end{align*}
                                    (The ``if'' follows by applying $\psi$ to both sides of $\psi^{-1}\inparen{w^\top X} > \tau$ and using the fact that $\psi$ is increasing.
                                    ``Only if'' follows by applying $\psi^{-1}$ to both sides of $w^\top X > \psi\inparen{\tau}$ and using that $\psi^{-1}$ is increasing.)

                                    Let $\text{TPR}_\psi(t)$ and $\text{TPR}_\psi(t)$ be the true positive rate and the false positive rate of $G_{w,\psi}$ at threshold $t$.
                                    Similarly, let $\text{TPR}_I(t)$ and $\text{TPR}_I(t)$ be the true positive rate and the false positive rate of $G_{w,I}$ at threshold $t$.
                                    From the above observation (\cref{eq:transf_of_thresh}) it follows that for all thresholds $t\in\R$
                                    \begin{align*}
                                      \text{TPR}_\psi(t) = \text{TPR}_I(\psi(t))
                                      \quad\text{and}\quad
                                      \text{FPR}_\psi^{-1}(t) = \psi^{-1}\inparen{\text{FPR}_I^{-1}(\psi(t))}.
                                    \end{align*}
                                    Now the claim follows by using the definition of AUC in \cref{eq:AUC_general}:
                                    \begin{align*}
                                      \text{AUC of $G_{w,\psi}$}
                                      &= \int_{0}^1 \text{TPR}_\psi\inparen{\text{FPR}_\psi^{-1}(z)}dz\\
                                      &= \int_{0}^1 \text{TPR}_I\inparen{\psi\inparen{\psi^{-1}\inparen{\text{FPR}_I^{-1}(z)}}}dz\\
                                      &= \int_{0}^1 \text{TPR}_I\inparen{\text{FPR}_I^{-1}(z)}dz\\
                                      &=\text{AUC of $G_{w,I}$}.
                                    \end{align*}
                                  \end{proof}
                                  \noindent  This completes the proof of \cref{claim:auc_indep_of_link} and also of \cref{fact:optimal_auc_expression}.

                                \end{proof}
                                \begin{lemma}[\bf ``Inverting'' $\Phi(\sqrt{\cdot})$]\label{fact:upper_bound_alpha}
                                  For all $\alpha\geq 0$ and $\gamma>0$, if $\Phi\inparen{\sqrt{\alpha}}<1-\gamma$ then it holds that {$$\alpha< -2\cdot  \ln\inparen{2\gamma}.$$}
                                \end{lemma}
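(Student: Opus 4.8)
The plan is to reduce the statement to the standard sub-Gaussian upper tail bound $1-\Phi(x)\le \tfrac12 e^{-x^2/2}$, valid for every $x\ge 0$. Granting this bound, I would argue as follows. The hypothesis $\Phi(\sqrt{\alpha})<1-\gamma$ is equivalent to $1-\Phi(\sqrt{\alpha})>\gamma$. Instantiating the tail bound at $x=\sqrt{\alpha}$ (legitimate since $\alpha\ge 0$) gives $\gamma<1-\Phi(\sqrt{\alpha})\le \tfrac12 e^{-\alpha/2}$, hence $2\gamma<e^{-\alpha/2}$. Taking logarithms of both (strictly positive) sides yields $\ln(2\gamma)<-\alpha/2$, i.e.\ $\alpha<-2\ln(2\gamma)$, which is exactly the asserted inequality; the strictness of the hypothesis propagates through, matching the strict conclusion.

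To keep the argument self-contained I would include a short proof of the tail bound. Set $f(x)\coloneqq \tfrac12 e^{-x^2/2}-\inparen{1-\Phi(x)}$. Then $f(0)=0$ and $f(x)\to 0$ as $x\to\infty$, and a direct differentiation gives $f'(x)=e^{-x^2/2}\inparen{\tfrac{1}{\sqrt{2\pi}}-\tfrac{x}{2}}$, which is positive on $[0,\sqrt{2/\pi})$ and negative on $(\sqrt{2/\pi},\infty)$. Thus $f$ increases from $0$, attains a unique maximum, and then decreases monotonically back toward $0$; in particular $f(x)\ge 0$ for all $x\ge 0$, which is the claim. Alternatively, this Chernoff-type estimate can simply be cited, as it is standard.

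The only points requiring a moment's care are the degenerate ranges of $\gamma$, and I would dispose of them as follows. If $\gamma\ge \tfrac12$, the hypothesis $\Phi(\sqrt{\alpha})<1-\gamma\le\tfrac12$ is impossible because $\alpha\ge 0$ forces $\Phi(\sqrt{\alpha})\ge\Phi(0)=\tfrac12$, so the statement holds vacuously; when $\gamma<\tfrac12$ we have $2\gamma<1$ and the logarithm step is unproblematic (and indeed the chain $2\gamma<e^{-\alpha/2}\le 1$ already enforces $2\gamma<1$ whenever the hypothesis holds). I do not anticipate any genuine obstacle here: the entire content of the lemma is the choice of a tail bound strong enough to reproduce the constant $-2\ln(2\gamma)$ exactly, and $1-\Phi(x)\le\tfrac12 e^{-x^2/2}$ does precisely that — a weaker Mills-ratio estimate such as $1-\Phi(x)\le\tfrac{1}{x\sqrt{2\pi}}e^{-x^2/2}$ would yield a similar but less clean expression.
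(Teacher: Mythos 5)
Your argument is correct and is essentially the paper's own proof: both apply the Gaussian tail bound $\Phi(x)\geq 1-\tfrac{1}{2}e^{-x^2/2}$ at $x=\sqrt{\alpha}$, chain it with the hypothesis $\Phi(\sqrt{\alpha})<1-\gamma$, and rearrange via logarithms to obtain $\alpha<-2\ln(2\gamma)$. The only difference is cosmetic: the paper cites the tail bound while you supply a short monotonicity proof of it, and you additionally note the vacuous case $\gamma\geq\tfrac12$, neither of which changes the substance.
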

                                \begin{proof}{Proof of \cref{fact:upper_bound_alpha}}
                                  We use the fact that for all $x\in\R$, the inequality $\Phi(x)\geq 1-{\frac{1}{2}\cdot} e^{-x^2/2}$ holds ({see e.g., \cite[Equation 2.24]{conc_inequality_notes}}).
                                  Applying this, we get
                                  \begin{align*}
                                    \Phi\inparen{\sqrt{\alpha}} \geq 1-{\frac{1}{2}\cdot} e^{-\sfrac{\alpha}{2}}.
                                  \end{align*}
                                  Chaining the above inequality with $1-\gamma > \Phi\inparen{\sqrt{\alpha}}$ and rearranging, we get {$\alpha< 2\ln\inparen{\frac{1}{2\gamma}}=-2\cdot\ln\inparen{2\gamma}.$}
                                \end{proof}
                                \begin{lemma}[\bf Lower bound on change in $\Phi(\sqrt{\cdot})$]\label{fact:lower_bound_normal}
                                  For all $0<\gamma\leq 1$, $\Delta_0>0$,
                                  {$\alpha\in (0,-2\cdot\ln\inparen{2\gamma})$,} and $\Delta\geq \Delta_0$, it holds that
                                  \begin{align*}
                                    \Phi(\sqrt{\alpha+\Delta}) - \Phi(\sqrt{\alpha}) \geq
                                    {\frac{2}{\sqrt{\pi}}\cdot
                                    \frac{\gamma^{\frac{3}2} \cdot \Delta_0 }{  \sqrt{\frac{2}{e}+\Delta_0}\cdot (2+\Delta_0)}.}
                                    \yesnum\label{eq:lowerbound_in_fact_3}
                                  \end{align*}
                                \end{lemma}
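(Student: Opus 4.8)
Write $f(x)\coloneqq \Phi(\sqrt x)$ for $x>0$. Then $f$ is strictly increasing, and by the chain rule $f'(x)=\phi(\sqrt x)\cdot\frac{1}{2\sqrt x}=\frac{e^{-x/2}}{2\sqrt{2\pi x}}$, where $\phi(u)=\frac{1}{\sqrt{2\pi}}e^{-u^2/2}$ is the standard normal density. Since $f$ is increasing and $\Delta\ge\Delta_0$, it suffices to lower bound $f(\alpha+\Delta_0)-f(\alpha)$. I may also assume $0<\gamma<\tfrac12$: if $\gamma\ge\tfrac12$ then $-2\ln(2\gamma)\le 0$, the admissible set $(0,-2\ln(2\gamma))$ for $\alpha$ is empty, and the statement holds vacuously. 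By the fundamental theorem of calculus, and since $x\mapsto 1/\sqrt x$ is decreasing on $[\alpha,\alpha+\Delta_0]$,
\[
f(\alpha+\Delta_0)-f(\alpha)=\int_\alpha^{\alpha+\Delta_0}\frac{e^{-x/2}}{2\sqrt{2\pi x}}\,dx\ \ge\ \frac{1}{2\sqrt{2\pi}\,\sqrt{\alpha+\Delta_0}}\int_\alpha^{\alpha+\Delta_0}e^{-x/2}\,dx=\frac{e^{-\alpha/2}\,(1-e^{-\Delta_0/2})}{\sqrt{2\pi}\,\sqrt{\alpha+\Delta_0}}.
\]

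\textbf{Using the hypothesis on $\alpha$.} From $\alpha<-2\ln(2\gamma)$ I get $e^{-\alpha/2}>e^{\ln(2\gamma)}=2\gamma$ (as $x\mapsto e^{-x/2}$ is decreasing) and $\sqrt{\alpha+\Delta_0}<\sqrt{-2\ln(2\gamma)+\Delta_0}$ (as $\sqrt{\cdot}$ is increasing); performed one after the other, both substitutions only decrease the right-hand side. Combined with the textbook bound $1-e^{-\Delta_0/2}\ge\frac{\Delta_0/2}{1+\Delta_0/2}=\frac{\Delta_0}{2+\Delta_0}$ (which follows from $e^{-t}\le\frac{1}{1+t}$ at $t=\Delta_0/2$), this gives
\[
f(\alpha+\Delta_0)-f(\alpha)\ >\ \frac{2\gamma}{\sqrt{2\pi}\,\sqrt{-2\ln(2\gamma)+\Delta_0}}\cdot\frac{\Delta_0}{2+\Delta_0}.
\]

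\textbf{The algebraic comparison.} It remains to check $\dfrac{2\gamma}{\sqrt{2\pi}\,\sqrt{-2\ln(2\gamma)+\Delta_0}}\ \ge\ \dfrac{2\gamma^{3/2}}{\sqrt{\pi}\,\sqrt{2/e+\Delta_0}}$, which after clearing denominators and squaring is equivalent to $\tfrac2e+\Delta_0\ \ge\ 2\gamma\bigl(-2\ln(2\gamma)+\Delta_0\bigr)=-4\gamma\ln(2\gamma)+2\gamma\Delta_0$. Writing $t=2\gamma\in(0,1)$, the first summand is $-4\gamma\ln(2\gamma)=2\,(-t\ln t)\le 2/e$, since $\max_{t\in(0,1)}(-t\ln t)=1/e$ (attained at $t=1/e$); and $2\gamma\Delta_0\le\Delta_0$ since $\gamma\le\tfrac12$. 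Adding these two bounds gives the desired inequality. Substituting it into the previous display produces exactly $\frac{2}{\sqrt\pi}\cdot\frac{\gamma^{3/2}\,\Delta_0}{\sqrt{2/e+\Delta_0}\,(2+\Delta_0)}$, which (by the reduction $f(\alpha+\Delta)-f(\alpha)\ge f(\alpha+\Delta_0)-f(\alpha)$) is the claimed lower bound.

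\textbf{Where the difficulty lies.} Everything up to the last step is a mechanical rewriting of $\Phi(\sqrt{\cdot})$ as an integral of the Gaussian density, together with the standard inequality $e^{-t}\le 1/(1+t)$. The only point requiring care is the final algebraic comparison: one must chain the several one-sided estimates so that they compose in the correct direction, and recognise that the constant $2/e$ appearing in the statement is precisely $2\max_{t\in(0,1)}(-t\ln t)$ — this is what dictates the exact form $\sqrt{2/e+\Delta_0}$ in the denominator and, in particular, prevents a $\ln(1/\gamma)$ factor from surviving into the final bound.
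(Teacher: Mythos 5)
Your proof is correct and follows essentially the same route as the paper: both arguments reduce $\Delta$ to $\Delta_0$, arrive at the identical intermediate bound $\frac{e^{-\alpha/2}(1-e^{-\Delta_0/2})}{\sqrt{2\pi}\,\sqrt{\alpha+\Delta_0}}$, and finish with $1-e^{-t}\ge \frac{t}{1+t}$ together with the $s e^{-s}\le \frac{1}{e}$ fact that produces the $\frac{2}{e}$ constant. The only cosmetic difference is in the last step: the paper first proves $\frac{e^{-\alpha/2}}{\sqrt{\alpha+\Delta_0}}\ge\frac{e^{-3\alpha/2}}{\sqrt{2/e+\Delta_0}}$ and then uses $\alpha<-2\ln(2\gamma)$, whereas you substitute the extreme value of $\alpha$ directly and verify the equivalent algebraic comparison via $-t\ln t\le \frac{1}{e}$ at $t=2\gamma$.
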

                                \noindent We note that the bound in \cref{fact:lower_bound_normal} weakens as $\Delta_0$ (and so, $\Delta$) increases.
                                To see this, observe that the LHS in Equation~\eqref{eq:lowerbound_in_fact_3} is an increasing function of $\Delta$.
                                In contrast, if $\Delta_0$ is large enough, the RHS in Equation~\eqref{eq:lowerbound_in_fact_3} is a decreasing function of $\Delta_0$.
                                Nevertheless, \cref{fact:lower_bound_normal} suffices to prove \cref{thm:main_result}.

                                The proof of \cref{fact:lower_bound_normal} appears in \cref{section:proofof:fact:lower_bound_normal}.

                                \subsubsection{Proof of \cref{thm:main_result}}
                                In this section, we present a proof of \cref{thm:main_result}.
                                We begin with the necessary notation and the lemmas (\cref{fact:information_inequality,fact:lower_bound_sz}) in \cref{section:supp_lemmas}.
                                Next, in \cref{section:proofof:thm:main_result_main}, we complete the proof of \cref{thm:main_result} assuming \cref{fact:information_inequality,fact:lower_bound_sz}.
                                Finally, in \cref{section:proofof:supporting_lemmas}, we present the proofs of \cref{fact:information_inequality,fact:lower_bound_sz} respectively.

                                Recall that we are given distribution $\cD$ which satisfies \cref{def:binormal_framework}.
                                We assume that the statistics returned by the two summary statistic subroutines are exact (\cref{asmp:ssr}).

                                Fix any iteration $t\in [d^\prime]$.
                                Our goals are to prove that for each auxiliary feature $\ell\in [d^\prime]\backslash Q(t)$,
                                $\AUC_{g(t)}(S(t),Z^{\ell}) - \AUC_{g(t)}(X)\geq
                                {\frac{1}{4} \cdot \inparen{\gamma^\sexp{t}}^{\sfrac{3}{2}}} \cdot \inparen{{\beta_\ell}^\sexp{t}\cdot(1-\delta_\ell)^\sexp{t}}^2,$
                                and that, in this iteration, \fairAUC{} improves the AUC for the current disadvantaged group $g(t)$, by at least
                                \begin{align*}
                                  \AUC_{g(t)}(X,Z^{i}) - \AUC_{g(t)}(X)
                                  &\ \ >\ \  \max\nolimits_{\ell\in [d^\prime]}
                                  {\frac{1}{4} \cdot \inparen{\gamma^\sexp{t}}^{\sfrac{3}{2}}} \cdot \inparen{{\beta_\ell}^\sexp{t}\cdot(1-\delta_\ell)^\sexp{t}}^2.
                                \end{align*}

                                \noindent Fix any auxiliary feature $\ell\in [d^\prime]$. %
                                Then the proof proceeds in two broad steps.
                                First, we show that $\AUC(X,Z^\ell)$ is lower bounded by $\AUC(S(t),Z^\ell)$ (\cref{fact:information_inequality}).
                                Then, we derive an explicit formula and lower bound for $\AUC(S(t),Z^\ell)$ (\cref{fact:lower_bound_sz}).
                                This formula is the same as the formula used to compute $\AUC_{g(t)}(S(t),Z^\ell)$ in \fairAUC{}.
                                Thus, \fairAUC{} selects the auxiliary feature $i$ where $$i\in \argmax\nolimits_{\ell\in [d^\prime]}\AUC_{g(t)}(S(t),Z^\ell).$$
                                Combining this with a lower bound $\AUC(S(t),Z^\ell)$ for any $\ell\in [d^\prime]$, we get that the auxiliary feature $i$ selected by \fairAUC{}, satisfies
                                $$\AUC_{g(t)}(S(t),Z^i) > \max\nolimits_{\ell\in [d^\prime]}
                                {\frac{1}{4} \cdot \inparen{\gamma^\sexp{t}}^{\sfrac{3}{2}}} \cdot \inparen{{\beta_\ell}^\sexp{t}\cdot(1-\delta_\ell)^\sexp{t}}^2.
                                $$
                                Then, using \cref{fact:information_inequality},
                                we get that the auxiliary feature selected by \fairAUC{} improves the AUC for $g(t)$ by at least $\max\nolimits_{\ell\in [d^\prime]}{\frac{1}{4} \cdot \inparen{\gamma^\sexp{t}}^{\sfrac{3}{2}}} \cdot \inparen{{\beta_\ell}^\sexp{t}\cdot(1-\delta_\ell)^\sexp{t}}^2$.
                                Finally since the choice of $t$ was arbitrary, we get that the result holds for all $t\in [d^\prime]$.

                                \subsubsection{Additional Notation and Supporting Lemmas}
                                \label{section:supp_lemmas}
                                We begin by presenting the two lemmas used in the proof of \cref{thm:main_result}. %

                                \begin{lemma}[\bf Projection does not increase AUC]\label{fact:information_inequality}
                                  Consider three random variables $X\in \R^d$, $Y\in \zo$, $Z\in \R$, which follow some joint distribution $\cD$.
                                  Given $w\in \R^d$, define $S\coloneqq \inangle{w,X},$ it holds that
                                  \begin{align*}
                                    \AUC_\cD(X,Z)\geq \AUC_\cD(S,Z).
                                  \end{align*}
                                \end{lemma}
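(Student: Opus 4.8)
The plan is to show that any classifier built on the projected pair $(S, Z)$ can be simulated by a classifier built on the full pair $(X, Z)$, and that this simulation preserves the ROC curve and hence the AUC. Concretely, recall from \cref{def:auc_2} that $\AUC_\cD(X,Z)$ is the supremum of $\AUC_\cD(v, \psi, X, Z)$ over all weight vectors $v \in \R^{d+1}$ and all increasing invertible link functions $\psi$, and similarly $\AUC_\cD(S,Z)$ is the supremum over weight vectors $u \in \R^2$ and links $\psi$ of the AUC of the classifier $\mathbb{I}[\psi^{-1}(u_1 S + u_2 Z) > \tau]$. Since $S = \inangle{w, X}$ is itself a fixed linear function of $X$, any linear combination $u_1 S + u_2 Z = u_1 \inangle{w, X} + u_2 Z = \inangle{(u_1 w, u_2), (X, Z)}$ is a linear combination of the coordinates of $(X,Z)$ with weight vector $v := (u_1 w, u_2) \in \R^{d+1}$. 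Thus for every $(u, \psi)$ feasible in the optimization defining $\AUC_\cD(S,Z)$ there is a corresponding $(v,\psi)$ feasible in the optimization defining $\AUC_\cD(X,Z)$ that induces the identical score function, hence the identical ROC curve and the identical AUC.

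First I would make the above substitution precise: fix any $u \in \R^2$ and any increasing invertible $\psi$, set $v := (u_1 w, u_2)$, and observe that the classifier $C_S$ using $(S,Z)$ with weights $u$, link $\psi$, and threshold $\tau$ predicts $\mathbb{I}[\psi^{-1}(u_1 S + u_2 Z) > \tau]$, while the classifier $C_X$ using $(X,Z)$ with weights $v$, link $\psi$, and threshold $\tau$ predicts $\mathbb{I}[\psi^{-1}(\inangle{v,(X,Z)}) > \tau] = \mathbb{I}[\psi^{-1}(u_1 \inangle{w,X} + u_2 Z) > \tau] = \mathbb{I}[\psi^{-1}(u_1 S + u_2 Z) > \tau]$. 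So $C_S$ and $C_X$ make identical predictions on every sample and at every threshold; therefore $\mathrm{TPR}$ and $\mathrm{FPR}$ agree at every threshold, the ROC curves coincide, and by \cref{eq:AUC_general} the AUCs are equal: $\AUC_\cD(u,\psi,S,Z) = \AUC_\cD(v,\psi,X,Z)$.

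Taking the supremum, for every feasible $(u,\psi)$ we have $\AUC_\cD(u,\psi,S,Z) = \AUC_\cD(v,\psi,X,Z) \le \sup_{v',\psi'} \AUC_\cD(v',\psi',X,Z) = \AUC_\cD(X,Z)$, and taking the supremum over $(u,\psi)$ on the left gives $\AUC_\cD(S,Z) \le \AUC_\cD(X,Z)$, which is the claim. There is essentially no obstacle here; the only point requiring a little care is keeping the definitions straight — in particular that the maximization in \cref{def:auc_2} is over \emph{all} linear weight vectors on the ambient feature space, so the image of the $(S,Z)$-parametrization is a subset of the image of the $(X,Z)$-parametrization. (One could alternatively phrase this as a pure set-inclusion argument: the set of score functions realizable from $(S,Z)$ is contained in the set realizable from $(X,Z)$, and AUC is a function of the score function and $\cD$ alone, so the sup over the smaller set cannot exceed the sup over the larger set.)
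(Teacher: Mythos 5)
Your proposal is correct and follows essentially the same argument as the paper: the key step in both is lifting a weight vector $u\in\R^2$ on $(S,Z)$ to the vector $v=(u_1 w,\ u_2)\in\R^{d+1}$ on $(X,Z)$ (the paper writes this as multiplication by the block matrix $\bigl[\begin{smallmatrix} w & 0\\ 0 & 1\end{smallmatrix}\bigr]$), so that the two classifiers make identical predictions at every threshold and hence have identical AUC, after which the inequality follows by maximizing over the larger feasible set. The only cosmetic difference is that the paper first reduces to the identity link via \cref{claim:auc_indep_of_link} while you carry the link $\psi$ along unchanged; both are valid.
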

                                \noindent The proof of \cref{fact:information_inequality} appears in \cref{section:proofof:supporting_lemmas}.

                                We require some additional notation to present \cref{fact:lower_bound_sz}.
                                Fix any iteration $t\in [d^\prime]$.
                                Since $t$ will be fixed for the remainder of the proof, we drop the superscripts from $\gamma^\sexp{t}$, $\Delta v_\ell^\sexp{t}$, $\beta_\ell^\sexp{t}$, and $\delta_\ell^\sexp{t}$.
                                From \cref{def:binormal_framework}, we know that conditioned on $A$ and $Y$, $X\cup Z$ follows an $m$-variate normal distribution.
                                It follows that $X(t)$ also has a Gaussian distribution conditioned on $Y$ and $A$ (see e.g., \cite[Theorem 5, Section 8.4]{mult_variate_Gaussian}).
                                Suppose for all $y\in \zo$
                                \begin{align*}
                                  \qquad\qquad\quad X(t)\mid Y=y,A=g(t) &\ \ \sim\ \ \cN(\mu_y,\Sigma_y),\tagnum{Binormality of acquired features}\customlabel{eq:dist_condition_1}{\theequation}
                                \end{align*}
                                and for all $y\in \zo$ and $\ell\in[d^\prime]\backslash Q(t)$,
                                \begin{align*}
                                  \qquad\qquad\quad\ \ \ \ Z^\ell \mid Y=y,A=g(t) &\ \ \sim\ \ \cN(v_{y\ell},\sigma_{y\ell}^2),\tagnum{Binormality of auxiliary features}\customlabel{eq:dist_condition_2}{\theequation}
                                \end{align*}
                                where $\mu_0,\mu_1\in \R^{d+t}$, $\Sigma_0,\Sigma_1\in \R^{(d+t)\times (d+t)}$, and for all $\ell\in[d^\prime]\backslash Q(t)$, $v_{0\ell},v_{1\ell}\in\R$ and $\sigma^2_{0\ell},\sigma^2_{1\ell}\geq 0.$
                                Note that $\sinbrace{Z^\ell}_{\ell\in[d^\prime]\backslash Q(t)}$ can be correlated with each other (and with $X(t)$).

                                Next, we show that $\Sigma_0+\Sigma_1$ is invertible.
                                Towards this, notice that \cref{def:binormal_framework} requires that for any $y\in \zo$ and $g\in \ab$, conditioned on $Y=y$ and $A=g$ the covariance matrix of all $m$ features, $M_{yg}$, is invertible.
                                Since covariance matrices are positive semi-definite (PSD) and any invertible PSD matrix is positive definite (PD), it follows that $M_{yg}$ is PD.
                                Notice that $\Sigma_0$ and $\Sigma_1$ are submatrices of $M_{0g}$ and $M_{1g}$.
                                Since submatrices of PD matrices are also PD, it follows that $\Sigma_0$ and $\Sigma_1$ are PD.
                                Then, $\Sigma_0+\Sigma_1$ is PD.
                                Thus, $\Sigma_0+\Sigma_1$ is invertible.

                                Define $\Delta \mu$ and $\Delta \mu_S$ to be the difference in class-conditional means of $X(t)$ and $S$ respectively
                                \begin{align*}
                                  \Delta \mu &\coloneqq \abs{\mu_1-\mu_0},\yesnum\label{def:delta_mu}\\
                                  \Delta\mu_S &\coloneqq \abs{\Ex[S(t)\mid Y=1,A=g(t)] - \Ex[S(t) \mid Y=0,A=g(t)]}.
                                \end{align*}
                                Similarly, for all $\ell \in [d^\prime]$, define $\Delta v_\ell$ to be the difference in class-conditional means of $Z^\ell$
                                \begin{align*}
                                  \Delta v_\ell &\coloneqq \abs{v_{1\ell} - v_{0\ell}}.\yesnum\label{eq:def_of_delta_v}
                                \end{align*}
                                For all $y\in \zo$, define $\sigma_{yS}^2$ to be the variance of $S(t)$ conditioned on $Y=y$:
                                \begin{align*}
                                  \sigma_{yS}^2 \coloneqq \Var[S(t) \mid Y=y, A=g(t)].
                                \end{align*}
                                Finally, for all $y\in \zo$ and $\ell\in [d^\prime]$, define $\rho_{y\ell}$ as the covariance of $S(t)$ and $Z^\ell$,
                                \begin{align*}
                                  \rho_{y\ell} \coloneqq \Cov[S(t), Z^\ell\mid Y=y,A=g(t)].
                                \end{align*}
                                Using the definition of $S(t)$ (Equation~\eqref{eq:def_s_star}), we can compute $\Delta\mu_S$ in terms of $\Delta\mu$.
                                \begin{align*}
                                  \Delta\mu_S \ \ &\coloneqq\ \  \abs{\Ex[S(t)\mid Y=1,A=g(t)] - \Ex[S(t)\mid Y=0,A=g(t)]}\\
                                  \ \ &\Stackrel{\eqref{eq:def_s_star}}{=}\ \  \abs{\Ex\insquare{\Delta\mu^\top (\Sigma_0+\Sigma_1)^{-1} X(t)\mid Y=1,A=g(t)} - \Ex\insquare{\Delta\mu^\top (\Sigma_0+\Sigma_1)^{-1} X(t)\mid Y=0,A=g(t)}}\\
                                  \ \ &=\ \  \abs{\Delta\mu^\top (\Sigma_0+\Sigma_1)^{-1} \inparen{\Ex\insquare{X(t) \mid Y=1,A=g(t)} - \Ex\insquare{X(t) \mid Y=1,A=g(t)}}}\\
                                  \ \ &\Stackrel{\eqref{def:delta_mu}}{=}\ \  \abs{\Delta\mu^\top (\Sigma_0+\Sigma_1)^{-1} \Delta\mu}\\
                                  \ \ &\Stackrel{}{=}\ \  {\Delta\mu^\top (\Sigma_0+\Sigma_1)^{-1} \Delta\mu}. \tagnum{Using that $(\Sigma_0+\Sigma_1)^{-1}$ is a PD matrix}\customlabel{eq:value_of_mus}{\theequation}
                                \end{align*}
                                Similarly, for all $y\in \zo$, we can also compute $\sigma_{yS}^2$.
                                For $y=1$, we have
                                \begin{align*}
                                  \sigma_{1S}^2 \ \ &\coloneqq\ \ \Var[S(t) \mid Y=1,A=g(t)]\\
                                  \ \ &\Stackrel{\eqref{eq:def_s_star}}{=}\ \ \Var\insquare{\Delta\mu^\top (\Sigma_0+\Sigma_1)^{-1} X(t)\mid Y=1,A=g(t)}\\
                                  \ \ &=\ \ \Delta\mu^\top (\Sigma_0+\Sigma_1)^{-1} \Sigma_1 (\Sigma_0+\Sigma_1) \Delta\mu.\yesnum\label{eq:value_of_ss1}
                                \end{align*}
                                In the last equality, we use the fact that for any vector $w\in \R^{d+t}$ and random variable $X(t)\in \R^{d+t}$ with covariance matrix $\Sigma\in \R^{(d+t)\times (d+t)}$, it holds that $\Var[\inangle{w,X(t)}]=w^\top \Sigma w$.
                                Similarly, for $y=0$ we have that
                                \begin{align*}
                                  \sigma_{0S}^2 \quad&\coloneqq\quad \Delta\mu^\top (\Sigma_0+\Sigma_1)^{-1} \Sigma_0 (\Sigma_0+\Sigma_1) \Delta\mu.\yesnum\label{eq:value_of_ss2}
                                \end{align*}
                                Combining Equations~\eqref{eq:value_of_ss1} and \eqref{eq:value_of_ss2}, we get
                                \begin{align*}
                                  \sigma_{0S}^2+\sigma_{1S}^2\qquad     &
                                  \Stackrel{\eqref{eq:value_of_ss1},\eqref{eq:value_of_ss2}}{=}
                                  \qquad \Delta\mu^\top (\Sigma_0+\Sigma_1)^{-1}\Delta\mu.\yesnum\label{eq:value_of_ss}
                                \end{align*}

                                \begin{lemma}\label{fact:lower_bound_sz}
                                  If $\cD$ satisfies Equations~\eqref{eq:dist_condition_1} and \eqref{eq:dist_condition_2}, then it holds that
                                  \begin{align*}
                                    \AUC_{g(t)}(S(t),Z)\coloneqq \Phi\inparen{\sqrt{
                                    \begin{bmatrix}
                                      \Delta\mu_S & \Delta v_\ell
                                    \end{bmatrix}
                                    \begin{bmatrix}
                                      \sigma^2_{0S} + \sigma^2_{1S} & \rho_{0\ell}+\rho_{1\ell} \\
                                      \rho_{0\ell}+\rho_{1\ell} & \sigma^2_{0\ell} + \sigma^2_{1\ell}
                                    \end{bmatrix}^{-1}
                                    \begin{bmatrix}
                                      \Delta\mu_S\\ \Delta v_\ell
                                    \end{bmatrix}
                                    }}.\yesnum\label{eq:expression_for_sz}
                                  \end{align*}
                                  Further, let $\alpha\geq 0$, be such that $\AUC_{g(t)}(X)=\Phi\inparen{\sqrt{\alpha}},$
                                  then
                                  Equation~\eqref{eq:expression_for_sz} implies that
                                  \begin{align*}
                                    \AUC_{g(t)}(S(t),Z)\geq \Phi\inparen{\sqrt{ \alpha + \frac{\inparen{\Delta v_\ell - (\rho_{0\ell}+\rho_{1\ell})}^2}{\sigma^2_{0\ell} + \sigma^2_{1\ell}}  }}.\yesnum\label{eq:lower_bound}
                                  \end{align*}
                                \end{lemma}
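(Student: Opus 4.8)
The plan is to obtain the closed form \eqref{eq:expression_for_sz} by applying \cref{fact:optimal_auc_expression} to the two–dimensional feature vector $(S(t),Z^\ell)$, and then to derive \eqref{eq:lower_bound} by a short $2\times 2$ matrix computation that hinges on the identity $\Delta\mu_S=\sigma_{0S}^2+\sigma_{1S}^2$ already recorded in \eqref{eq:value_of_mus} and \eqref{eq:value_of_ss}. Throughout, $S(t)$ is the linear FLD projection $\inangle{w^\star,X(t)}$ with $w^\star=(\Sigma_0+\Sigma_1)^{-1}\Delta\mu$ as used in \eqref{eq:value_of_mus} (one may take the link $\psi^\star$ to be the identity without changing any AUC, by \cref{claim:auc_indep_of_link}).

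First I would establish that $(S(t),Z^\ell)$ is binormal on group $g(t)$. By \cref{def:binormal_framework}, conditioned on $Y=y$ and $A=g(t)$ the vector $(X(t),Z^\ell)$ is jointly Gaussian; since $S(t)$ is a fixed linear function of $X(t)$, the pair $(S(t),Z^\ell)$ is a linear image of a Gaussian and hence bivariate normal, with class–conditional mean differences $(\Delta\mu_S,\Delta v_\ell)$ and class–conditional covariance matrices $\mat{\sigma_{yS}^2 & \rho_{y\ell}\\ \rho_{y\ell} & \sigma_{y\ell}^2}$ for $y\in\zo$. Applying \cref{fact:optimal_auc_expression} with $d=2$ to this vector, and summing the two class matrices, gives exactly \eqref{eq:expression_for_sz}; the scalars $\Delta\mu_S$ and $\sigma_{0S}^2+\sigma_{1S}^2$ appearing there are precisely the quantities computed in \eqref{eq:value_of_mus} and \eqref{eq:value_of_ss}. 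As a byproduct this matches the formula for $\AUC_{g(t)}(S,Z^\ell)$ evaluated inside \fairAUC{} once \cref{asmp:ssr} is used to replace empirical moments by true ones.

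For the bound \eqref{eq:lower_bound}, write $a\coloneqq\sigma_{0S}^2+\sigma_{1S}^2$, $b\coloneqq\rho_{0\ell}+\rho_{1\ell}$, $c\coloneqq\sigma_{0\ell}^2+\sigma_{1\ell}^2$, $v\coloneqq\Delta v_\ell$. Applying \cref{fact:optimal_auc_expression} to $X(t)$ gives $\alpha=\Delta\mu^\top(\Sigma_0+\Sigma_1)^{-1}\Delta\mu$, which by \eqref{eq:value_of_mus} and \eqref{eq:value_of_ss} equals both $\Delta\mu_S$ and $a$; so the vector in \eqref{eq:expression_for_sz} is $(\alpha,v)$ and the matrix is $\mat{a&b\\ b&c}$ with $a=\alpha$. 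That $2\times 2$ matrix is a sum of class–conditional covariance matrices, hence positive semidefinite, and it is invertible (as \eqref{eq:expression_for_sz} requires), hence positive definite; therefore $c>0$ and $ac-b^2>0$. Inverting by cofactors,
\[
\begin{bmatrix}\alpha & v\end{bmatrix}\mat{a&b\\ b&c}^{-1}\begin{bmatrix}\alpha\\ v\end{bmatrix}=\frac{a\inparen{ac-2bv+v^2}}{ac-b^2},
\]
and clearing the denominator $c(ac-b^2)>0$ and simplifying yields
\[
\frac{a\inparen{ac-2bv+v^2}}{ac-b^2}-\inparen{\alpha+\frac{(v-b)^2}{c}}=\frac{b^2(v-b)^2}{c\,(ac-b^2)}\ \geq\ 0 .
\]
Since $t\mapsto\Phi(\sqrt{t})$ is increasing, substituting this into \eqref{eq:expression_for_sz} gives \eqref{eq:lower_bound}.

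The argument is essentially a computation, so there is no deep obstacle; the two points needing care are (i) arguing rigorously that $(S(t),Z^\ell)$ is genuinely bivariate Gaussian with an invertible class–summed covariance matrix, and handling sign/orientation conventions so \cref{fact:optimal_auc_expression} applies with the stated differences $\Delta\mu_S,\Delta v_\ell$; and (ii) recognizing that the FLD normalization forces $\Delta\mu_S=\sigma_{0S}^2+\sigma_{1S}^2=\alpha$ — this is exactly the identity that makes the algebraic difference collapse to the manifestly nonnegative $b^2(v-b)^2/(c(ac-b^2))$. I expect (ii), i.e.\ spotting in advance that this is the identity to exploit, to be the only real idea required.
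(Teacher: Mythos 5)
Your proposal is correct and follows essentially the same route as the paper: apply \cref{fact:optimal_auc_expression} with $d=2$ to the bivariate-normal pair $(S(t),Z^\ell)$, then exploit the FLD identity $\Delta\mu_S=\sigma_{0S}^2+\sigma_{1S}^2=\alpha$ from \eqref{eq:value_of_mus} and \eqref{eq:value_of_ss} to reduce \eqref{eq:lower_bound} to $2\times 2$ algebra. The only (cosmetic) difference is in the last step: the paper peels off $\Delta\mu_S$ and drops the nonnegative term $\sfrac{(\rho_{0\ell}+\rho_{1\ell})^2}{\Delta\mu_S}$ from the denominator, whereas you compute the exact difference $\sfrac{b^2(v-b)^2}{c(ac-b^2)}\geq 0$ — an equivalent rearrangement of the same computation.
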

                                \noindent The proof of \cref{fact:lower_bound_sz} appears in \cref{section:proofof:supporting_lemmas}.
                                Define $\alpha\geq 0$ to be a constant, such that
                                \begin{align*}
                                  \AUC_{g(t)}(X(t))=\Phi\inparen{\sqrt{\alpha}}.\yesnum\label{def:alpha}
                                \end{align*}
                                ($\alpha$ is uniquely defined since $\Phi(\sqrt{\cdot})$ is a strictly increasing function.)
                                Further, define $\Delta^\prime\in \R$ to be the term added to $\alpha$ in Equation~\eqref{eq:lower_bound}:  %
                                \begin{align*}
                                  \Delta^\prime &\coloneqq \frac{\inparen{\Delta v - (\rho_{0\ell}+\rho_{1\ell})}^2}{\sigma^2_{0\ell} + \sigma^2_{1\ell}}.\yesnum\label{eq:lb_delta}
                                \end{align*}

                                \subsubsection{Proof of \cref{thm:main_result}}
                                \label{section:proofof:thm:main_result_main}
                                Now we are ready to complete the proof of \cref{thm:main_result}.

                                \begin{proof}{Proof of \cref{thm:main_result}}
                                  Fix any auxiliary feature $\ell\in [d^\prime]$. %
                                  Consider two cases:\\ %

                                  \noindent {\bf Case A ($\abs{\rho_{0\ell}+\rho_{1\ell}} < \Delta v_\ell$):}
                                  In this case, from Equation~\eqref{eq:cond_on_aux_2}, we have that $\delta_\ell = \Delta v_\ell^{-1}\cdot \abs{\rho_{0\ell}+\rho_{1\ell}}\in (0,1)$.
                                  Thus,
                                  \begin{align*}
                                    \Delta^\prime
                                    = \frac{\inparen{\Delta v_\ell- (\rho_{0\ell}+\rho_{1\ell})}^2}{\sigma^2_{0\ell} + \sigma^2_{1\ell}}
                                    \Stackrel{}{\geq}  \frac{\Delta v_\ell^2\cdot(1-\delta_\ell)^2}{\sigma^2_{0\ell} + \sigma^2_{1\ell}}.
                                  \end{align*}

                                  \noindent {\bf Case B ($\abs{\rho_{0\ell}+\rho_{1\ell}} \geq \Delta v_\ell$):}
                                  In this case, from Equation~\eqref{eq:cond_on_aux_2}, we have that $\delta_\ell = 1$.
                                  Thus,
                                  \begin{align*}
                                    \Delta^\prime
                                    = \frac{\inparen{\Delta v_\ell- (\rho_{0\ell}+\rho_{1\ell})}^2}{\sigma^2_{0\ell} + \sigma^2_{1\ell}}
                                    \Stackrel{}{\geq } 0  =  \frac{\Delta v_\ell^2\cdot(1-\delta_\ell)^2}{\sigma^2_{0\ell} + \sigma^2_{1\ell}}.
                                  \end{align*}

                                  \noindent Combining both cases and using Equation~\eqref{eq:cond_on_aux_1}, we can lower bound $\Delta^\prime$ (defined in Equation~\eqref{eq:lb_delta}) as follows
                                  \begin{align*}
                                    \Delta^\prime
                                    = \frac{\inparen{\Delta v_\ell- (\rho_{0\ell}+\rho_{1\ell})}^2}{\sigma^2_{0\ell} + \sigma^2_{1\ell}}
                                    \qquad\quad \Stackrel{\text{\footnotesize(Cases A and B)}}{\geq}\qquad\quad  \frac{\Delta v_\ell^2\cdot(1-\delta_\ell)^2}{\sigma^2_{0\ell} + \sigma^2_{1\ell}}
                                    \ \ \Stackrel{\eqref{eq:cond_on_aux_1}}{\geq}\ \  {{\beta_\ell}^2\cdot (1-\delta_\ell)^2}.\yesnum\label{eq:bound_1}
                                  \end{align*}
                                  Define $\Delta_0$ as the RHS of the above equation, i.e., $\Delta_0\coloneqq  {{\beta_\ell}^2 (1-\delta_\ell)^2}.$
                                  {Then, we can rewrite Inequality~\eqref{eq:bound_1} as}
                                  \begin{align*}
                                    \Delta^\prime \geq \Delta_0.\yesnum\label{eq:lb_delta_2}
                                  \end{align*}
                                  Using \cref{fact:lower_bound_sz}, we can show a lower bound on the improvement in the AUC
                                  \begin{align*}
                                    \AUC_{g(t)}(S(t),Z^\ell) - \AUC_{g(t)}(X(t))
                                    &\qquad\ \ \Stackrel{\eqref{def:alpha}}{=}\qquad\ \ \AUC_{g(t)}(S(t),Z) - \Phi\inparen{\sqrt{\alpha}}\\ %
                                    &\qquad\ \ \Stackrel{\rm\eqref{eq:lb_delta},\ \cref{fact:lower_bound_sz}}{\geq}\qquad\ \ \Phi\inparen{\sqrt{\alpha + \Delta^\prime}} - \Phi\inparen{\sqrt{\alpha}}\\
                                    &\qquad\ \ \geq
                                    {\frac{2}{\sqrt{\pi}}\cdot
                                    \frac{\gamma^{\frac{3}2} \cdot \Delta_0 }{  \sqrt{(\sfrac{2}{e})+\Delta_0}\cdot (2+\Delta_0)}}
                                    \tagnum{Using Equation~\eqref{eq:lb_delta_2}, \cref{fact:upper_bound_alpha}, and  \cref{fact:lower_bound_normal}}
                                    \customlabel{eq:intermediate_step}{\theequation}\\
                                    &\qquad\ \ \geq\ \
                                    {\frac{2}{3\sqrt{\pi}}\cdot
                                    \frac{\gamma^{\frac{3}2} \cdot \Delta_0 }{  \sqrt{(\sfrac{2}{e})+1}} }
                                    \tag{Using that $0 \leq \delta_\ell,\beta_\ell\leq 1$ and, hence, $\Delta_0\coloneqq {\beta_\ell}^2\cdot (1-\delta_\ell)^2\leq 1$}\\
                                    &\qquad\ \ \geq\ \  {\frac{1}{3.51}\cdot \gamma^{\frac{3}2} \cdot \Delta_0}
                                    \tag{Using that $\frac{3\sqrt{\pi(2+e)}}{2\sqrt{e}}\leq 3.51$}\\
                                    &\qquad\ \ \geq\ \  {\frac{1}{3.51}\cdot \gamma^{\frac{3}2} \cdot \beta_\ell^2\cdot (1-\delta_\ell)^2}
                                    \tag{Substituting $\Delta_0\coloneqq {\beta_\ell}^2\cdot (1-\delta_\ell)^2$}\\
                                    &\qquad\ \ \geq\ \  {\frac{1}{4}\cdot \gamma^{\frac{3}2} \cdot \beta_\ell^2\cdot (1-\delta_\ell)^2.}
                                    \yesnum\label{eq:final_bound_2}
                                  \end{align*}
                                  Recall that \fairAUC{} selects an auxiliary feature $i$, satisfying
                                  \begin{align*}
                                    i\in \argmax\nolimits_{\ell\in [d^\prime]}\AUC_{g(t)}(S(t),Z^\ell).\yesnum\label{eq:fairAUC_selects_argmax}
                                  \end{align*}
                                  Using Equations~\eqref{eq:final_bound_2} and \eqref{eq:fairAUC_selects_argmax}, we get that
                                  \begin{align*}
                                    \AUC_{g(t)}(S(t),Z^i) - \AUC_{g(t)}(X(t))
                                    \ \  \ &\Stackrel{\eqref{eq:fairAUC_selects_argmax}}{=}\ \ \
                                    \max_{\ell\in [d^\prime]}\AUC_{g(t)}(S(t),Z^\ell) - \AUC_{g(t)}(X(t))\\
                                    \ \ \ &\Stackrel{\eqref{eq:final_bound_2} }{\geq }\ \ \
                                    \max_{\ell\in [d^\prime]}
                                    {\frac{1}{4} \cdot \gamma^{\sfrac{3}{2}}} {\beta_\ell}^{2}(1-\delta_\ell)^2.
                                    \yesnum\label{eq:final_bound_3}
                                  \end{align*}
                                  Finally, using \cref{fact:information_inequality}, we get that
                                  \begin{align*}
                                    \AUC_{g(t)}(X(t),Z^i) - \AUC_{g(t)}(X(t))
                                    \qquad &\Stackrel{\rm\cref{fact:information_inequality}}{\geq} \qquad
                                    \AUC_{g(t)}(S(t),Z^i) - \AUC_{g(t)}(X)\\
                                    \ \ &\Stackrel{\eqref{eq:final_bound_3} }{\geq }\qquad
                                    \max_{\ell\in [d^\prime]}
                                    {\frac{1}{4} \cdot \gamma^{\sfrac{3}{2}}} {\beta_\ell}^{2}(1-\delta_\ell)^2.
                                  \end{align*}
                                \end{proof}

                                \subsubsection{Proof of Supporting Lemmas}\label{section:proofof:supporting_lemmas}

                                \begin{proof}{Proof of \cref{fact:lower_bound_sz}}
                                  Equation~\eqref{eq:expression_for_sz} follows by using \cref{fact:optimal_auc_expression} with $d=2$. %
                                  To see this, note that by Equation~\eqref{eq:def_s_star}, $S(t)$ is a fixed projection of the random variable $X$.
                                  Since conditioned on $Y$ and $A$, $X\cup Z$ is distributed according to a multivariate Gaussian distribution (\cref{def:binormal_framework}), it follows that $X(t)$, and so $S(t)$, also has a Gaussian distribution conditioned on $Y$ and $A$ (see e.g., \cite[Theorem 5, Section 8.4]{mult_variate_Gaussian}).
                                  Now Equation~\eqref{eq:expression_for_sz} follows from \cref{fact:optimal_auc_expression} by substituting appropriate values for the covariance matrix between $S(t)$ and $Z$, and the means of $S(t)$ and $Z$.

                                  Equation~\eqref{eq:lower_bound} follows by expanding Equation~\eqref{eq:expression_for_sz}.
                                  Consider the expression inside $\Phi(\sqrt{\cdot})$ in Equation~\eqref{eq:expression_for_sz}.
                                  We have
                                  \begin{align*}
                                    \begin{bmatrix}
                                      \sigma^2_{0S} + \sigma^2_{1S} & \rho_{0\ell}+\rho_{1\ell} \\
                                      (\rho_{0\ell}+\rho_{1\ell})& \sigma^2_{0\ell} + \sigma^2_{1\ell}
                                    \end{bmatrix}^{-1}
                                    = \frac1{\inparen{\sigma^2_{0S} + \sigma^2_{1S} }\cdot \inparen{\sigma^2_{0\ell} + \sigma^2_{1\ell} } - (\rho_{0\ell}+\rho_{1\ell})^2 }
                                    \begin{bmatrix}
                                      \sigma^2_{0\ell} + \sigma^2_{1\ell} & -(\rho_{0\ell}+\rho_{1\ell}) \\
                                      -(\rho_{0\ell}+\rho_{1\ell})&  \sigma^2_{0S} + \sigma^2_{1S}
                                      \end{bmatrix}.
                                    \end{align*}
                                    Evaluating the rest of the expression, we have
                                    \begin{align*}
                                      &\frac1{\inparen{\sigma^2_{0S} + \sigma^2_{1S} }\cdot \inparen{\sigma^2_{0\ell} + \sigma^2_{1\ell} } - (\rho_{0\ell}+\rho_{1\ell})^2 }\cdot
                                      \begin{bmatrix}
                                        \Delta\mu_S & \Delta v_\ell
                                      \end{bmatrix}
                                      \begin{bmatrix}
                                        \sigma^2_{0\ell} + \sigma^2_{1\ell} & -(\rho_{0\ell}+\rho_{1\ell}) \\
                                        -(\rho_{0\ell}+\rho_{1\ell})&  \sigma^2_{0S} + \sigma^2_{1S}
                                      \end{bmatrix}
                                      \begin{bmatrix}
                                        \Delta\mu_S\\ \Delta v_\ell
                                        \end{bmatrix}\\
                                        &=\quad  \frac1{\inparen{\sigma^2_{0S} + \sigma^2_{1S} }\cdot \inparen{\sigma^2_{0\ell} + \sigma^2_{1\ell} } - (\rho_{0\ell}+\rho_{1\ell})^2 }\cdot
                                        \begin{bmatrix}
                                          \Delta\mu_S\cdot \inparen{\sigma^2_{0\ell} + \sigma^2_{1\ell}} -\Delta v_\ell\cdot (\rho_{0\ell}+\rho_{1\ell})\\
                                          -\Delta\mu_S\cdot \inparen{\rho_{0\ell}+\rho_{1\ell}} +\Delta v_\ell\cdot (\sigma^2_{0S} + \sigma^2_{1S})\\
                                          \end{bmatrix}^\top
                                          \begin{bmatrix}
                                            \Delta\mu_S\\ \Delta v_\ell
                                            \end{bmatrix}\\
                                            &=\quad  \frac1{\inparen{\sigma^2_{0S} + \sigma^2_{1S} }\cdot \inparen{\sigma^2_{0\ell} + \sigma^2_{1\ell} } - (\rho_{0\ell}+\rho_{1\ell})^2 }\cdot
                                            \inparen{
                                            \Delta\mu_S^2\cdot \inparen{\sigma^2_{0\ell} + \sigma^2_{1\ell}}
                                            -2\Delta\mu_S \Delta v_\ell\cdot (\rho_{0\ell}+\rho_{1\ell})
                                            + \Delta v_\ell^2 \cdot (\sigma^2_{0S} + \sigma^2_{1S})
                                            }\\
                                            &\Stackrel{\eqref{eq:value_of_mus},\eqref{eq:value_of_ss}}{=} \quad \frac{1}{\Delta\mu_S\cdot \inparen{\sigma^2_{0\ell} + \sigma^2_{1\ell} } - (\rho_{0\ell}+\rho_{1\ell})^2 }\cdot
                                            \inparen{
                                            \Delta\mu_S^2\cdot \inparen{\sigma^2_{0\ell} + \sigma^2_{1\ell}}
                                            -2 \Delta\mu_S\cdot\Delta v_\ell\cdot (\rho_{0\ell}+\rho_{1\ell})
                                            + \Delta v_\ell^2\cdot \Delta\mu_S
                                            }\\
                                            &\Stackrel{(\Delta\mu_S>0)}{=} \quad \frac{1}{\inparen{\sigma^2_{0\ell} + \sigma^2_{1\ell} } -
                                            \frac{(\rho_{0\ell}+\rho_{1\ell})^2}{\Delta\mu_S } }\cdot
                                            \inparen{
                                            \Delta\mu_S\cdot \inparen{\sigma^2_{0\ell} + \sigma^2_{1\ell}}
                                            -2 \Delta v_\ell\cdot (\rho_{0\ell}+\rho_{1\ell})
                                            + \Delta v_\ell^2
                                            }\\
                                            &\Stackrel{}{=}
                                            \quad \Delta\mu_S + \frac{1}{\inparen{\sigma^2_{0\ell} + \sigma^2_{1\ell} }  -
                                            \frac{(\rho_{0\ell}+\rho_{1\ell})^2}{\Delta\mu_S}
                                            }
                                            \inparen{
                                            (\rho_{0\ell}+\rho_{1\ell})^2
                                            -2 \Delta v_\ell\cdot (\rho_{0\ell}+\rho_{1\ell})
                                            + \Delta v_\ell^2
                                            }\\
                                            &\Stackrel{}{=}
                                            \quad \Delta\mu_S + \frac{1}{\inparen{\sigma^2_{0\ell} + \sigma^2_{1\ell} }  -
                                            \frac{(\rho_{0\ell}+\rho_{1\ell})^2}{\Delta\mu_S}
                                            }\cdot
                                            (\Delta v_\ell - (\rho_{0\ell}+\rho_{1\ell}))^2
                                          \end{align*}
                                          \begin{align*}
                                            &\Stackrel{}{\geq}
                                            \quad \Delta\mu_S
                                            + \frac{\inparen{\Delta v_\ell - (\rho_{0\ell}+\rho_{1\ell})}^2}{\sigma^2_{0\ell} + \sigma^2_{1\ell}}
                                            \hspace{77mm}
                                            \tag{Using $\frac{(\rho_{0\ell}+\rho_{1\ell})^2}{\Delta\mu_S}\geq 0$}\\
                                            &\Stackrel{\eqref{eq:value_of_mus}}{=}
                                            \quad \Delta\mu^\top (\Sigma_0+\Sigma_1)^{-1} \Delta\mu
                                            + \frac{\inparen{\Delta v_\ell - (\rho_{0\ell}+\rho_{1\ell})}^2}{\sigma^2_{0\ell} + \sigma^2_{1\ell}}. \yesnum\label{eq:final_bound}
                                          \end{align*}
                                          Substituting Equation~\eqref{eq:final_bound} in \cref{fact:optimal_auc_expression}, and using the fact that $\Phi(\sqrt{\cdot})$ is an increasing function, we have
                                          \begin{align*}
                                            \AUC_{g(t), \cD}(S(t),Z) \geq  \Phi\inparen{
                                            \sqrt{
                                            \Delta\mu^\top (\Sigma_0+\Sigma_1)^{-1} \Delta\mu +
                                            \frac{\inparen{\Delta v_\ell - (\rho_{0\ell}+\rho_{1\ell})}^2}{\sigma^2_{0\ell} + \sigma^2_{1\ell}}
                                            }
                                            }.\yesnum\label{eq:expression_of_auc}
                                          \end{align*}
                                          From \cref{fact:optimal_auc_expression}, we also have that
                                          \begin{align*}
                                            \AUC_{g(t), \cD}(X(t)) = \Phi\inparen{
                                            \sqrt{\Delta\mu^\top (\Sigma_0+\Sigma_1)^{-1} \Delta\mu}
                                            }.
                                          \end{align*}
                                          Thus, $\alpha = \Delta\mu^\top (\Sigma_0+\Sigma_1)^{-1} \Delta\mu$.
                                          Combining this with Equation~\eqref{eq:expression_of_auc}, we get
                                          \begin{align*}
                                            \AUC_{g(t)}(S(t),Z)\geq \Phi\inparen{\sqrt{ \alpha + \frac{\inparen{\Delta v_\ell - (\rho_{0\ell}+\rho_{1\ell})}^2}{\sigma^2_{0\ell} + \sigma^2_{1\ell}}  }}.
                                          \end{align*}
                                        \end{proof}

                                        \begin{proof}{Proof of \cref{fact:information_inequality}}
                                          {By definition of the AUC (\cref{def:auc_2}), we have
                                          \begin{align*}
                                            \AUC(X,Z)&\coloneqq \max_{v\in \R^{d+1},\psi} \AUC(v,\psi,X,Z)
                                            \quad\text{and}\quad
                                            \AUC(S,Z) \coloneqq \max_{v\in \R^{2},\psi} \AUC(v,\psi,S,Z).
                                            \yesnum\label{eq:def_auc_1}
                                          \end{align*}
                                          As shown in the proof of \cref{fact:optimal_auc_expression}, a generalized linear model $G$ defined by weight $w\in \R^{d}$, increasing and invertible link function $\psi\colon \R\to \R$, and threshold $t\in \R$, makes the same predictions as the linear classifier $C$ defined with weights $w$ and threshold $\psi(t)$.
                                          In particular, $G$ and $C$ have the same AUC.
                                          Combining this with \cref{eq:def_auc_1}, it follows that:
                                          \begin{align*}
                                            \AUC(X,Z)&\coloneqq \max_{v\in \R^{d+1}} \AUC(v,I,\psi,X,Z)
                                            \quad\text{and}\quad
                                            \AUC(S,Z) \coloneqq \max_{v\in \R^{2}} \AUC(v,I,\psi,S,Z).
                                            \yesnum\label{eq:def_auc_12}
                                          \end{align*}
                                          where $I\colon \R\to \R$ is the identity function.
                                          Define vectors
                                          \begin{align*}
                                            v_2\coloneqq \argmax_{v\in \R^{2}} \AUC(v,I,S,Z),
                                            \quad\text{and}\quad
                                            v_1\coloneqq \begin{bmatrix}
                                            w & 0\\
                                            0 & 1
                                          \end{bmatrix}v_2\in \R^{d+1}.\yesnum\label{eq:def_v1}
                                        \end{align*}
                                        \noindent} Notice that the linear classifier using $v_1$ on $X$ and $Z$, is identical to the linear classifier using $v_2$ on $S$ and $Z$:
                                        \begin{align*}
                                          \inangle{v_1, (X^1,\dots,X^d,Z)}\ \  \Stackrel{\eqref{eq:def_v1}}{=}\ \  v_2^\top\begin{bmatrix}
                                          w^\top & 0\\
                                          0 & 1
                                        \end{bmatrix}\begin{bmatrix}
                                        X\\
                                        Z
                                      \end{bmatrix} \qquad\Stackrel{(S\coloneqq \inangle{w,X})}{=}\qquad v_2^\top\begin{bmatrix}
                                      S\\
                                      Z
                                    \end{bmatrix}.\yesnum\label{eq:equiv}
                                  \end{align*}
                                  Using this, we have
                                  \begin{align*}
                                    \AUC(X,Z)&\ \ \stackrel{\eqref{eq:def_auc_12}}{\geq}\ \  \AUC(v_1,I,X,Z)\ \ \Stackrel{\eqref{eq:equiv}}{=}\ \ \AUC(v_2,I,S,Z) \ \ \Stackrel{\eqref{eq:def_v1}}{=}\ \  \AUC(S,Z).
                                  \end{align*}
                                \end{proof}

                                \subsubsection{Proof of \cref{thm:result_for_advantaged_group}}
                                \begin{proof}{Proof of \cref{thm:result_for_advantaged_group}}
                                  The proof of \cref{thm:result_for_advantaged_group} follows from Equation~\eqref{eq:final_bound_2} and \cref{fact:information_inequality} in the proof of \cref{thm:main_result}.
                                  In the proof of \cref{thm:main_result} only Equation~\eqref{eq:fairAUC_selects_argmax} uses the fact that $g(t)$ is the disadvantaged group in iteration $t$.
                                  In particular, the proof of Equation~\eqref{eq:final_bound_2} (which occurs before Equation~\eqref{eq:fairAUC_selects_argmax}) does not use the fact that $g(t)$ is the disadvantaged group in iteration $t$.
                                  Thus, we can repeat the proof of Equation~\eqref{eq:final_bound_2} by substituting $g(t)$ with $\hat{g}(t)$.
                                  Then for all $\ell\in [d^\prime]\backslash Q(t)$, it holds that\footnote{Recall that in the proof of \cref{thm:main_result}, we dropped the superscript on $\hat{\gamma}^\sexp{t}$, $\hat{\beta}_\ell^\sexp{t}$, and $\hat{\delta}_\ell^\sexp{t}$. Here, we return to specifying the superscripts.}
                                  \begin{align*}
                                    \AUC_{\hat{g}(t)}(S(t),Z^\ell) - \AUC_{\hat{g}(t)}(X(t))
                                    &\Stackrel{}{\geq}
                                    {\frac{1}{4} \cdot \inparen{\hat{\gamma}^\sexp{t} }^{\sfrac{3}{2}}} \cdot \inparen{{\hat{\beta}_\ell}^\sexp{t} (1-\hat{\delta}_\ell^\sexp{t}) }^2.
                                    \yesnum\label{eq:end_bound}
                                  \end{align*}
                                  The proof of \cref{fact:information_inequality} does not refer to $g(t)$. Thus, we can use it directly.
                                  This gives us that for all $\ell\in [d^\prime]\backslash Q(t)$
                                  \begin{align*}
                                    \AUC_{\hat{g}(t)}(X(t),Z^\ell) - \AUC_{\hat{g}(t)}(X(t))
                                    \quad\ \ \  &\Stackrel{\rm\cref{fact:information_inequality}}{\geq} \qquad
                                    \AUC_{\hat{g}(t)}(S(t),Z^\ell) - \AUC_{\hat{g}(t)}(X) \
                                    \\
                                    &\Stackrel{\eqref{eq:end_bound}}{\geq} \qquad
                                    {\frac{1}{4} \cdot \inparen{\hat{\gamma}^\sexp{t} }^{\sfrac{3}{2}}} \cdot \inparen{{\hat{\beta}_\ell}^\sexp{t} (1-\hat{\delta}_\ell^\sexp{t}) }^2.
                                  \end{align*}
                                  In particular, this holds for the feature $i\in [d^\prime]\backslash Q(t)$, selected by \fairAUC{}.
                                \end{proof}

                                \subsubsection{Proof of \cref{fact:lower_bound_normal}}\label{section:proofof:fact:lower_bound_normal}

                                \begin{proof}{Proof of \cref{fact:lower_bound_normal}.}
                                  \begin{align*}
                                    \Phi(\sqrt{\alpha+\Delta}) - \Phi(\sqrt{\alpha})
                                    &\qquad =\qquad \int_{\sqrt{\alpha}}^{\sqrt{\alpha+\Delta}}  \frac1{\sqrt{2\pi}} e^{-\sfrac{y^2}{2}}dy\\
                                    &\qquad \Stackrel{}{\geq} \qquad \int_{\sqrt{\alpha}}^{\sqrt{\alpha+\Delta_0}}  \frac1{\sqrt{2\pi}} e^{-\sfrac{y^2}{2}}dy\tag{Using that $\Delta\geq \Delta_0$ and that the RHS is an increasing function of $\Delta$}\\
                                    &\qquad \geq\qquad  \int_{\sqrt{\alpha}}^{\sqrt{\alpha+\Delta_0}} \frac{y}{\sqrt{\alpha+\Delta_0}}  \frac1{\sqrt{2\pi}} e^{-\sfrac{y^2}{2}}dy \tag{Using the fact that for all $y\in [\sqrt{\alpha}, \sqrt{\alpha+\Delta_0}]$, $\frac{y}{\sqrt{\alpha+\Delta_0}}\leq 1$}\\
                                    &\qquad =\qquad  \frac{-e^{-\sfrac{y^2}{2}}}{\sqrt{\alpha+\Delta_0}}\cdot \frac1{\sqrt{2\pi}}
                                    \bigg|_{\sqrt{\alpha}}^{\sqrt{\alpha+\Delta_0}}\\
                                    &\qquad =\qquad  \frac{1}{\sqrt{2\pi}}\cdot \frac{e^{-\frac{\alpha}2} \cdot \inparen{1 -e^{-\sfrac{\Delta_0}2}} }{  \sqrt{\alpha+\Delta_0}}\\
                                    &\qquad {\geq\qquad  \frac{1}{\sqrt{2\pi}}\cdot
                                    \frac{e^{-\frac{3\alpha}2} \cdot \inparen{1 -e^{-\sfrac{\Delta_0}2}} }{  \sqrt{\frac{2}{e}+\Delta_0}}}
                                    \tagnum{Using that for all $\alpha,\Delta_0>0$, $\frac{e^{-\sfrac{\alpha}{2}}}{\sqrt{\alpha+\Delta_0}}\geq \frac{e^{-\sfrac{3\alpha}{2}}}{\sqrt{\frac{2}{e}+\Delta_0}}$; see Equation~\eqref{eq:new_inequality}}
                                    \customlabel{eq:use_of_new_inequality}{\theequation}
                                    \\
                                    &\qquad {\geq\qquad  \frac{1}{\sqrt{2\pi}}\cdot
                                    \frac{e^{-\frac{3\alpha}2} \cdot \Delta_0 }{  \sqrt{\frac{2}{e}+\Delta_0}\cdot (2+\Delta_0)}}
                                    \tag{Using the fact that for all $x\in \R$, $1 -e^{-\sfrac{x}2}\geq \frac{x}{2+x}$}\\
                                    &\qquad {\geq\qquad  \frac{1}{\sqrt{2\pi}}\cdot
                                    \frac{(2\gamma)^{\frac{3}2} \cdot \Delta_0 }{  \sqrt{\frac{2}{e}+\Delta_0}\cdot (2+\Delta_0)}}
                                    \tag{Using that $e^{-\frac{3\alpha}2}$ is a decreasing function of $\alpha$ and $\alpha<-2\cdot\ln\inparen{2\gamma}$}\\
                                    &\qquad {=\qquad  \frac{2}{\sqrt{\pi}}\cdot
                                    \frac{\gamma^{\frac{3}2} \cdot \Delta_0 }{  \sqrt{\frac{2}{e}+\Delta_0}\cdot (2+\Delta_0)}.}
                                  \end{align*}

                                  \noindent {It remains to prove the following inequality, which was used in Equation~\eqref{eq:use_of_new_inequality},
                                  \begin{align*}
                                    \forall x,y>0,\quad
                                    \frac{e^{-\frac{x}{2}}}{\sqrt{x+y}}\geq \frac{e^{-\frac{3x}{2}}}{\sqrt{\frac{2}{e}+y}}.
                                    \yesnum\label{eq:new_inequality}
                                  \end{align*}
                                  Using the fact that for all $x\geq 0$, $e^{-\frac{x}{2}}x\leq \frac{2}{e}$, we have that for any $x,y>0$
                                  \begin{align*}
                                    e^{-\frac{x}{2}}\cdot \inparen{x + y} \leq \frac{2}{e}+e^{-\frac{x}{2}}y \ \  \Stackrel{x>0}{<}\ \  \frac{2}{e}+y.
                                    \yesnum\label{eq:interm_ineq}
                                  \end{align*}
                                  Hence, we have
                                  \begin{align*}
                                    \frac{e^{-\frac{x}{2}}}{\sqrt{x+y}}
                                    = \frac{e^{-\frac{x}{2}}\cdot \sqrt{e^{-\frac{x}{2}}}}{\sqrt{e^{-\frac{x}{2}}\cdot \inparen{x + y}}}
                                    \ \ \Stackrel{\eqref{eq:interm_ineq}}{\geq} \ \
                                    \frac{e^{-\frac{3x}{2}}}{\sqrt{\frac{2}{e}+y}}.
                                  \end{align*}}

                                \end{proof}

                                \subsection{Proof of Theoretical Guarantees of noisy \fairAUC{}}\label{section:noisy_auc_proof}
                                In this section, we formally describe the noisy \fairAUC{} procedure, state its theoretical guarantees and prove them.

                                \fairAUC{}'s objective is to increase the AUC of the lowest AUC group. However, fairAUC\ can increase bias with the acquisition of a new feature. Suppose we have data ${{X}}$ and we acquire feature ${{Z}}$ so that our new data is ${{X}}' \coloneqq ({X},{Z})$.
                                We can then have $\text{Bias}({X}')>\text{Bias}({X})$, where $\text{Bias}({X})$ and $\text{Bias}({X}')$ are the values of bias obtained using features ${X}$ and ${X}'$ respectively.
                                In this section, we provide an approach that ensures that, in each iteration of \fairAUC{}, the bias is smaller than or equal to the bias in the previous iteration. Thus, overall we can guarantee that the bias will only decrease over rounds as we acquire more features. At a high level, the strategy involves adding a noisy version of ${{Z}}$ to the group with higher AUC rather than ${{Z}}$ itself. This strategy, however, trades off predictive accuracy to ensure bias does not increase.

                                To formalize this, consider the $t$-th iteration of the \fairAUC{} procedure.
                                Let $0\leq \bias(t)\leq 1$ be the bias at the {\em start} of the $t$-th iteration
                                \begin{align*}
                                  \bias(t)\coloneqq 1-\max\inbrace{\frac{\AUC_a(X(t))}{\AUC_b(X(t))}, \frac{\AUC_b(X(t))}{\AUC_a(X(t))}}.
                                \end{align*}
                                Here, $\AUC_g(X)$ denotes the largest AUC, on group $g\in \ab$, of a generalized linear model that uses $X(t)$ to predict $Y$ (as defined in \cref{def:auc_2}).
                                Suppose that in this iteration, \fairAUC{} acquires an auxiliary feature $Z$.
                                Let $\bias(t+1)$ be the bias after acquiring $Z$
                                \begin{align*}
                                  \bias(t+1)\coloneqq 1-\max\inbrace{\frac{\AUC_a(X(t),Z)}{\AUC_b(X(t),Z)}, \frac{\AUC_b(X(t),Z)}{\AUC_a(X(t),Z)}}.
                                \end{align*}
                                While \fairAUC{} does not decrease the AUC of either group (\cref{thm:main_result,thm:result_for_advantaged_group}), it can increase the bias.
                                We can then have $\bias(t+1) > \bias(t)$.
                                To avoid this, one can replace $Z$ with its ``noisy version,'' which introduces noise to the group with the higher AUC (where the AUC refers to the measure \textit{after} $Z$ is added).
                                We will show that this ensures that the bias never increases.

                                Let $g(t)$ be the advantaged group at the {\em end} of the $t$-th iteration, i.e., the group with the higher AUC. %
                                Let $0\leq \lambda\leq 1$ be a parameter controlling the extent of the noise.
                                The noisy version of $Z$ is
                                \begin{align*}
                                  Z_\lambda \coloneqq \begin{cases}
                                  \lambda\cdot Z + (1-\lambda)\cdot N & \text{if } A = g(t),\\
                                  Z & \text{if } A \neq g(t).
                                \end{cases}
                                \yesnum\label{def:z_gamma}
                              \end{align*}
                              where $N$ is a standard normal random variable, independent of all $d$ features, the class label $Y$, and protected group $A$.
                              In other words, $Z_\lambda$ is constructed from $Z$ by scaling $Z$ by a factor of $\lambda$ for all samples in the advantaged group and then adding standard normal noise (scaled by $1-\lambda$) to these samples.
                              Let $\bias_\lambda(t+1)$ be the bias obtained by using features $X(t)$ and $Z_\lambda$
                              \begin{align*}
                                \bias_\lambda(t+1) \coloneqq 1-\max\inbrace{\frac{\AUC_a(X(t),Z_\lambda)}{\AUC_b(X(t),Z_\lambda)}, \frac{\AUC_b(X(t),Z_\lambda)}{\AUC_a(X(t),Z_\lambda)}}.
                              \end{align*}
                              We show that one can choose a value of $\lambda$ such that $\bias_\lambda(t+1)\leq \bias(t)$ and the AUC of neither group decreases compared to their value {\em before} the iteration.
                              Formally, we prove the following theorem.
                              \begin{theorem}\label{thm:main_thm_adding_noise}
                                Suppose that the $m$ features $X\cup Z$, class label $Y$, and protected group $A$ follow the binormal framework (\cref{def:binormal_framework}).
                                Further, assume that two summary statistics subroutines satisfy \cref{asmp:ssr}.
                                Then, for each iteration $t\in[d^\prime]$ and each auxiliary feature $Z$, there is a value of $0\leq \lambda\leq 1$ such that the bias {obtained by using features $(X(t),Z_\lambda)$ is at most $\bias(t)$,}
                                \begin{align*}
                                  \bias_\lambda(t+1) \leq \bias(t),
                                \end{align*}
                                and the AUC of neither group decrease compared to their values before the $t$-th iteration,
                                \begin{align*}
                                  \AUC_a(X(t),Z_\lambda) &\geq \AUC_a(X(t))\quad \text{and}\quad
                                  \AUC_b(X(t),Z_\lambda) \geq \AUC_b(X(t)).
                                  \yesnum\label{eq:guarantee_noise_AUC}
                                \end{align*}
                              \end{theorem}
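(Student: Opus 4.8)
The plan is to choose $\lambda$ by an intermediate‑value argument on a single continuous function. Write $g\coloneqq g(t)$ for the group that is advantaged at the \emph{end} of the $t$-th iteration (so $\AUC_{g}(X(t),Z)\geq \AUC_{g'}(X(t),Z)$, where $g'\in\ab$ is the other group), set $A\coloneqq \AUC_{g'}(X(t),Z)$ and $B(\lambda)\coloneqq \AUC_{g}(X(t),Z_\lambda)$. Since $Z_\lambda=Z$ on $g'$, the AUC of $g'$ after the iteration equals $A$ for every $\lambda$; only $B(\lambda)$ varies, and $\bias_\lambda(t+1)=1-\min\inbrace{A,B(\lambda)}/\max\inbrace{A,B(\lambda)}$.

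First I would show that $\lambda\mapsto B(\lambda)$ is continuous on $[0,1]$, with $B(0)=\AUC_g(X(t))$ and $B(1)=\AUC_g(X(t),Z)\geq A$. For continuity: conditioned on $Y$ and $A=g$, the vector $(X(t),Z_\lambda)$ is jointly Gaussian, being a linear image of $(X(t),Z,N)$, which is Gaussian because $X\cup Z$ is (by \cref{def:binormal_framework}) and $N$ is an independent standard normal; moreover, for each class $y$ its class‑conditional covariance is positive definite for every $\lambda\in[0,1]$, since a Schur‑complement computation with respect to the $X(t)$-block reduces positivity to $\lambda^2 c_y+(1-\lambda)^2>0$, where $c_y>0$ is the Schur complement of the original $(X(t),Z)$ covariance. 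Hence $\Sigma_\lambda$ (a sum of two positive definite matrices) is invertible, so \cref{fact:optimal_auc_expression} applies and gives $B(\lambda)=\Phi\inparen{\sqrt{\Delta\mu_\lambda^\top\Sigma_\lambda^{-1}\Delta\mu_\lambda}}$; the entries of $\Delta\mu_\lambda,\Sigma_\lambda$ are polynomials in $\lambda$ (using $\Ex[N]=0$, $\Var[N]=1$, and $N$ independent of everything), and matrix inversion is continuous on invertible matrices, so $B$ is continuous. The value $B(0)=\AUC_g(X(t))$ because when $\lambda=0$ the extra coordinate $Z_0=N$ has equal class‑conditional means and is independent of $X(t)$, so $\Delta\mu_0$ vanishes in that coordinate and $\Sigma_0$ is block diagonal, leaving the quadratic form unchanged. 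Finally, adjoining a feature never decreases the AUC of \cref{def:auc_2} (set its weight to $0$), so for \emph{every} $\lambda\in[0,1]$ we already have $\AUC_{g}(X(t),Z_\lambda)=B(\lambda)\geq B(0)=\AUC_g(X(t))$ and $\AUC_{g'}(X(t),Z_\lambda)=A\geq\AUC_{g'}(X(t))$, i.e.\ conclusion~\eqref{eq:guarantee_noise_AUC} holds unconditionally; only the bias bound needs a specific $\lambda$.

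It then remains to choose $\lambda$ so that $\bias_\lambda(t+1)\leq\bias(t)$, in two cases. Case (i): $B(0)\leq A$. Since $B$ is continuous with $B(0)\leq A\leq B(1)$, the intermediate value theorem gives $\lambda^\star\in[0,1]$ with $B(\lambda^\star)=A$, whence $\bias_{\lambda^\star}(t+1)=0\leq\bias(t)$. Case (ii): $B(0)>A$. Take $\lambda=0$. Then $\AUC_g(X(t))=B(0)>A\geq\AUC_{g'}(X(t))$, so $g$ is also the advantaged group at the start of the iteration and $\bias(t)=1-\AUC_{g'}(X(t))/\AUC_g(X(t))=1-\AUC_{g'}(X(t))/B(0)$, while $\bias_0(t+1)=1-A/B(0)\leq 1-\AUC_{g'}(X(t))/B(0)=\bias(t)$ because $A\geq\AUC_{g'}(X(t))$. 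In either case the chosen $\lambda$ satisfies both required conclusions.

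The main obstacle I anticipate is the bookkeeping in the first step: verifying that $(X(t),Z_\lambda)$ stays in the regime where \cref{fact:optimal_auc_expression} is valid --- jointly Gaussian conditioned on $(Y,A)$ with an invertible summed covariance --- \emph{uniformly} over $\lambda\in[0,1]$ (the $\lambda^2 c_y+(1-\lambda)^2>0$ estimate), and pinning down the endpoint values exactly, especially $B(0)=\AUC_g(X(t))$. Everything after that is a short intermediate‑value‑plus‑two‑cases argument, and the AUC‑nondecrease half of the theorem comes essentially for free from the ``adding a feature cannot hurt'' observation. (Implicitly this also uses \cref{claim:auc_indep_of_link}, so that the $\Phi(\sqrt{\cdot})$ formula for the linear/FLD score in fact computes the GLM‑AUC of \cref{def:auc_2}.)
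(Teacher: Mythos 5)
Your proposal is correct and follows essentially the same route as the paper's proof: the invariance $\AUC_{g'}(X(t),Z_\lambda)=\AUC_{g'}(X(t),Z)$ for the group receiving no noise, continuity of $\lambda\mapsto\AUC_g(X(t),Z_\lambda)$ via positive-definiteness of the $\lambda$-dependent covariance and \cref{fact:optimal_auc_expression} (the paper's \cref{lem:continuity}), the endpoint identifications at $\lambda=0,1$, and the same two-case split (IVT to hit zero bias when $\AUC_{g'}(X(t),Z)\geq\AUC_g(X(t))$, and $\lambda=0$ otherwise), which is exactly the paper's Case A/Case B argument. Your explicit remark that \cref{eq:guarantee_noise_AUC} holds for every $\lambda$ because adjoining a coordinate cannot decrease the AUC of \cref{def:auc_2} is a minor streamlining of what the paper leaves implicit via \cref{fact:information_inequality}.
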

                              \noindent  {Thus, \cref{thm:main_thm_adding_noise} shows that adding a suitable amount of noise to the samples in the advantaged group ensures that the selected feature does not increase the amount of bias.
                              At the same time, the AUC improvement for the disadvantaged group is unaffected because we do not add noise to the samples in the disadvantaged group.
                              The AUC improvement for the advantaged group can decrease but we can lower bound the amount of decrease as explained in the following remark.}

                              \smallskip

                              \begin{remark}\label{remark:noisyproof}
                                \red{\cref{thm:result_for_advantaged_group}'s proof lower bounds the increase in the AUC of the advantaged group due to the acquisition of any feature that follows the normal distribution conditioned on the class and the protected group labels.
                                (This, then, directly implies the lower bound in \cref{thm:result_for_advantaged_group}.)
                                The same lower bound also applies to the ``noisy'' feature $Z_\lambda$ because it follows the normal distribution for any class and group.
                                The latter is true because $Z_\lambda$ is a sum of two normally-distributed variables (the selected feature $Z$ and the standard-normal noise) and because the sum of any two normally-distributed variables is another normally-distributed variable.
                                The lower bound depends on $Z_\lambda$'s moments and correlation with the scores $S$ via parameters that are analogous to those in \cref{thm:result_for_advantaged_group}.
                                If the class-wise variances of $Z$ sum to at least $\alpha$, then the AUC-improvement with the noisy feature is at least $\frac{1}{1+\frac{2(1-\lambda)^2}{\alpha\lambda^2}}$ times the lower bound in \cref{thm:result_for_advantaged_group}.}
                              \end{remark}

                              \smallskip

                              \noindent  \red{Moreover, even when we add noise to the acquired feature in this iteration. The lower bound in \cref{thm:main_result_main_body} (and that in \cref{thm:result_for_advantaged_group}) continues to hold in subsequent iterations.
                              This is because their proofs hold whenever all features (acquired and auxiliary) are normally-distributed, conditioned on class and group and, as we saw in the above remark, if $Z$ is normally-distributed (for each class and group), then so is $Z_\lambda$.}

                              We also prove two corollaries of \cref{thm:main_thm_adding_noise}, which generalize it (\cref{coro:coro_adding_noise}) and give a stronger result if when an additional condition is satisfied (\cref{coro:coro_adding_noiseb}).

                              \begin{corollary}\label{coro:coro_adding_noise}
                                Under the same assumptions as \cref{thm:main_thm_adding_noise},
                                for each iteration $t\in[d^\prime]$ and each auxiliary feature $Z$,
                                there is a $\Delta\geq 0$ such that for any value\footnote{Note the result is vacuously true if $\bias(t)-\Delta > \bias(t+1)$.} $$\bias(t)-\Delta\leq v\leq \bias(t+1),$$
                                there exists a $0\leq \lambda\leq 1$ such that
                                $\bias_\lambda(t+1)=v$ and Equation \eqref{eq:guarantee_noise_AUC} holds.
                              \end{corollary}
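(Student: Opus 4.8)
The plan is to reuse the continuity of $\lambda\mapsto\bias_\lambda(t+1)$ together with the intermediate value theorem, building on \cref{thm:main_thm_adding_noise}. Fix the iteration $t$ and the acquired feature $Z$, let $g(t)$ be the group that receives the noisy feature (the group with the higher AUC at the end of the iteration), and write $f(\lambda)\coloneqq\bias_\lambda(t+1)$ for $\lambda\in[0,1]$. The first and main step is to show that $f$ is continuous on $[0,1]$. Conditioned on $Y=y$ and $A=g(t)$, the variable $Z_\lambda=\lambda Z+(1-\lambda)N$ is normal with mean $\lambda\,\Ex[Z\mid Y=y,A=g(t)]$ and variance $\lambda^2\Var[Z\mid Y=y,A=g(t)]+(1-\lambda)^2$, and $(X(t),Z_\lambda)$ is jointly normal with a mean vector affine in $\lambda$ and a covariance matrix whose entries are polynomials in $\lambda$. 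The delicate point is that, for group $g(t)$, the matrix $\Sigma_0+\Sigma_1$ of $(X(t),Z_\lambda)$ stays invertible for every $\lambda\in[0,1]$: for $\lambda\in(0,1]$ the conditional covariance of $(X(t),Z_\lambda)$ equals $D_\lambda M_y D_\lambda+(1-\lambda)^2 e e^\top$, where $M_y$ is the conditional covariance of $(X(t),Z)$ (a principal submatrix of the $m$-variate covariance, hence positive definite by \cref{def:binormal_framework}), $D_\lambda=\mathrm{diag}(1,\dots,1,\lambda)$, and $e$ is the last standard basis vector; this is a positive-definite matrix plus a positive-semidefinite one, hence positive definite; at $\lambda=0$ it is block diagonal with the positive-definite conditional covariance of $X(t)$ and the scalar $1$, again positive definite. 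Thus $\Sigma_0+\Sigma_1$ is positive definite throughout $[0,1]$, and by \cref{fact:optimal_auc_expression} we get $\AUC_{g(t)}(X(t),Z_\lambda)=\Phi\bigl(\sqrt{\Delta\mu^\top(\Sigma_0+\Sigma_1)^{-1}\Delta\mu}\bigr)$, which depends continuously on $\lambda$ (inversion is continuous on the invertible matrices and $\Phi(\sqrt\cdot)$ is continuous), while the other group always receives $Z$ and so its AUC is constant in $\lambda$. Since both AUCs are at least $\Phi(0)=\tfrac12>0$, the ratio defining $\bias_\lambda(t+1)$ is continuous, so $f$ is continuous on $[0,1]$.

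Next I would record two anchor values and the AUC guarantee. At $\lambda=1$ we have $Z_1=Z$ for every sample (the $A=g(t)$ branch returns $1\cdot Z+0\cdot N$), hence $f(1)=\bias(t+1)$. By \cref{thm:main_thm_adding_noise} there is some $\lambda^\star\in[0,1]$ with $f(\lambda^\star)\leq\bias(t)$; since $f$ is continuous on the compact interval $[0,1]$ it attains a minimum, so $\min_{\lambda\in[0,1]}f(\lambda)\leq\bias(t)$, and we set $\Delta\coloneqq\bias(t)-\min_{\lambda\in[0,1]}f(\lambda)\geq 0$. Moreover, Equation~\eqref{eq:guarantee_noise_AUC} holds for every $\lambda\in[0,1]$: for each group the corresponding samples see one extra feature ($Z_\lambda$ for group $g(t)$, $Z$ for the other group), and by the definition of $\AUC_g(\cdot)$ as a maximum over generalized linear models (\cref{def:auc_2}), restricting the maximizing model to zero weight on that extra feature shows $\AUC_a(X(t),Z_\lambda)\geq\AUC_a(X(t))$ and $\AUC_b(X(t),Z_\lambda)\geq\AUC_b(X(t))$, independently of $\lambda$.

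The conclusion is then an application of the intermediate value theorem. Pick a minimizer $\lambda_0\in[0,1]$ of $f$ and restrict $f$ to the subinterval of $[0,1]$ with endpoints $\lambda_0$ and $1$; on it $f$ is continuous and takes the values $f(\lambda_0)=\bias(t)-\Delta$ and $f(1)=\bias(t+1)$ at the two endpoints, so it realizes every value $v$ with $\bias(t)-\Delta\leq v\leq\bias(t+1)$. For such a $v$, pick $\lambda$ in this subinterval with $\bias_\lambda(t+1)=f(\lambda)=v$; by the previous step this $\lambda$ also satisfies Equation~\eqref{eq:guarantee_noise_AUC}. (If $\bias(t+1)<\bias(t)-\Delta$ the interval of admissible $v$ is empty and the statement is vacuous, as in the footnote.)

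I expect the main obstacle to be precisely the continuity step — and within it the invertibility of the covariance matrix of $(X(t),Z_\lambda)$ on group $g(t)$ for all $\lambda\in[0,1]$, including the degenerate-looking endpoint $\lambda=0$ where $Z_\lambda$ is pure noise — since this is what makes the closed-form AUC of \cref{fact:optimal_auc_expression} applicable and continuous over the whole interval. Once continuity on the compact interval $[0,1]$ is in hand, everything else is bookkeeping with \cref{thm:main_thm_adding_noise} and the intermediate value theorem.
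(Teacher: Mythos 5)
Your proposal is correct, and it rests on the same two pillars as the paper's argument --- continuity of $\lambda\mapsto\bias_\lambda(t+1)$ (the paper's \cref{lem:continuity}) plus the intermediate value theorem anchored at $\lambda=1$, where $\bias_1(t+1)=\bias(t+1)$ --- but the instantiation is genuinely different. The paper reuses the internals of the proof of \cref{thm:main_thm_adding_noise}: it sets the explicit value $\Delta=\min\{\zeta/\AUC_a(X(t)),\,\bias(t)\}$ with $\zeta=\AUC_b(X(t),Z)-\AUC_b(X(t))$, and runs the same two cases as the theorem (in Case A it has a point $\alpha$ with $\bias_\alpha(t+1)=0$, in Case B it has $\bias_0(t+1)\le\bias(t)-\Delta$) before applying the IVT on $[\alpha,1]$ or $[0,1]$. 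You instead treat \cref{thm:main_thm_adding_noise} as a black box: you take $\Delta\coloneqq\bias(t)-\min_{\lambda\in[0,1]}\bias_\lambda(t+1)$, which exists by compactness and is nonnegative by the theorem, and apply the IVT between a minimizer and $\lambda=1$, with no case analysis. Your $\Delta$ is the largest admissible one (so your statement is, if anything, stronger), while the paper's is explicit and quantitative --- a feature the paper later exploits in the proof of \cref{thm:pareto_dom}, where the bound $\bias_{\alpha,\beta}(t+1)\ge\bias(t)-\Delta$ is verified for that particular $\Delta$; your non-constructive choice would require a small extra step there. Your re-derivation of the continuity lemma is also slightly different and cleaner: writing the conditional covariance of $(X(t),Z_\lambda)$ as $D_\lambda M_y D_\lambda+(1-\lambda)^2 ee^\top$ and arguing positive definiteness directly (including at $\lambda=0$) replaces the paper's appeal to the Schur-complement result of \cite{lu2002inverses}, and your observation that Equation~\eqref{eq:guarantee_noise_AUC} holds for every $\lambda\in[0,1]$ (via \cref{fact:information_inequality}, by zeroing the weight on the extra feature) is correct and mildly stronger than what the paper states.
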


                              \noindent In the next corollary, note that we can swap groups $a$ and $b$.

                              \begin{corollary}\label{coro:coro_adding_noiseb}
                                Under the same assumptions as \cref{thm:main_thm_adding_noise},
                                for each iteration $t\in[d^\prime]$ and each auxiliary feature $Z$,
                                if group $a$ is the advantaged group at the end of the $t$-th iteration (i.e., $\AUC_a(X(t),Z) \geq \AUC_b(X(t),Z)$) and  $\AUC_a(X(t))\leq \AUC_b(X(t),Z),$
                                then
                                there exists a value $0\leq\lambda\leq 1$ such that $\bias_\lambda(t+1)=0$ and \cref{eq:guarantee_noise_AUC} holds.
                              \end{corollary}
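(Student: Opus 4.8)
The plan is to prove Corollary~\ref{coro:coro_adding_noiseb} by a one–parameter continuity argument: view the relevant AUCs as functions of the noise weight $\lambda\in[0,1]$ from Equation~\eqref{def:z_gamma} and apply the intermediate value theorem to drive the two groups' AUCs to a common value. Fix the iteration $t$ and the acquired feature $Z$, and recall that $Z_\lambda=\lambda Z+(1-\lambda)N$ on samples with $A=a$ (the advantaged group at the end of the iteration, to which noise is added) and $Z_\lambda=Z$ on samples with $A=b$, where $N$ is independent standard normal. Because $Z$ is normal conditioned on each $(Y=y,A=g)$ in the binormal framework (\cref{def:binormal_framework}) and $N$ is independent normal, $(X(t),Z_\lambda)$ is still jointly normal conditioned on each $(Y,A)$; hence \cref{fact:optimal_auc_expression} applies and $\AUC_g(X(t),Z_\lambda)=\Phi\bigl(\sqrt{\Delta\mu_{\lambda,g}^\top\Sigma_{\lambda,g}^{-1}\Delta\mu_{\lambda,g}}\bigr)$ for each group $g$, where $\Delta\mu_{\lambda,g}$ and $\Sigma_{\lambda,g}$ are the class-conditional mean-difference vector and summed class-conditional covariance matrix of $(X(t),Z_\lambda)$ on group $g$, provided $\Sigma_{\lambda,g}$ is invertible.

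First I would pin down the endpoints. Since noise is added only to group $a$, the restriction of $Z_\lambda$ to group $b$ equals $Z$ for every $\lambda$, so $\AUC_b(X(t),Z_\lambda)=\AUC_b(X(t),Z)$ is constant in $\lambda$. At $\lambda=1$, $Z_1=Z$ everywhere, so by the first hypothesis $\AUC_a(X(t),Z_1)=\AUC_a(X(t),Z)\ge\AUC_b(X(t),Z)$. At $\lambda=0$, $Z_0=N$ is pure noise on group $a$: it has equal class-conditional means ($\Ex[N\mid Y=y,A=a]=0$) and vanishing class-conditional covariance with $X(t)$, so in the closed form the extra coordinate contributes nothing to $\Delta\mu^\top\Sigma^{-1}\Delta\mu$ and $\AUC_a(X(t),Z_0)=\AUC_a(X(t))$, which by the second hypothesis is $\le\AUC_b(X(t),Z)$. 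Thus $f(\lambda)\coloneqq\AUC_a(X(t),Z_\lambda)-\AUC_b(X(t),Z)$ satisfies $f(1)\ge 0\ge f(0)$.

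To make this rigorous I would verify continuity of $f$: each entry of $\Delta\mu_{\lambda,a}$ and $\Sigma_{\lambda,a}$ is a polynomial in $\lambda$ of degree at most two (the $Z_\lambda$-mean scales by $\lambda$, the $X(t)$–$Z_\lambda$ covariances scale by $\lambda$, and the $Z_\lambda$-variance entry is $\lambda^2$ times the sum of $Z$'s class-conditional variances plus $2(1-\lambda)^2$), and $\Sigma_{\lambda,a}$ is positive definite for every $\lambda\in[0,1]$. Indeed $\Sigma_{\lambda,a}=D\Sigma'_aD+($ a positive semidefinite rank-one matrix supported on the $Z_\lambda$-coordinate$)$ with $D=\operatorname{diag}(1,\dots,1,\lambda)$ and $\Sigma'_a$ the positive definite summed covariance submatrix of $(X(t),Z)$ on group $a$; for $\lambda\in(0,1]$ the term $D\Sigma'_aD$ is positive definite, and at $\lambda=0$ the matrix is block diagonal with the positive definite $X(t)$-block and the scalar $2$. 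Matrix inversion being continuous on invertible matrices, $\lambda\mapsto\Sigma_{\lambda,a}^{-1}$ and hence $\lambda\mapsto\Delta\mu_{\lambda,a}^\top\Sigma_{\lambda,a}^{-1}\Delta\mu_{\lambda,a}$ is continuous, and composing with $\Phi(\sqrt{\cdot})$ shows $f$ is continuous. By the intermediate value theorem there is $\lambda^\star\in[0,1]$ with $\AUC_a(X(t),Z_{\lambda^\star})=\AUC_b(X(t),Z)=\AUC_b(X(t),Z_{\lambda^\star})$, so $\bias_{\lambda^\star}(t+1)=1-\max\{1,1\}=0$. Finally Equation~\eqref{eq:guarantee_noise_AUC} holds at $\lambda^\star$ because augmenting a feature never decreases the optimal AUC (in \cref{def:auc_2}, restrict to weights that vanish on the new coordinate): $\AUC_b(X(t),Z_{\lambda^\star})=\AUC_b(X(t),Z)\ge\AUC_b(X(t))$ and $\AUC_a(X(t),Z_{\lambda^\star})\ge\AUC_a(X(t))$.

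The step I expect to be the crux is the $\lambda=0$ endpoint evaluation — that adding a pure independent-noise feature leaves the optimal AUC \emph{exactly} unchanged rather than merely non-decreasing. This exact equality is precisely what yields $f(0)\le 0$ and the sign change needed for the intermediate value theorem, and it is where the hypothesis $\AUC_a(X(t))\le\AUC_b(X(t),Z)$ is used; without it one can only guarantee $\bias_\lambda(t+1)\le\bias(t)$ as in \cref{thm:main_thm_adding_noise} rather than $\bias_{\lambda^\star}(t+1)=0$. The remaining work is routine — checking that $\Sigma_{\lambda,a}$ stays non-degenerate on the whole closed interval $[0,1]$, reconciling the fact that the noise contributes variance $2(1-\lambda)^2$, which vanishes at $\lambda=1$ where instead $Z$'s own positive class-conditional variances from \cref{def:binormal_framework} keep the covariance matrix invertible.
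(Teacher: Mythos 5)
Your proposal is correct and takes essentially the same route as the paper, whose proof of this corollary is exactly Case A in the proof of \cref{thm:main_thm_adding_noise}: exploit that $\AUC_b(X(t),Z_\lambda)$ is invariant in $\lambda$, that $\AUC_a(X(t),Z_0)=\AUC_a(X(t))$ and $\AUC_a(X(t),Z_1)=\AUC_a(X(t),Z)$, and then apply the intermediate value theorem using continuity of $\lambda\mapsto\AUC_a(X(t),Z_\lambda)$. The only minor differences are cosmetic: you establish continuity via a direct positive-definiteness decomposition of the summed covariance matrix instead of invoking the paper's \cref{lem:continuity} (which argues invertibility of $\Lambda(\lambda)$ through a Schur-complement result), and you spell out \cref{eq:guarantee_noise_AUC} explicitly, which the paper leaves implicit at the chosen $\lambda$.
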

                              \noindent Finally, we show that the above method of adding noise to only one group dominates a method that adds noise to both groups.
                              Given parameters $0\leq \alpha,\beta\leq 1$, determining the extent of noise on groups $a$ and $b$, the new method uses
                              \begin{align*}
                                Z_{\alpha,\beta} \coloneqq \begin{cases}
                                \alpha\cdot Z + (1-\alpha)\cdot N & \text{if } A = a,\\
                                \beta\cdot Z + (1-\beta)\cdot N & \text{if } A = b.
                              \end{cases}
                              \yesnum\label{def:z_gamma_alpha}
                            \end{align*}
                            Let $\bias_{\alpha,\beta}(t+1)$ be the bias obtained by using the features $\inparen{X(t), Z_{\alpha,\beta}}$.
                            We show that for any fixed value of bias, i.e., $\bias_{\alpha,\beta}(t+1)=\bias_{\lambda}(t+1)$, the group-wise AUCs obtained by adding noise to only the advantaged group (i.e., using $Z_\lambda$) weakly Pareto dominates the group-wise AUCs obtained by adding noise to both groups (i.e., using $Z_{\alpha,\beta}$).

                            \begin{theorem}\label{thm:pareto_dom}
                              Under the same assumptions as \cref{thm:main_thm_adding_noise},
                              for any $0\leq\alpha,\beta\leq 1$ such that $\bias_{\alpha,\beta}(t+1)\leq \bias(t+1)$, there exists  $0\leq\lambda\leq 1$ such that
                              $\bias_{\alpha,\beta}(t+1)=\bias_\lambda(t+1)$ and
                              \begin{align*}
                                \AUC_{a}(X(t), Z_\lambda)\geq \AUC_{a}(X(t), Z_{\alpha,\beta})
                                \quad\text{and}\quad
                                \AUC_{b}(X(t), Z_\lambda)\geq \AUC_{b}(X(t), Z_{\alpha,\beta}).
                              \end{align*}
                            \end{theorem}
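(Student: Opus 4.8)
The plan is to collapse the whole statement onto a single one-parameter family of AUCs together with one monotonicity fact about it. For a group $g\in\ab$ and $c\in[0,1]$, let $f_g(c)$ be the AUC of group $g$ (in the sense of \cref{def:auc_2}) when the acquired features $X(t)$ are augmented, on the samples of group $g$, by the scalar feature $c\cdot Z+(1-c)\cdot N$, where $N$ is the independent standard-normal noise from the statement. Then $f_g(1)=\AUC_g(X(t),Z)$ and $f_g(0)=\AUC_g(X(t))$ (pure independent noise contributes nothing to the optimal GLM, since its coordinate has zero class-mean-difference). Writing $g(t)$ for the group with the higher AUC after acquiring $Z$ and assuming without loss of generality that $g(t)=a$ (the case $g(t)=b$ is symmetric, the statement being symmetric in $a$ and $b$), we have $\AUC_a(X(t),Z_\lambda)=f_a(\lambda)$, $\AUC_b(X(t),Z_\lambda)=f_b(1)$, $\AUC_a(X(t),Z_{\alpha,\beta})=f_a(\alpha)$, and $\AUC_b(X(t),Z_{\alpha,\beta})=f_b(\beta)$. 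So the theorem reduces to the lemma I would prove first: \emph{each $f_g$ is continuous and non-decreasing on $[0,1]$.}

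For the lemma, I would use that AUC is unchanged when a feature coordinate is multiplied by a positive scalar (a GLM absorbs the scalar into its weight), so for $c\in(0,1]$, $f_g(c)=\AUC_g\inparen{X(t),\,Z+\sigma N}$ with $\sigma\coloneqq(1-c)/c$. By \cref{fact:optimal_auc_expression} this equals $\Phi\inparen{\sqrt{\Delta m^\top M(\sigma)^{-1}\Delta m}}$, where the class-mean-difference vector $\Delta m$ is independent of $\sigma$ ($N$ has mean zero in both classes, so the noise coordinate contributes $0$) and $M(\sigma)$ — the sum of the two class-conditional covariance matrices of $\inparen{X(t),Z+\sigma N}$ — is obtained from $M(0)$ by adding $2\sigma^2$ to its last diagonal entry. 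All matrices here are positive definite (submatrices and sums of the positive-definite covariance matrices of \cref{def:binormal_framework}, plus a PSD perturbation), so $0\prec M(\sigma_1)\preceq M(\sigma_2)$ for $\sigma_1\le\sigma_2$, and monotonicity of matrix inversion on positive definite matrices gives $M(\sigma_2)^{-1}\preceq M(\sigma_1)^{-1}$; hence $\Delta m^\top M(\sigma)^{-1}\Delta m$ is non-increasing in $\sigma$, so $f_g$ is non-decreasing in $c$ on $(0,1]$. Continuity on $(0,1]$ is routine from continuity of $\sigma\mapsto M(\sigma)^{-1}$, and as $\sigma\to\infty$ the quadratic form tends to the one defining $\AUC_g(X(t))$ via \cref{fact:optimal_auc_expression}, so $f_g(c)\to f_g(0)$ as $c\to0^+$ and the lemma holds on all of $[0,1]$.

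Granting the lemma, here is the endgame. Write $r\coloneqq1-\bias_{\alpha,\beta}(t+1)=\min\inbrace{f_a(\alpha),f_b(\beta)}/\max\inbrace{f_a(\alpha),f_b(\beta)}\in(0,1]$ and set the target $c^\star\coloneqq f_b(1)/r$. Since $g(t)=a$ gives $f_a(1)\ge f_b(1)$, so $\bias(t+1)=1-f_b(1)/f_a(1)$, the hypothesis $\bias_{\alpha,\beta}(t+1)\le\bias(t+1)$ reads $r\ge f_b(1)/f_a(1)$, i.e.\ $c^\star\le f_a(1)$; and $c^\star\ge f_b(1)$ because $r\le1$. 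A short split on whether $f_a(\alpha)\ge f_b(\beta)$ (using $f_b(1)\ge f_b(\beta)$ in one branch and $f_b(\beta)>f_a(\alpha)\ge0$ in the other) shows $c^\star\ge f_a(\alpha)\ge f_a(0)$, so $c^\star\in[f_a(0),f_a(1)]$, which is exactly the range of the continuous non-decreasing map $f_a$. The intermediate value theorem then yields $\lambda\in[0,1]$ with $f_a(\lambda)=c^\star$, and for this $\lambda$: $\AUC_a(X(t),Z_\lambda)=c^\star\ge f_a(\alpha)=\AUC_a(X(t),Z_{\alpha,\beta})$; $\AUC_b(X(t),Z_\lambda)=f_b(1)\ge f_b(\beta)=\AUC_b(X(t),Z_{\alpha,\beta})$ by the lemma; and, using $c^\star\ge f_b(1)$, $\bias_\lambda(t+1)=1-f_b(1)/c^\star=1-r=\bias_{\alpha,\beta}(t+1)$. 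That is exactly the asserted equality of biases together with the claimed Pareto domination of the group-wise AUCs.

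The main obstacle is the monotonicity lemma — really just identifying that enlarging the noise variance enlarges the relevant covariance matrix in the semidefinite order, after which monotonicity of matrix inversion finishes it; everything downstream is bookkeeping with the closed-form AUC formula and the intermediate value theorem. The only point in the endgame needing genuine care is the inequality $c^\star\ge f_a(\alpha)$, and that is precisely where the hypothesis $\bias_{\alpha,\beta}(t+1)\le\bias(t+1)$ and the choice to perturb the advantaged group are used.
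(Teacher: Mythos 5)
Your proposal is correct, but it proves the theorem by a genuinely different route than the paper. The paper first shows $\bias_{\alpha,\beta}(t+1)\geq \bias(t)-\Delta$ and invokes \cref{coro:coro_adding_noise} (built on \cref{thm:main_thm_adding_noise}, \cref{lem:continuity}, and the intermediate value theorem) to produce $\lambda$ with matching bias, and then establishes the Pareto domination through the sandwich lemma \cref{lem:range_of_AUC} --- $\AUC_g(X(t))\leq \AUC_g(X(t),Z_{\alpha,\beta})\leq \AUC_g(X(t),Z)$, proved by projection/information arguments (\cref{fact:information_inequality}, \cref{fact:optimal_auc_expression}) --- together with a two-case algebraic comparison that leans on the equality of biases and on $\AUC_a(X(t),Z_\lambda)\geq \AUC_b(X(t),Z_\lambda)$. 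You instead prove a stronger structural fact: the group-wise AUC $f_g(c)$ of $X(t)$ augmented with $c\,Z+(1-c)N$ is continuous and \emph{non-decreasing} in $c$, obtained from the closed-form binormal AUC, scale-invariance of the GLM AUC, and Loewner anti-monotonicity of matrix inversion (adding $2\sigma^2$ to the last diagonal entry of the summed class covariances only shrinks the quadratic form); this subsumes the paper's sandwich lemma. With monotonicity in hand you construct the target value $c^\star=f_b(1)/r$ explicitly, check $c^\star\in[f_a(0),f_a(1)]$ (the hypothesis $\bias_{\alpha,\beta}(t+1)\leq\bias(t+1)$ enters exactly to give $c^\star\leq f_a(1)$, and the split on $f_a(\alpha)\gtrless f_b(\beta)$ gives $c^\star\geq f_a(\alpha)$), and finish by the intermediate value theorem, after which both AUC comparisons and the bias equality are immediate. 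What each approach buys: yours is self-contained (it does not pass through \cref{thm:main_thm_adding_noise} or \cref{coro:coro_adding_noise}), identifies the exact target AUC for the advantaged group, and makes the Pareto comparison a one-line consequence of monotonicity; the paper's route reuses its existing corollary machinery and records along the way that the chosen $\lambda$ also satisfies \cref{eq:guarantee_noise_AUC}, which the statement here does not demand (and which your $\lambda$ in fact also satisfies, since $c^\star\geq f_a(\alpha)\geq f_a(0)$ and $f_b(1)\geq f_b(0)$, though you do not state it). The only point to make explicit when writing this up is the limit $f_g(c)\to f_g(0)$ as $c\to 0^+$ (via the block-inverse formula), which you correctly flag as routine, so that the intermediate value theorem applies on all of $[0,1]$.
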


                            \subsubsection{Proof of \cref{thm:main_thm_adding_noise}}
                            In this section, we prove \cref{thm:main_thm_adding_noise}.
                            We need to show that there exists a $0\leq \lambda\leq 1$ such that $\bias_\lambda(t+1) \leq \bias(t)$.
                            Recall that $\bias_\lambda(t+1)$ is a function of $\AUC_a(X(t), Z_\lambda)$ and $\AUC_b(X(t), Z_\lambda)$.
                            We will express $\AUC_a(X(t), Z_\lambda)$ and $\AUC_b(X(t), Z_\lambda)$ as functions of $\lambda$.
                            The proof follows by analyzing these functions.
                            Without loss of generality assume that $g(t)=a$, i.e.,
                            $$\AUC_a(X(t),Z) \geq \AUC_b(X(t),Z).$$

                            \smallskip
                            \noindent{\bf Expression for $\mathbf{\AUC_b(X(t), Z_\lambda)}$.}
                            Since $(Z_\lambda=Z)\mid A=b$ (\cref{def:z_gamma}), we get that
                            \begin{align*}
                              \forall \lambda\in [0,1],\quad
                              \AUC_b(X(t), Z_\lambda) = \AUC_b(X(t), Z).
                              \yesnum\label{eq:invariance}
                            \end{align*}
                            Hence, $\AUC_b(X(t), Z_\lambda)$ is invariant of $\lambda$.

                            \smallskip
                            \noindent{\bf Expression for $\mathbf{\AUC_a(X(t), Z_\lambda)}$.}
                            Using \cref{fact:optimal_auc_expression}, we can express $\AUC_a(X(t), Z_\lambda)$ as a function of the conditional means and covariances of $\inparen{X(t), Z_\lambda}$; conditioned on the events $(A=a,Y=0)$ or $(A=a,Y=1)$.
                            We need some additional notation to state this expression.
                            For each $y\in \zo$, let the mean and covariance matrix of $\inparen{X(t), Z}$ conditioned on $(Y=y,A=a)$ be:
                            \begin{align*}
                              \begin{bmatrix}
                                \mu_{X,y}\\ \mu_{Z,y}
                              \end{bmatrix}
                              \quad\text{and}\quad
                              \begin{bmatrix}
                                \Sigma_{y} & \rho_y \\
                                \rho_y^\top & \sigma_y
                                \end{bmatrix},
                                \yesnum\label{eq:mean_and_cov_1}
                              \end{align*}
                              where for each $y\in \zo$
                              \begin{itemize}
                                \item $\mu_{X,y}\in \R^{d+t}$ is the mean of $X(t)|Y=y, A=a$,
                                \item $\mu_{Z,y}\in \R$ is the mean of $Z|Y=y, A=a$,
                                \item $\Sigma_y\in \R^{(d+t)\times (d+t)}$ is the covariance matrix of $X(t)|Y=y, A=a$,
                                \item $\sigma_y\in \R$ is the variance of $Z|Y=y, A=a$, and
                                \item $\rho_y\in \R^{d+t}$ is the covariance of $X(t)|Y=y, A=a$ and $Z|Y=y, A=a$.
                              \end{itemize}
                              From the definition of $Z_\lambda$ (\cref{def:z_gamma}), \cref{eq:mean_and_cov_1}, and the fact that $N$ is a standard normal variable independent of $(X(t),Z,A,Y)$,
                              we get that the mean and covariance matrix of $\inparen{X(t), Z_\lambda}$ conditioned on $(Y=y,A=a)$ to be:
                              \begin{align*}
                                \begin{bmatrix}
                                  \mu_{X,y}\\ \lambda\cdot \mu_{Z,y}
                                \end{bmatrix}
                                \quad\text{and}\quad
                                \begin{bmatrix}
                                  \Sigma_{y} & \lambda\cdot \rho_y \\
                                  \lambda \cdot \rho_y^\top & \lambda^2\cdot \sigma_y+ (1-\lambda)^2
                                  \end{bmatrix}.
                                  \yesnum\label{eq:mean_and_cov_gamma}
                                \end{align*}
                                To simplify the notation, define
                                \begin{align*}
                                  \Sigma &\coloneqq \Sigma_0 + \Sigma_1,\qquad \ \ \
                                  \sigma\coloneqq \sigma_0 + \sigma_1,\qquad
                                  \rho\coloneqq \rho_0 + \rho_1,\\
                                  \Delta\mu_{X} &\coloneqq \abs{\mu_{X,1} - \mu_{X,0}},\quad
                                  \Delta\mu_{Z} \coloneqq \abs{\mu_{Z,1} - \mu_{Z,0}}.
                                \end{align*}
                                Substituting the above definitions and Equation~\eqref{eq:mean_and_cov_gamma} in \cref{fact:optimal_auc_expression}, we get that
                                \begin{align*}
                                  \AUC_a(X(t), Z_\lambda)
                                  =
                                  \Phi\inparen{
                                  \begin{bmatrix}
                                    \Delta\mu_{X}\\ \lambda\cdot \Delta\mu_{Z}
                                    \end{bmatrix}^\top
                                    \begin{bmatrix}
                                      \Sigma & \lambda\cdot \rho \\
                                      \lambda \cdot \rho^\top & \lambda^2\cdot \sigma+ (1-\lambda)^2
                                    \end{bmatrix}^{-1}
                                    \begin{bmatrix}
                                      \Delta\mu_{X}\\ \lambda\cdot \Delta\mu_{Z}
                                    \end{bmatrix}
                                    }.
                                    \yesnum\label{eq:exp_auc_gamma}
                                  \end{align*}
                                  Substituting $\lambda=0$ and $\lambda=1$  in \cref{eq:exp_auc_gamma}, we recover that
                                  \begin{align*}
                                    \AUC_a(X(t), Z_0)=\AUC_a(X(t))
                                    \quad\text{and}\quad
                                    \AUC_a(X(t), Z_1)=\AUC_a(X(t), Z).
                                    \yesnum\label{eq:auc_exp_for_0_and_1}
                                  \end{align*}
                                  This is expected because $Z_0|g=a$ is an independent standard normal variable, and therefore does not provide ``any information'' about $Y$, and because $Z_1=Z$, respectively.
                                  Using \cref{eq:exp_auc_gamma} we can also prove the following result.
                                  \begin{lemma}\label{lem:continuity}
                                    $\AUC_a(X(t),Z_\lambda)$ is a continuous function of $\lambda$ over $[0,1]$.
                                  \end{lemma}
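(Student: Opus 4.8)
\textbf{Proof proposal for \cref{lem:continuity}.}

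The plan is to read continuity directly off Equation~\eqref{eq:exp_auc_gamma}. Write $v(\lambda)\coloneqq \mat{\Delta\mu_{X}\\ \lambda\cdot\Delta\mu_{Z}}$ and $M(\lambda)\coloneqq \mat{\Sigma & \lambda\cdot\rho \\ \lambda\cdot\rho^\top & \lambda^2\cdot\sigma+(1-\lambda)^2}$, so that $\AUC_a(X(t),Z_\lambda)=\Phi\inparen{\sqrt{q(\lambda)}}$ with $q(\lambda)\coloneqq v(\lambda)^\top M(\lambda)^{-1}v(\lambda)$ (the outer function, $\Phi$ composed with a square root, per \cref{fact:optimal_auc_expression}, is continuous on $[0,\infty)$, and the square is harmless either way). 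Every entry of $v(\lambda)$ and of $M(\lambda)$ is a polynomial in $\lambda$, hence continuous; so the whole argument reduces to showing that $M(\lambda)$ is invertible for every $\lambda\in[0,1]$, after which $\lambda\mapsto M(\lambda)^{-1}$ is continuous by Cramer's rule ($M(\lambda)^{-1}=\mathrm{adj}(M(\lambda))/\det M(\lambda)$, whose entries are rational functions of $\lambda$ whose denominator never vanishes on $[0,1]$), $q$ is then a finite sum of products of continuous functions and is nonnegative since $M(\lambda)^{-1}$ is positive definite, and composing with $x\mapsto\Phi(\sqrt{x})$ finishes the proof.

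So the real content is the claim $M(\lambda)\succ 0$ for all $\lambda\in[0,1]$. I would prove it by identifying $M(\lambda)=\sum_{y\in\zo}M_y(\lambda)$, where $M_y(\lambda)\coloneqq \mat{\Sigma_y & \lambda\cdot\rho_y \\ \lambda\cdot\rho_y^\top & \lambda^2\cdot\sigma_y+(1-\lambda)^2}$ is the covariance matrix of $\inparen{X(t),Z_\lambda}$ conditioned on $(Y=y,A=a)$ (this is exactly \eqref{eq:mean_and_cov_gamma}). Since a sum of positive definite matrices is positive definite, it suffices that each $M_y(\lambda)\succ 0$. Here $\Sigma_y\succ 0$ already, because it is a principal submatrix of the $m$-variate conditional covariance matrix of $X\cup Z$ given $(Y=y,A=a)$, which is invertible and PSD, hence PD, by \cref{def:binormal_framework}, and principal submatrices of PD matrices are PD. By the Schur complement criterion, $M_y(\lambda)\succ 0$ iff $\lambda^2\cdot\sigma_y+(1-\lambda)^2-\lambda^2\cdot\rho_y^\top\Sigma_y^{-1}\rho_y = \lambda^2\inparen{\sigma_y-\rho_y^\top\Sigma_y^{-1}\rho_y}+(1-\lambda)^2>0$; and $\sigma_y-\rho_y^\top\Sigma_y^{-1}\rho_y>0$ is itself the strictly positive Schur complement of the PD matrix $\mat{\Sigma_y & \rho_y\\ \rho_y^\top & \sigma_y}$ (another principal submatrix of the same $m$-variate covariance). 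Thus for $\lambda\in(0,1]$ the quantity is a strictly positive term plus a nonnegative term, and for $\lambda=0$ it equals $1$; in all cases it is positive, so $M_y(\lambda)\succ 0$ and hence $M(\lambda)\succ 0$.

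I expect the main obstacle to be exactly this uniform positive-definiteness over the \emph{closed} interval: one must make sure the argument simultaneously covers the interior and the two boundary regimes — $\lambda=0$, where $Z_\lambda\mid A=a$ degenerates to pure independent noise and the off-diagonal block of $M_y$ vanishes, and $\lambda=1$, where $Z_\lambda=Z$ — and one must be careful that the strict inequality $\sigma_y-\rho_y^\top\Sigma_y^{-1}\rho_y>0$ is a genuine consequence of the invertibility built into \cref{def:binormal_framework} rather than an extra assumption. The Schur-complement computation above dispatches all of these at once; everything after that (continuity of matrix inversion, of the quadratic form, and of $\Phi(\sqrt{\cdot})$) is routine.
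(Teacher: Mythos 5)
Your proof is correct and takes essentially the same route as the paper: both reduce the lemma to showing that the $2\times 2$ block matrix in \eqref{eq:exp_auc_gamma} is positive definite (hence invertible) for every $\lambda\in[0,1]$ via a Schur-complement argument grounded in the invertibility required by \cref{def:binormal_framework}, and then conclude using continuity of matrix inversion and of $\Phi(\sqrt{\cdot})$. The differences are cosmetic — you verify positive definiteness class-by-class and sum, whereas the paper works with the aggregated $\Sigma,\rho,\sigma$ and a block-inverse theorem; also, your identity $M(\lambda)=\sum_{y}M_y(\lambda)$ yields $2(1-\lambda)^2$ in the corner rather than $(1-\lambda)^2$, a harmless factor-of-two discrepancy already present between the paper's \eqref{eq:exp_auc_gamma} and \eqref{eq:mean_and_cov_gamma} that does not affect the argument.
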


                                  \paragraph{\em Proof of \cref{thm:main_thm_adding_noise}.}

                                  If $\AUC_a(X(t), Z)=\AUC_b(X(t), Z)$, then $\bias(t+1)=0$.
                                  Since $\bias(t)\geq 0$ and $\bias(t+1)=\bias_1(t+1)$, we are done.
                                  Henceforth, assume that
                                  \begin{align*}
                                    \AUC_a(X(t), Z) > \AUC_b(X(t), Z).
                                    \yesnum\label{eq:non_trivial_case}
                                  \end{align*}

                                  \paragraph{Case A ($\AUC_b(X(t), Z)\geq \AUC_a(X(t))$:}
                                  In this case
                                  \begin{align*}
                                    \AUC_a(X(t), Z_0)
                                    \quad &\Stackrel{\eqref{eq:auc_exp_for_0_and_1}}{=} \quad
                                    \AUC_a(X(t))\\
                                    &\leq\quad \AUC_b(X(t), Z)\\
                                    \quad&\Stackrel{\eqref{eq:non_trivial_case}}{<} \quad\AUC_a(X(t), Z)\\
                                    \quad &\Stackrel{\eqref{eq:auc_exp_for_0_and_1}}{=} \quad \AUC_a(X(t), Z_1).
                                  \end{align*}
                                  Let $f(\lambda)\coloneqq \AUC_a(X(t), Z_\lambda)$.
                                  $f$ is continuous over $[0,1]$ by \cref{lem:continuity}.
                                  Since $f$ is continuous over $[0,1]$ and $f(0)\leq \AUC_b(X(t), Z) < f(1)$,
                                  by the intermediate value theorem there exists an $\alpha\in [0,1]$ such that $f(\alpha)=\AUC_b(X(t), Z)$.
                                  Using $f(\alpha)=\AUC_b(X(t), Z)$ and \cref{eq:invariance}, we get
                                  $$\bias_\alpha(t+1)=1-\frac{\AUC_b(X(t), Z_\alpha)}{\AUC_a(X(t), Z_\alpha)}=1-\frac{\AUC_b(X(t), Z)}{\AUC_b(X(t), Z)}=0.$$
                                  Hence, in this case \cref{thm:main_thm_adding_noise} follows for $\lambda=\alpha$.
                                  This also proves \cref{coro:coro_adding_noiseb}.

                                  \smallskip

                                  \paragraph{Case B ($\AUC_b(X(t), Z) < \AUC_a(X(t))$:}
                                  Define
                                  \begin{align*}
                                    \zeta\coloneqq \AUC_b(X(t),Z)-\AUC_b(X(t)).
                                    \yesnum\label{def:zeta_noise}
                                  \end{align*}
                                  We have that
                                  \begin{align*}
                                    \AUC_a(X(t), Z_0)
                                    \quad \Stackrel{\eqref{eq:auc_exp_for_0_and_1}}{=} \quad \AUC_a(X(t))
                                    > \AUC_b(X(t), Z)
                                    \quad\Stackrel{\eqref{eq:invariance}}{=}\quad
                                    \AUC_b(X(t), Z_0).
                                  \end{align*}
                                  Using this inequality, we can express $\bias_0(t+1)$ as
                                  \begin{align*}
                                    \bias_0(t+1)
                                    \quad
                                    &=\quad  1-\frac{\AUC_b(X(t),Z_0)}{\AUC_a(X(t), Z_0)}\\
                                    &\Stackrel{\eqref{def:zeta_noise}}{=}\quad  1-\frac{\AUC_b(X(t))+\zeta}{\AUC_a(X(t), Z_0)}\\
                                    &\Stackrel{\eqref{eq:auc_exp_for_0_and_1}}{=}\quad  1-\frac{\AUC_b(X(t))+\zeta}{\AUC_a(X(t))}.
                                    \intertext{Using \cref{fact:information_inequality}, we have that $\AUC_b(X(t),Z)\geq \AUC_b(X(t))$.
                                    Combining this inequality with the fact that in this case $\AUC_b(X(t), Z) < \AUC_a(X(t))$, it follows that  $\AUC_a(X(t)) > \AUC_b(X(t))$.
                                    Consequently}
                                    \bias_0(t+1) \quad&\Stackrel{}{=}\quad  1-\max\inbrace{\frac{\AUC_b(X(t))}{\AUC_a(X(t))}, \frac{\AUC_a(X(t))}{\AUC_b(X(t))}  }   -   \frac{\zeta}{\AUC_a(X(t))}\\
                                    &=\quad \bias(t)-\frac{\zeta}{\AUC_a(X(t))}.
                                  \end{align*}
                                  By \cref{thm:main_result,thm:result_for_advantaged_group} and the definition of $\zeta$, we have that $\zeta\geq 0$ and, hence, $\frac{\zeta}{\AUC_a(X(t))}\geq 0$.
                                  Therefore, in this case, \cref{thm:main_thm_adding_noise} follows for $\lambda=0$.

                                  \paragraph{\em Proof of \cref{coro:coro_adding_noise}.}

                                  \begin{proof}{Proof of \cref{coro:coro_adding_noise}}
                                    Let $\Delta \coloneqq \min\inbrace{\frac{\zeta}{\AUC_a(X(t))}, \bias(t)}$, where $\zeta$ is as defined in \cref{def:zeta_noise}.
                                    If $\AUC_a(X(t), Z)=\AUC_b(X(t), Z)$, then $\bias(t+1)=0$.
                                    Since $\bias(t)-\Delta\geq 0$, we are done.
                                    Henceforth, assume that $\AUC_a(X(t), Z) > \AUC_b(X(t), Z).$
                                    Using \cref{lem:continuity}, \cref{eq:invariance}, and the definition of $\bias_\lambda(t+1)$, it follows that
                                    $\bias_\lambda(t+1)$ is a continuous function of $\lambda$ over $[0,1]$.

                                    \smallskip
                                    \paragraph{Case A ($\AUC_b(X(t), Z)\geq \AUC_a(X(t))$:}
                                    In the proof of \cref{thm:main_thm_adding_noise} we showed that there exists a $\alpha\in [0,1]$
                                    such that $\bias_\alpha(t+1)=0$.
                                    Since
                                    $\bias_\alpha(t+1) = 0 \leq \bias(t)-\Delta$ and $\bias_1(t+1)= \bias(t+1)$, using the intermediate value theorem it follows that
                                    for any $\bias(t)-\Delta<v\leq \bias(t+1)$,
                                    there exists a $w\in [\alpha,1]$ such that
                                    $\bias_w(t+1)=v.$

                                    \smallskip
                                    \paragraph{Case B ($\AUC_b(X(t), Z) < \AUC_a(X(t))$:}
                                    In the proof of \cref{thm:main_thm_adding_noise} we showed that
                                    $\bias_0(t+1)\leq \bias(t)-\Delta$.
                                    Combining this with the fact that $\bias_1(t+1)= \bias(t+1)$, and using the intermediate value theorem it follows that
                                    for any $\bias(t)-\Delta<v\leq \bias(t+1)$,
                                    there exists a $w\in [0,1]$ such that
                                    $\bias_w(t+1)=v.$

                                  \end{proof}

                                  \paragraph{\em Proof of \cref{lem:continuity}}

                                  \begin{proof}{Proof of \cref{lem:continuity}}
                                    Consider the matrix
                                    $$\Lambda(\lambda)\coloneqq \begin{bmatrix}
                                    \Sigma & \lambda\cdot \rho \\
                                    \lambda \cdot \rho^\top & \lambda^2\cdot \sigma+ (1-\lambda)^2
                                    \end{bmatrix}.$$
                                    We claim that $\Lambda(\lambda)$ is invertible for all $0\leq \lambda\leq 1$.
                                    Suppose this claim is true.
                                    Then the lemma follows from \cref{eq:exp_auc_gamma}, because
                                    \begin{itemize}
                                      \item $\Phi\colon \R\to \R$ is a continuous function,
                                      \item $A\to A^{-1}$ is a continuous function over the set of all invertible matrices.
                                    \end{itemize}
                                    One can show this, e.g., using \cite[Theorem 18.2]{munkres2000topology}.

                                    It remains to prove the claim.
                                    Recall that in the binormal framework (\cref{def:binormal_framework}), we assume that for each $y\in \zo$ and $A=a$,
                                    $\Sigma_{y}$ and
                                    $\left[\begin{smallmatrix}
                                    \Sigma_{y} & \rho_y \\
                                    \rho_y^\top & \sigma_y
                                    \end{smallmatrix}\right]$ are invertible.
                                    Since these matrices are also positive semi-definite for each $y\in \zo$, it implies that $\Sigma$ and $\left[\begin{smallmatrix}
                                    \Sigma & \rho\\
                                    \rho^\top & \sigma
                                    \end{smallmatrix}\right]$ are invertible.
                                    (This is proved in the paragraph below Equation~\eqref{eq:dist_condition_2}).
                                    This proves that $\Lambda(\lambda)$ is invertible for $\lambda\in \zo$.
                                    Further, using that $\Lambda(1)$ and $\Sigma$ are invertible and \cite[Theorem 2.1(i)]{lu2002inverses}, we also get that
                                    $\frac{1}{\sigma-\rho^\top \Sigma^{-1}\rho} \neq 0$.
                                    Further, since $\Lambda(1)$ is positive definite, it must be that $\frac{1}{\sigma-\rho^\top \Sigma^{-1}\rho} > 0$ or equivalently
                                    \begin{align*}
                                      \sigma-\rho^\top \Sigma^{-1}\rho > 0.
                                      \yesnum\label{eq:tmp}
                                    \end{align*}
                                    This shows that for any $0<\lambda < 1$
                                    \begin{align*}
                                      \lambda^2\cdot \inparen{\sigma+\rho^\top \Sigma^{-1}\rho } + (1-\lambda)^2
                                      &> (1-\lambda)^2\tag{Using \cref{eq:tmp} and $\lambda\neq 0$}\\
                                      &> 0.\tag{Using $0<\lambda<1$}
                                    \end{align*}
                                    Now \cite[Theorem 2.1(i)]{lu2002inverses} implies that $\Lambda(\lambda)$ is invertible for all $0<\lambda < 1$.
                                  \end{proof}

                                  \subsubsection{Proof of \cref{thm:pareto_dom}}

                                  \begin{proof}{Proof of \cref{thm:pareto_dom}}
                                    To show the existence of the claimed $\lambda$, it suffices to show that $\bias_{\alpha,\beta}(t+1)\geq \bias(t)-\Delta$ and use \cref{coro:coro_adding_noise}.
                                    In the proof of \cref{coro:coro_adding_noise}, we set
                                    \begin{align*}
                                      \Delta \coloneqq \min\inbrace{\frac{\AUC_b(X(t),Z)-\AUC_b(X(t))}{\AUC_a(X(t))}, \bias(t)}.
                                      \yesnum\label{def:delta_2}
                                    \end{align*}

                                    \paragraph{Case A ($\AUC_b(X(t),Z) \geq \AUC_a(X(t))$):}
                                    In this case, $\bias(t)=1-\frac{\AUC_b(X(t))}{\AUC_a(X(t))}$.
                                    Using this and the definition of $\Delta$, one can verify that in this case $\Delta=\bias(t)$.
                                    Hence, $\bias_{\alpha,\beta}(t+1)\geq 0=\bias(t)-\Delta$.

                                    \paragraph{Case B ($\AUC_b(X(t),Z) < \AUC_a(X(t))$):}
                                    We will use the following lemma:
                                    \begin{lemma}\label{lem:range_of_AUC}
                                      For all $0\leq\alpha,\beta\leq 1$ and both groups $g\in \ab$,
                                      $\AUC_g(X(t))\leq \AUC_g(X(t),Z_{\alpha,\beta})\leq \AUC_g(X(t),Z)$.
                                    \end{lemma}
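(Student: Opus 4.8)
The plan is to prove the two inequalities separately, using only the ``projection cannot increase AUC'' principle (\cref{fact:information_inequality}, whose proof applies verbatim to any fixed \emph{linear} map, not just to a single linear score) together with the closed-form optimal-AUC expression for Gaussian models (\cref{fact:optimal_auc_expression}). Fix the group $g\in\ab$ and let $\gamma\in\{\alpha,\beta\}$ be the scaling applied to group $g$ in \cref{def:z_gamma_alpha}, so that conditioned on $A=g$ we have $Z_{\alpha,\beta}=\gamma\cdot Z+(1-\gamma)\cdot N$ with $N$ an independent standard normal. Conditioned on $(A=g,Y=y)$, the tuple $(X(t),Z)$ is jointly Gaussian (it is a subcollection of the $m$ features of \cref{def:binormal_framework}) and $N$ is an independent Gaussian, so $(X(t),Z,N)$ is jointly Gaussian conditioned on $(A=g,Y=y)$ for each $y\in\zo$, and hence so is its fixed linear image $(X(t),Z_{\alpha,\beta})$; thus the binormal machinery applies to all three tuples.

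The lower bound $\AUC_g(X(t))\leq\AUC_g(X(t),Z_{\alpha,\beta})$ is immediate: $X(t)$ is the coordinate projection of $(X(t),Z_{\alpha,\beta})$ onto its first $d+t$ coordinates, so \cref{fact:information_inequality} (equivalently, \cref{def:auc_2}, since the maximum defining $\AUC_g(X(t),Z_{\alpha,\beta})$ ranges over a superset of the classifiers that place zero weight on the last coordinate) gives the claim.

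For the upper bound I would interpose the ``denoised but augmented'' tuple $(X(t),Z,N)$ and show $\AUC_g(X(t),Z_{\alpha,\beta})\leq\AUC_g(X(t),Z,N)=\AUC_g(X(t),Z)$. The first inequality is a projection step: $(X(t),Z_{\alpha,\beta})$ is the fixed linear image of $(X(t),Z,N)$ under the map that preserves the $X(t)$-block and sends $(Z,N)\mapsto\gamma Z+(1-\gamma)N$, so re-running the short argument of \cref{fact:information_inequality} (every linear classifier on the image equals a linear classifier on the preimage, so the optimal AUC cannot drop) yields $\AUC_g(X(t),Z_{\alpha,\beta})\leq\AUC_g(X(t),Z,N)$. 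For the equality, the ``$\geq$'' direction is trivial (zero weight on $N$), and ``$\leq$'' follows from \cref{fact:optimal_auc_expression}: conditioned on $(A=g,Y=y)$ the coordinate $N$ has mean $0$ and is independent of $(X(t),Z)$, so the summed conditional covariance of $(X(t),Z,N)$ is block-diagonal with blocks the summed conditional covariance $\Sigma$ of $(X(t),Z)$ — positive definite by the argument below Equation~\eqref{eq:dist_condition_2} — and the scalar $2$, hence invertible, while the mean-difference vector is $(\Delta\mu,0)$ where $\Delta\mu$ is the mean-difference of $(X(t),Z)$. Plugging into \cref{fact:optimal_auc_expression}, the extra coordinate contributes $0$ to the quadratic form, so $\AUC_g(X(t),Z,N)=\Phi\left(\sqrt{\Delta\mu^\top\Sigma^{-1}\Delta\mu}\right)=\AUC_g(X(t),Z)$.

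Chaining the three relations gives $\AUC_g(X(t))\leq\AUC_g(X(t),Z_{\alpha,\beta})\leq\AUC_g(X(t),Z)$, and since $g\in\ab$ was arbitrary the lemma follows. The only points requiring care are (i) checking that every summed conditional covariance matrix in play is invertible so that \cref{fact:optimal_auc_expression} applies — this is exactly the positive-definiteness argument already used for $\Sigma_0+\Sigma_1$ and for $\Lambda(\lambda)$ in the proof of \cref{lem:continuity} — and (ii) observing that \cref{fact:information_inequality}, though stated for projection onto a single linear score, is proved by an argument that carries over to the block-linear map used above. I expect (ii) to be the ``main obstacle'' only in the mild sense of restating the lemma in slightly greater generality; the quadratic-form cancellation in step (i) is routine.
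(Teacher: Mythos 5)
Your proposal is correct and follows essentially the same route as the paper: the lower bound via the coordinate-projection form of \cref{fact:information_inequality}, and the upper bound by interposing $(X(t),Z,N)$, applying the projection argument to the map $(Z,N)\mapsto\gamma Z+(1-\gamma)N$, and using \cref{fact:optimal_auc_expression} with the independence of $N$ to get $\AUC_g(X(t),Z,N)=\AUC_g(X(t),Z)$. The only difference is that you spell out details the paper leaves implicit (the block-diagonal quadratic-form cancellation, invertibility, and the mild generalization of \cref{fact:information_inequality} to block-linear maps), which is fine.
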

                                    \begin{proof}{Proof of \cref{lem:range_of_AUC}}
                                      Note that conditioned on $A=g$, $Z_{\alpha,\beta}$ is a linear combination of $Z$ and $N$.
                                      Hence, $\AUC_g(X,Z_{\alpha,\beta})\leq \AUC_g(X,Z,N)$ (see \cref{fact:information_inequality}).
                                      Further, since $N$ is independent of $Y$, $X$, $Z$, and $A$, using \cref{fact:optimal_auc_expression} it follows that $\AUC_g(X,Z,N)=\AUC_g(X,Z)$; this is intuitively true because $N$ does not give any information about $Y$.
                                      This proves the upper bound.
                                      The lower bound follows from \cref{fact:information_inequality} as $X(t)$ is a projection of $\inparen{X(t), Z_{\alpha,\beta}}$ that omits the last coordinate.
                                      This completes the proof of \cref{lem:range_of_AUC}.
                                    \end{proof}
                                    Using \cref{lem:range_of_AUC}, it follows that in this case
                                    $$\AUC_a(X(t),Z_{\alpha,\beta})\geq \AUC_a(X(t))>\AUC_b(X(t),Z)\geq \AUC_b(X(t),Z_{\alpha,\beta}).$$
                                    Hence,
                                    \begin{align*}
                                      \bias_{\alpha,\beta}(t+1)
                                      &=1-\frac{\AUC_b(X(t),Z_{\alpha,\beta})}{\AUC_a(X(t),Z_{\alpha,\beta})}\\
                                      &\geq 1-\frac{\AUC_b(X(t),Z)}{\AUC_a(X(t))}\tag{Using \cref{lem:range_of_AUC} to lower bound the denominator and upper bound the numerator}\\
                                      &=\bias(t) - \frac{\AUC_b(X(t),Z)-\AUC_b(X(t))}{\AUC_a(X(t))}.
                                      \yesnum\label{eq:ineq_case_2_pareto}
                                    \end{align*}
                                    Further, $\bias_{\alpha,\beta}(t+1)\geq 0=\bias(t)-\bias(t)$.
                                    Combining this inequality with \cref{eq:ineq_case_2_pareto} it follows that
                                    $\bias_{\alpha,\beta}(t+1)\geq 0=\bias(t)-\Delta$.

                                    This completes the proof of the existence of $\lambda$ claimed in \cref{thm:pareto_dom}.
                                    It remains to prove the weak Pareto-optimality condition.
                                    Toward this observe that
                                    \begin{align*}
                                      \AUC_b(X(t), Z_{\alpha,\beta})
                                      \qquad\Stackrel{\rm\cref{lem:range_of_AUC}}{\leq}\qquad \AUC_b(X(t), Z)
                                      \quad \Stackrel{\eqref{eq:invariance}}{=}\quad \AUC_b(X(t), Z_\lambda).
                                      \yesnum\label{eq:pareto_pt_1}
                                    \end{align*}
                                    It remains to prove that $\AUC_a(X(t), Z_{\alpha,\beta})\leq \AUC_a(X(t), Z_\lambda).$

                                    To prove this, we will use the fact that the $\lambda$ constructed in the proof of \cref{coro:coro_adding_noise} satisfies
                                    \begin{align*}
                                      \AUC_{a}(X(t), Z_\lambda)\geq  \AUC_{b}(X(t), Z_\lambda).
                                      \yesnum\label{eq:pareto_condition}
                                    \end{align*}

                                    \paragraph{Case A ($\AUC_a(X(t),Z_{\alpha,\beta}) \geq \AUC_b(X(t),Z_{\alpha,\beta})$):}
                                    Note that if $\bias_{\alpha,\beta}(t+1)=\bias_{\lambda}(t+1)=1$, then
                                    $$\AUC_a(X(t),Z_{\alpha,\beta})=\AUC_b(X(t),Z_{\alpha,\beta})\quad \Stackrel{\eqref{eq:pareto_pt_1}}{\leq} \quad \AUC_b(X(t),Z_{\lambda})=\AUC_a(X(t),Z_{\lambda}).$$
                                    Hence, we are done if $\bias_{\alpha,\beta}(t+1)=\bias_{\lambda}(t+1)=1$.
                                    Thus, assume that $\bias_{\alpha,\beta}(t+1)=\bias_{\lambda}(t+1)<1$.
                                    Because of the assumption in this case (i.e., Case A), we have that $\bias_{\alpha,\beta}(t+1)
                                    =1-\frac{\AUC_b(X(t),Z_{\alpha,\beta})}{\AUC_a(X(t),Z_{\alpha,\beta})}.$
                                    Hence,
                                    \begin{align*}
                                      \AUC_a(X(t),Z_{\alpha,\beta})
                                      \qquad
                                      &=\qquad \frac{\AUC_b(X(t),Z_{\alpha,\beta})}{1-\bias_{\alpha,\beta}(t+1)}\tag{Using that $\bias_{\alpha,\beta}(t+1)<1$}\\
                                      &\Stackrel{\rm\cref{lem:range_of_AUC}}{\leq}\qquad \frac{\AUC_b(X(t),Z)}{1-\bias_{\alpha,\beta}(t+1)}\\
                                      &\Stackrel{}{=}\qquad \frac{\AUC_b(X(t),Z)}{1-\bias_{\lambda}(t+1)}
                                      \tag{Using that, by construction, $\bias_{\lambda}(t+1)=\bias_{\alpha,\beta}(t+1)$}\\
                                      &\Stackrel{\eqref{eq:invariance}}{\leq}\qquad \frac{\AUC_b(X(t),Z_\lambda)}{1-\bias_{\lambda}(t+1)}\\
                                      &\Stackrel{}{=}\qquad \AUC_a(X(t),Z_\lambda).\tag{Using \cref{eq:pareto_condition} and definition of $\bias_{\lambda}(t+1)$}
                                    \end{align*}

                                    \paragraph{Case B ($\AUC_a(X(t),Z_{\alpha,\beta}) < \AUC_b(X(t),Z_{\alpha,\beta})$):}
                                    In this case, %
                                    \begin{align*}
                                      \AUC_a(X(t),Z_{\alpha,\beta}) < \AUC_b(X(t),Z_{\alpha,\beta})
                                      \quad \Stackrel{\eqref{eq:pareto_pt_1}}{\leq} \quad \AUC_b(X(t),Z_{\lambda})
                                      \quad \Stackrel{\eqref{eq:pareto_condition}}{\leq} \quad
                                      \AUC_a(X(t),Z_{\lambda}).
                                    \end{align*}
                                  \end{proof}

                                  \noindent While this noisy \fairAUC\ \algo\ prevents bias from increasing round to round, introducing noise reduces the AUC of the would-be advantaged group. Therefore the original \fairAUC\ \algo\ Pareto dominates the noisy \fairAUC\ \algo\ in terms of AUCs.

                                  \section{Additional Empirical Results}

                                  \subsection{fairAUC with a Bias Penalty Term}\label{appendix:penalty_bias}
                                  Another potential strategy to prevent bias from increasing is to add a penalty term for bias. However, we find that such a strategy is unable to ensure bias does not increase after acquiring a new feature.
                                  Figures \ref{figure:bias_penalty} (Left) and (Right) show the performance of \fairAUC\ with different levels of penalty placed on bias. In Figure \ref{figure:bias_penalty} (Left), equal weights are placed on AUC and the bias penalty and we can see that the bias still can increase after the acquisition of a feature with \fairAUC. In Figure \ref{figure:bias_penalty} (Right), greater weight is placed on the bias penalty term. Placing a large penalty on bias essentially reduces the \algo\ to the \minBias\ procedure.

                                  \begin{figure}[htbp]
                                    \caption{(Left) AUC and Bias Penalty Equally Weighted. (Right) 33\% Weight on AUC and 67\% Weight on Bias Penalty.}
                                    \begin{minipage}{0.5\textwidth}
                                      \begin{center}
                                        \includegraphics[height=5.5cm]{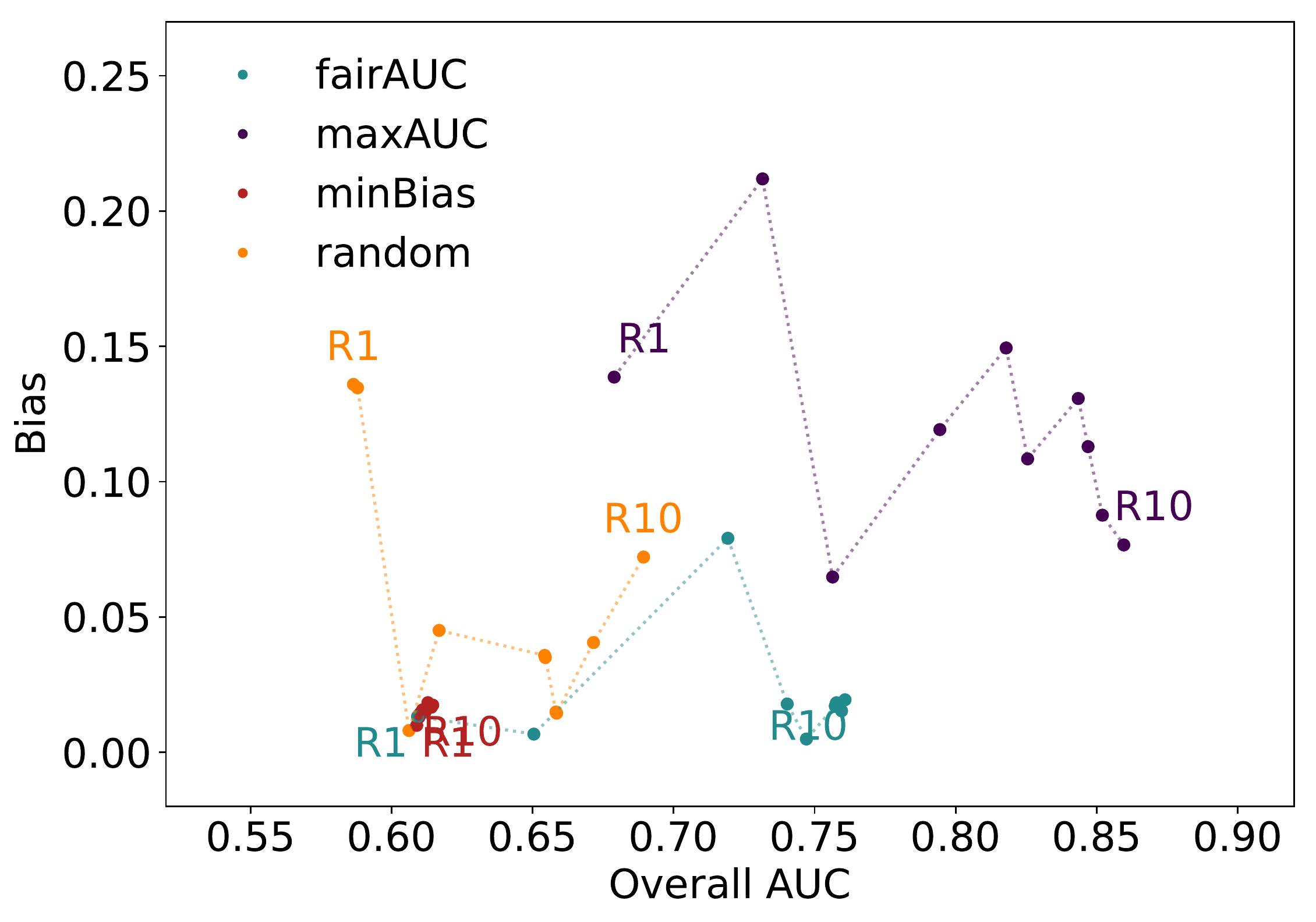}
                                      \end{center}
                                    \end{minipage}
                                    \begin{minipage}{0.5\textwidth}
                                      \begin{center}
                                        \includegraphics[height=5.5cm]{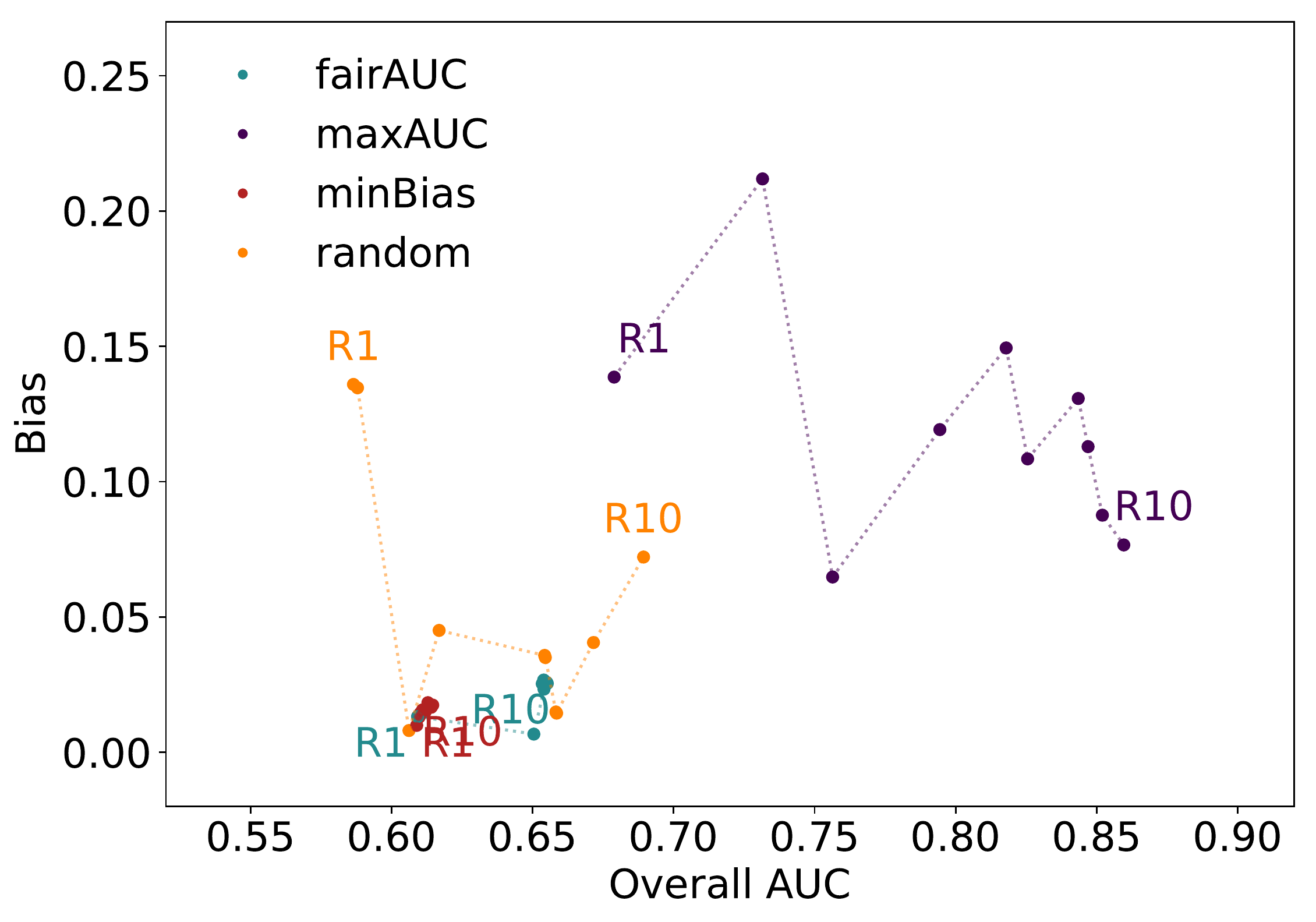}
                                      \end{center}
                                    \end{minipage}
                                    \footnotesize \textit{Note:} R1 represents Round 1 of feature acquisition and R10 represents Round 10.
                                    \label{figure:bias_penalty}
                                  \end{figure}

                                  \noindent Noisy \fairAUC\ prevents bias from increasing each iteration but a penalty-based strategy is not sufficient.

                                  \subsection{Empirical Results When the Protected Attribute is Not Used in Classification}\label{appendix:single_classifier}

                                  \fairAUC\ does not require the use of the protected attribute during classification. In the main body of the paper, the protected attribute was used during classification. Here we show how \fairAUC\ performs when the protected attribute is not used.

                                  \subsubsection{Synthetic Data Analysis}\label{appendix:synthetic_single}

                                  As can be seen in Figures \ref{figure:rounds_sim_single_eqsep}, \ref{figure:tradeoff_sim_single_eqsep}, and \ref{figure:pareto_sim_single_eqsep}, the same pattern of results continues to hold when the protected attribute is not used in classification. Compared to \maxAUC, the \fairAUC\ \algo\ greatly decreases the bias between the two groups and improves the AUC of the disadvantaged group. \minBias\ also greatly reduces bias but at the cost of learning about either group. While \fairAUC\ decreases bias relative to \maxAUC, it does trade off AUC in the process as shown in Figure \ref{figure:tradeoff_sim_single_eqsep}.

                                  \begin{figure}[htbp]
                                    \centering
                                    \caption{Group-wise AUCs over Feature Augmentation Rounds without Protected Attribute}
                                    \includegraphics[width=9cm]{./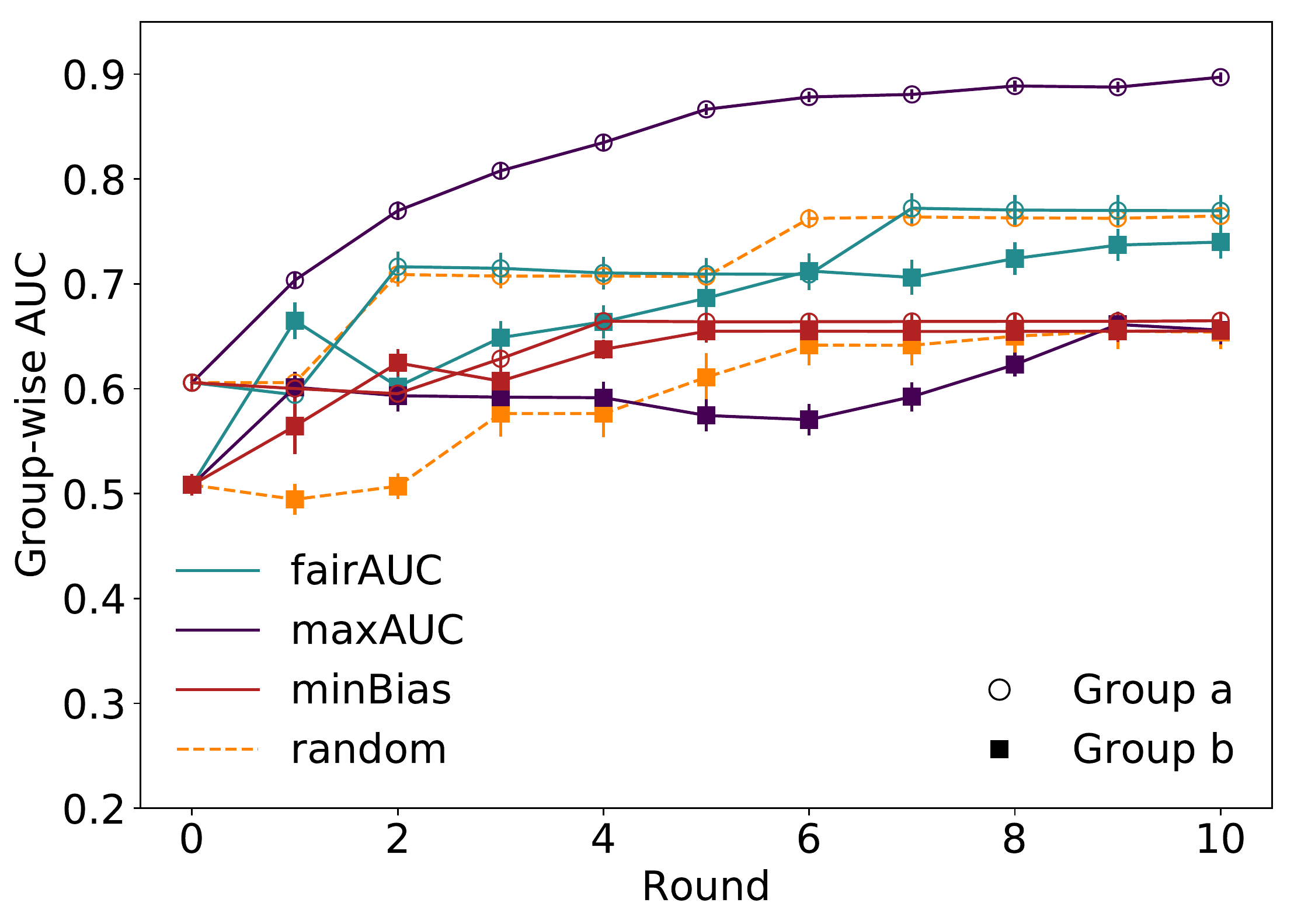}
                                    \label{figure:rounds_sim_single_eqsep}
                                  \end{figure}

                                  \begin{figure}[htbp]
                                    \centering
                                    \caption{Accuracy-Fairness Tradeoff without Protected Attribute}
                                    \includegraphics[width=9cm]{./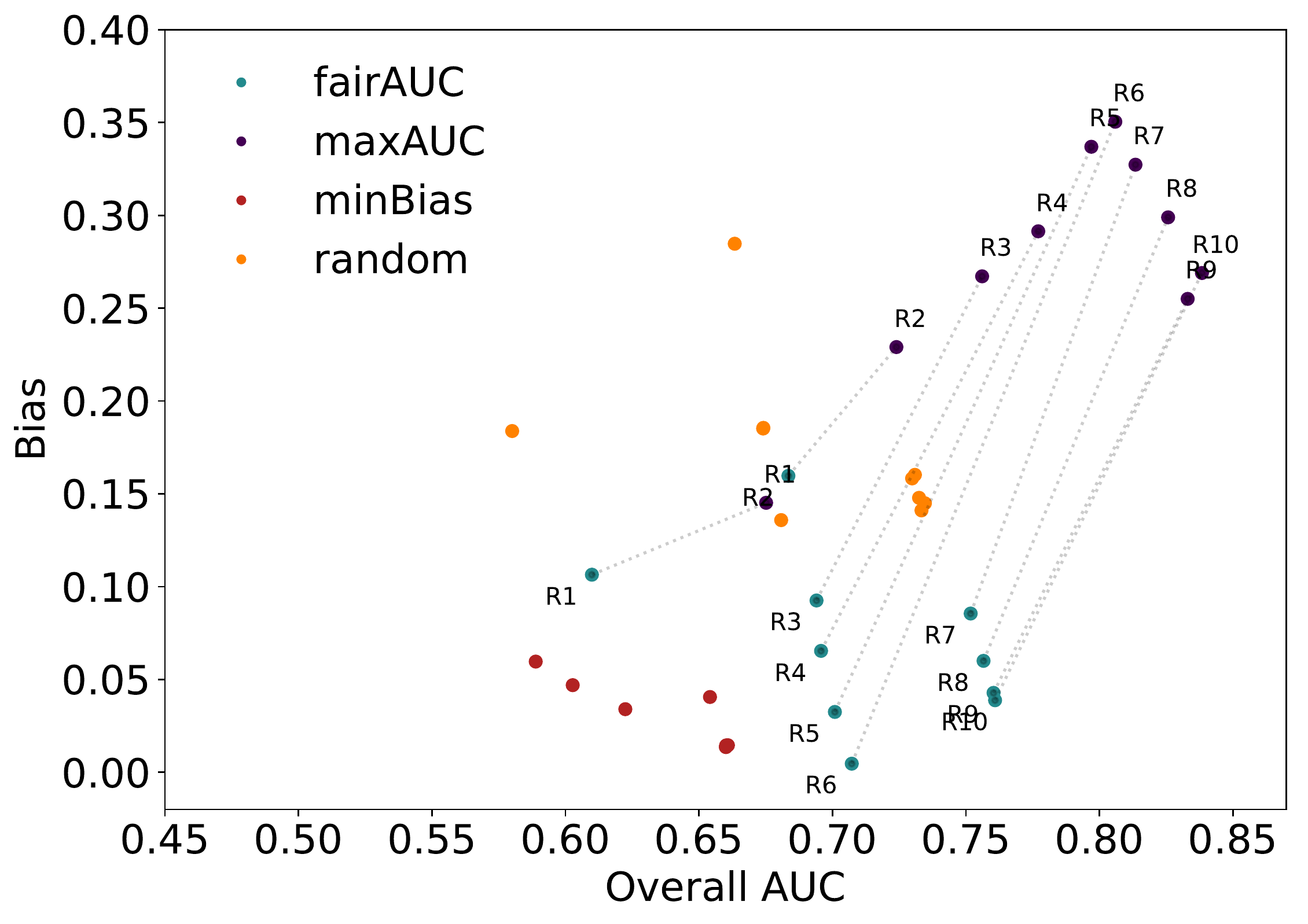}
                                    \label{figure:tradeoff_sim_single_eqsep}
                                  \end{figure}

                                  \begin{figure}[htbp]
                                    \centering
                                    \caption{Pareto Frontier without Protected Attribute}
                                    \includegraphics[width=9cm]{./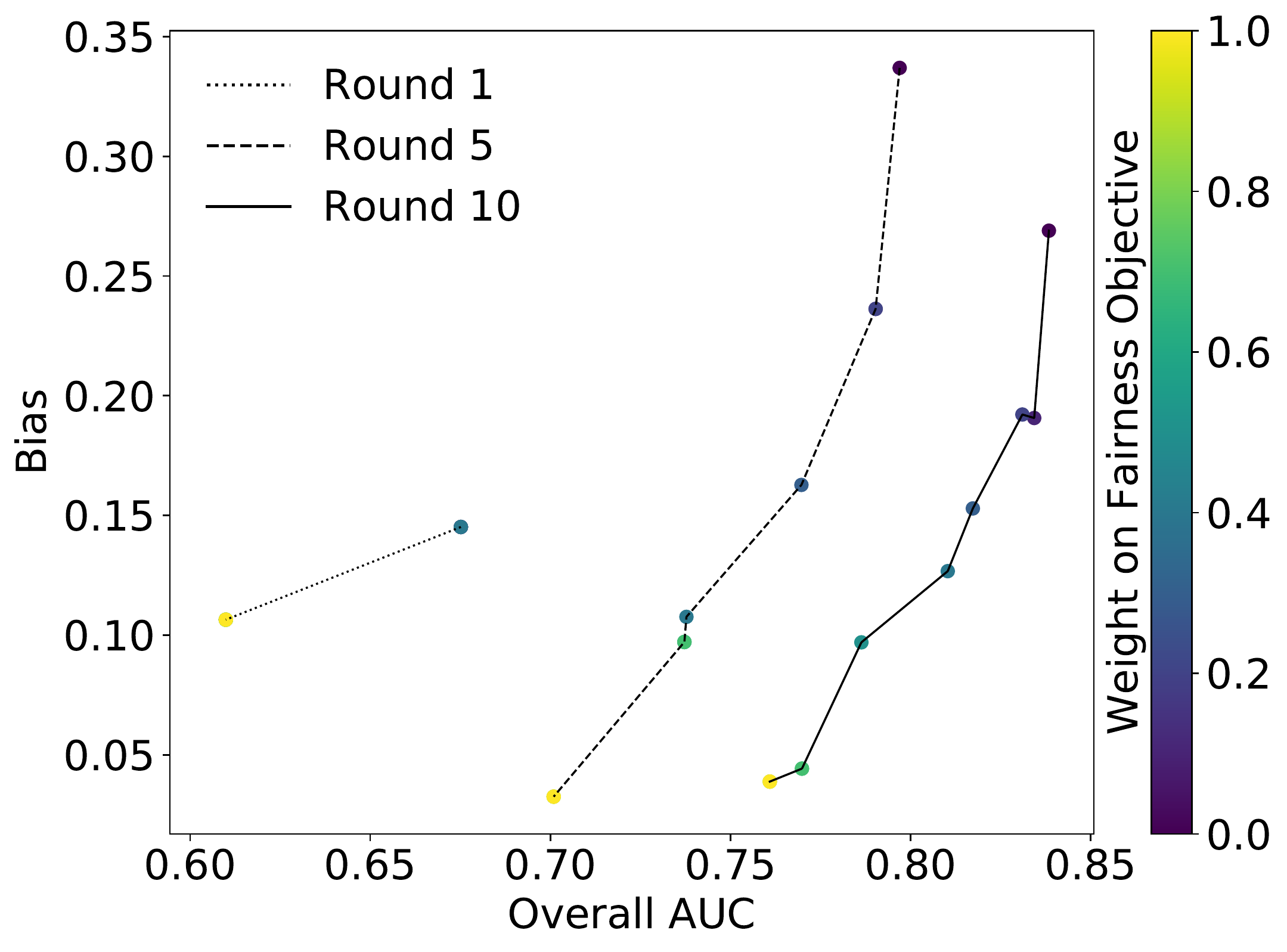}
                                    \label{figure:pareto_sim_single_eqsep}
                                  \end{figure}

                                  \newpage
                                  \subsubsection{COMPAS Data Analysis}\label{appendix:compas_single}

                                  We next show the performance of the various \algos\ when the protected attribute is not used in classification on the COMPAS dataset in Figure \ref{figure:compas_single}. The \algos\ follow the same pattern as when the protected attribute is used in classification (as seen in Figure \ref{figure:compas_separate} in the paper).

                                  \begin{figure}[ht]
                                    \centering
                                    \caption{Predicting Violent Recidivism without Protected Attribute (Age)}
                                    \includegraphics[width=9cm]{./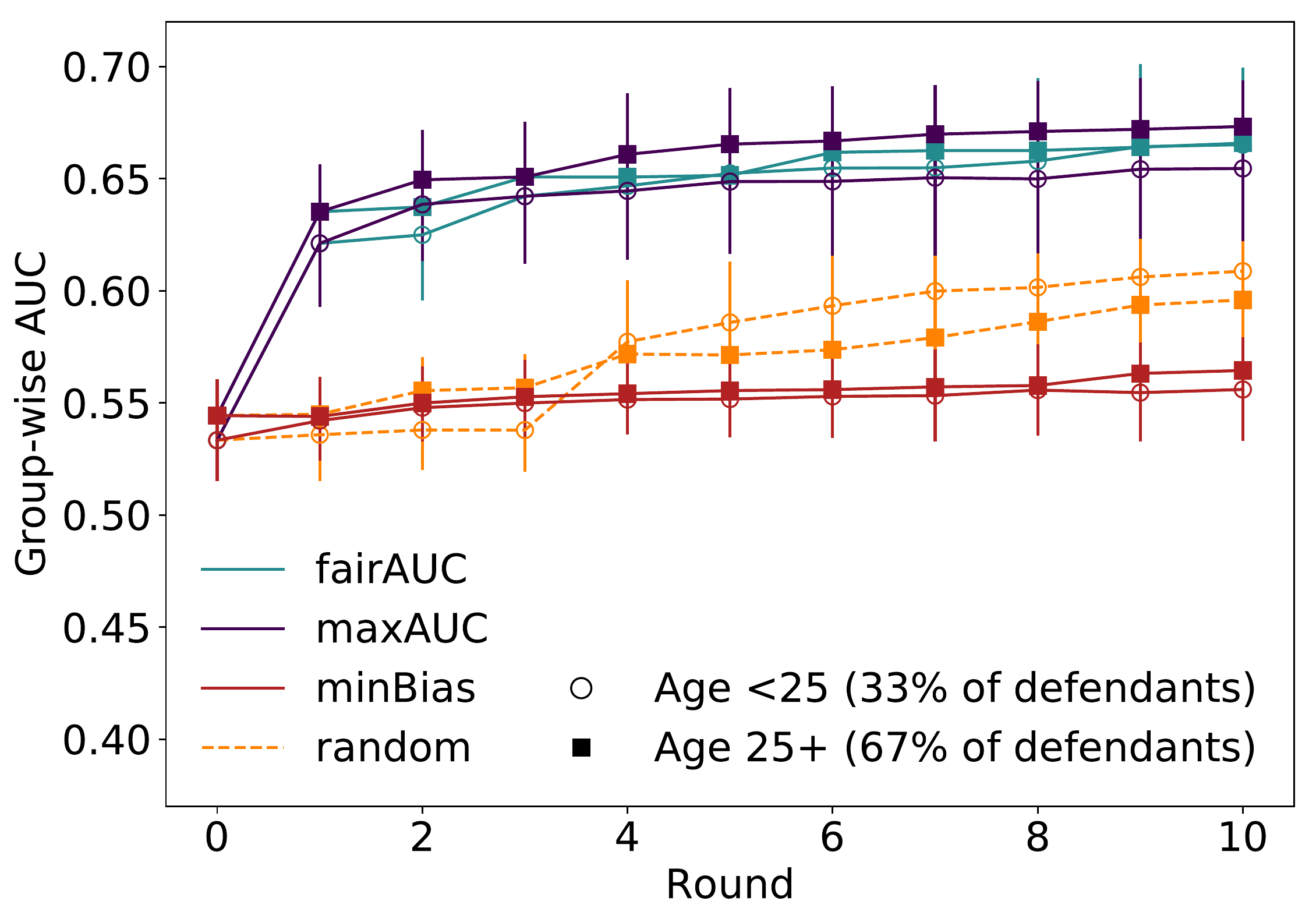}
                                    \label{figure:compas_single}
                                    \vspace{-5mm}
                                  \end{figure}

                                  \vspace{-3mm}

                                  \subsection{Synthetic Data with Multivariate Gamma Distribution}\label{appendix:other_data_distributions}
                                  In the main analysis, we generate data that follow multivariate normal distributions. To test the robustness of \fairAUC\ to other data generating processes, we generate data that follow different multivariate gamma distributions. We choose the gamma distribution because it can be flexibly parametrized using its shape and rate parameters. The objective here is to investigate whether our theoretical guarantees, which hold when the features are generated from a multivariate normal distribution, continue to hold when the data is instead generated from a gamma distribution.

                                  Gamma distributions are defined by the probability density function:
                                  \begin{equation*}
                                    f(x;\alpha,\beta) = \frac{x^{\alpha-1}e^{-\beta x}\beta^\alpha}{(\alpha-1)!} \text{  for } x,\alpha,\beta>0.
                                  \end{equation*}

                                  \noindent Parameterized by two parameters, $\alpha$ (shape) and $\beta$ (rate), gamma distributions can take on a wide variety of distribution shapes. The mean of a gamma distribution is $\alpha/\beta$ and the variance is $\alpha/\beta^2$. We generate data for two groups such that they have the same shape and rate parameters but one group captures a much larger fraction of observations, mirroring the procedure followed in the main paper for multivariate normal data. The following procedure is used to generate the data:

                                  \begin{enumerate}
                                    \item For each group, generate $p$ informative features that have the same class-conditional variance but different class-conditional means and a random correlation structure \citep{pourahmadi2015distribution}
                                    \item For each group, generate $q$ uninformative features that have no difference in class-conditional means or variances and which have a random correlation structure
                                  \end{enumerate}

                                  \noindent Below we show the results of the \fairAUC, \maxAUC, \minBias, and \random\ \algos\ on two datasets generated by the above procedure with different shape and rate combinations. Table \ref{table:gamma_parameters} lists the various parameters used and why we choose those parameters. We use different distribution shapes to test the robustness of \fairAUC. We train the four \algos\ using separate (logistic regression) classifiers for each group. Figures \ref{figure:gamma1} and \ref{figure:gamma2} compare the group-wise AUCs over feature acquisition rounds.

                                  \begin{table}
                                    \footnotesize
                                    \centering
                                    \caption{Parameters used for Synthetic Gamma Data}
                                    \begin{tabular}{lll}
                                      \hline \hline
                                      Parameter & Value & Reason for Choice \\
                                      \hline
                                      Fraction of observations in group $a$ & 0.95 & Large group imbalance as is often seen in real data \\
                                      Number of observations & 20,000 & Larger datasets give us more precise AUC results \\
                                      Dataset 1: \\
                                      \,\,\,\,Negative class $\alpha$ and $\beta$ & 1, $\sqrt{2}$ & Hump-shaped with long right tail so distributions \\
                                      \,\,\,\,Positive class $\alpha$ and $\beta$ & 2, 2 & different from normal but still hump-shaped \\
                                      Dataset 2: \\
                                      \,\,\,\,Negative class $\alpha$ and $\beta$ & 1, 0.5 & Negative class exponential distribution-shaped so \\
                                      \,\,\,\,Positive class $\alpha$ and $\beta$ & 2, $\sqrt{0.5}$ & deviates more greatly from normal distribution;\\
                                      & & positive class hump-shaped with long right tail \\
                                      \hline \hline
                                    \end{tabular}
                                    \label{table:gamma_parameters}
                                  \end{table}

                                  \begin{figure}
                                    \centering
                                    \caption{Performance for Dataset 1}
                                    \includegraphics[width=9cm]{./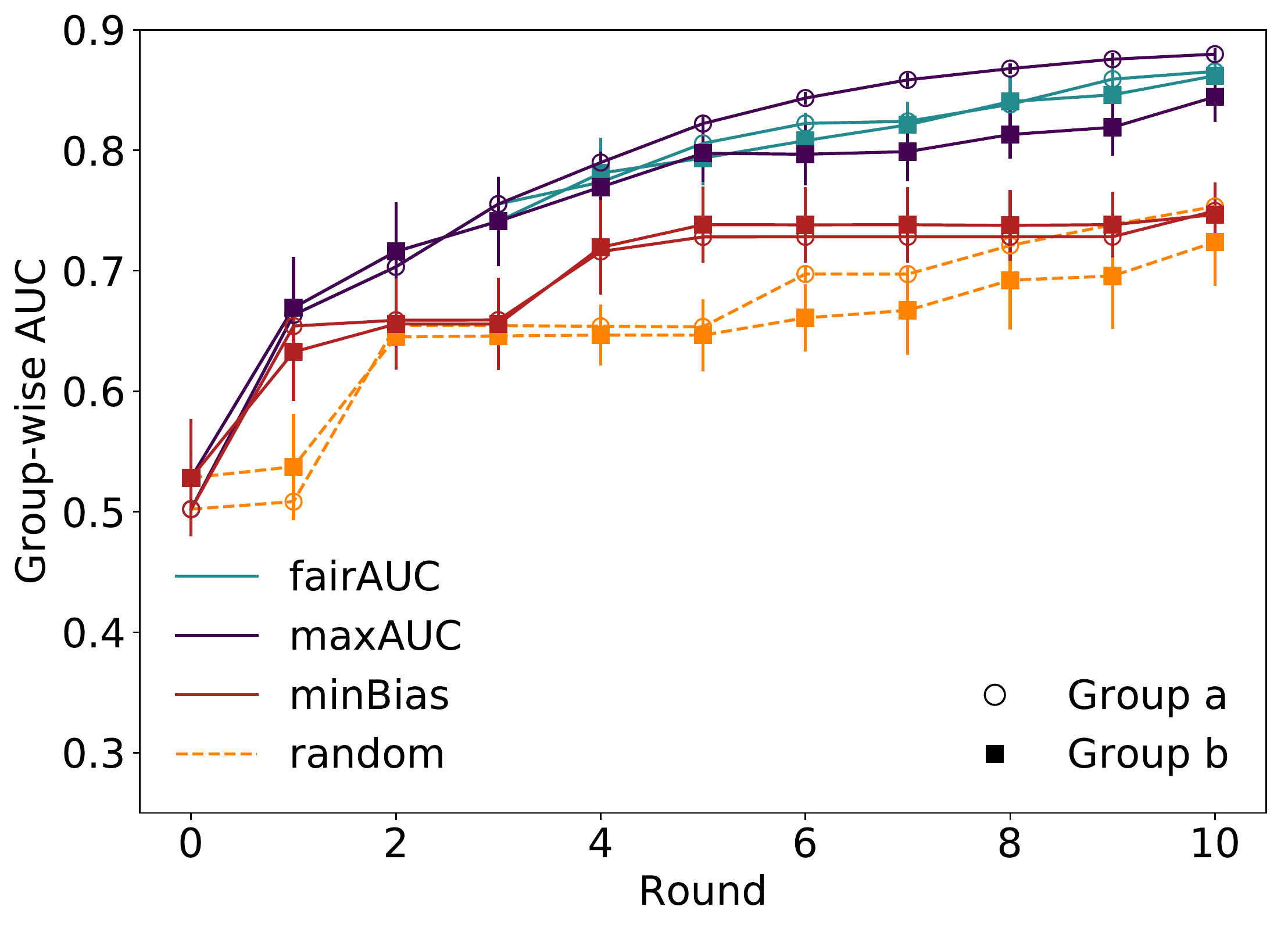}
                                    \label{figure:gamma1}
                                    \vspace{-3mm}
                                  \end{figure}

                                  \begin{figure}
                                    \centering
                                    \caption{Performance for Dataset 2}
                                    \includegraphics[width=9cm]{./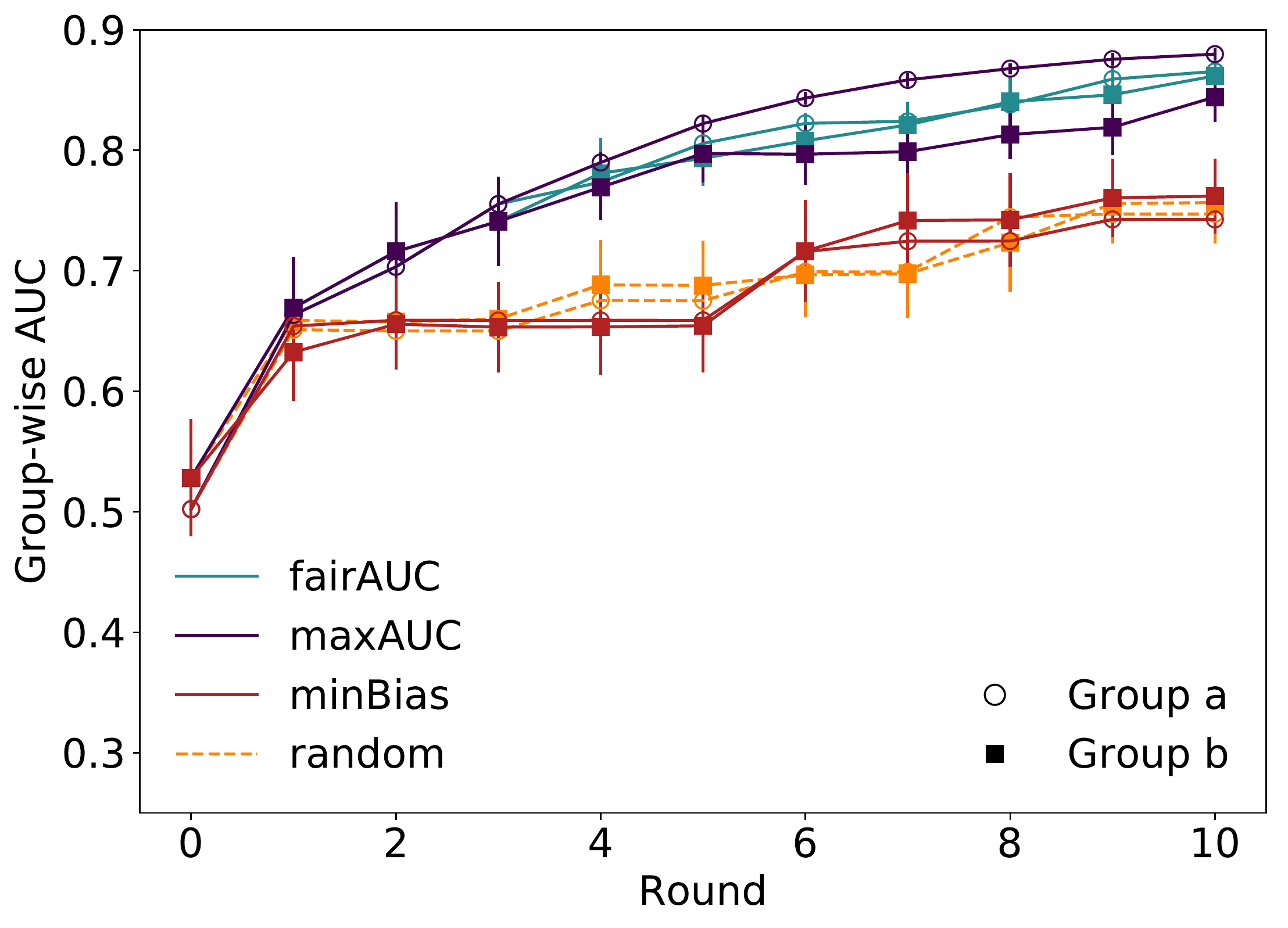}
                                    \label{figure:gamma2}
                                    \vspace{-3mm}
                                  \end{figure}

                                  \noindent The different procedures follow the same pattern observed with the multivariate normal distributions. Specifically, we observe that fairAUC reduces the bias to a very small degree, where the two groups $a$ and $b$ have similar AUCs, compared to maxAUC, where group $a$ has a higher AUC, and the overall bias is high. \fairAUC\ selects features that improve the AUC for the disadvantaged group relative to \maxAUC{}. The minBias algorithm obtains low bias, but obtains a really low AUC for each group.

                                  \newpage
                                  \subsection{Nonlinear Classifiers}\label{section:nonlinear_classification}
                                  The theoretical guarantees obtained in the paper hold for generalized linear models (GLMs). The experiments with synthetic data and empirical application with real-world data in the main body of the paper use logistic regression as the classifier. Here, we examine the robustness of the results to using nonlinear classifiers, specifically \textit{random forest} and \textit{support vector machine} (SVM) with a nonlinear (rbf) kernel. We use the protected attribute during classification.

                                  \paragraph{Random Forest:} Figure \ref{figure:normal_separate_forest} shows the performance of the four procedures on multivariate normal data \citep{guyon2003design} using a random forest classifier. We limit the maximum depth of the forest to three layers to prevent model overfitting. The pattern of results is consistent with the results from the logistic regression model, which are detailed in Figure \ref{figure:synthetic_results}. For example, \fairAUC\ selects features that improve the AUC of the disadvantaged group and in doing so reduces bias relative to \maxAUC. \minBias\ fails to acquire informative features.

                                  \begin{figure}[ht]
                                    \centering
                                    \caption{Performance using Random Forest as Classifier}
                                    \includegraphics[width=9.5cm]{./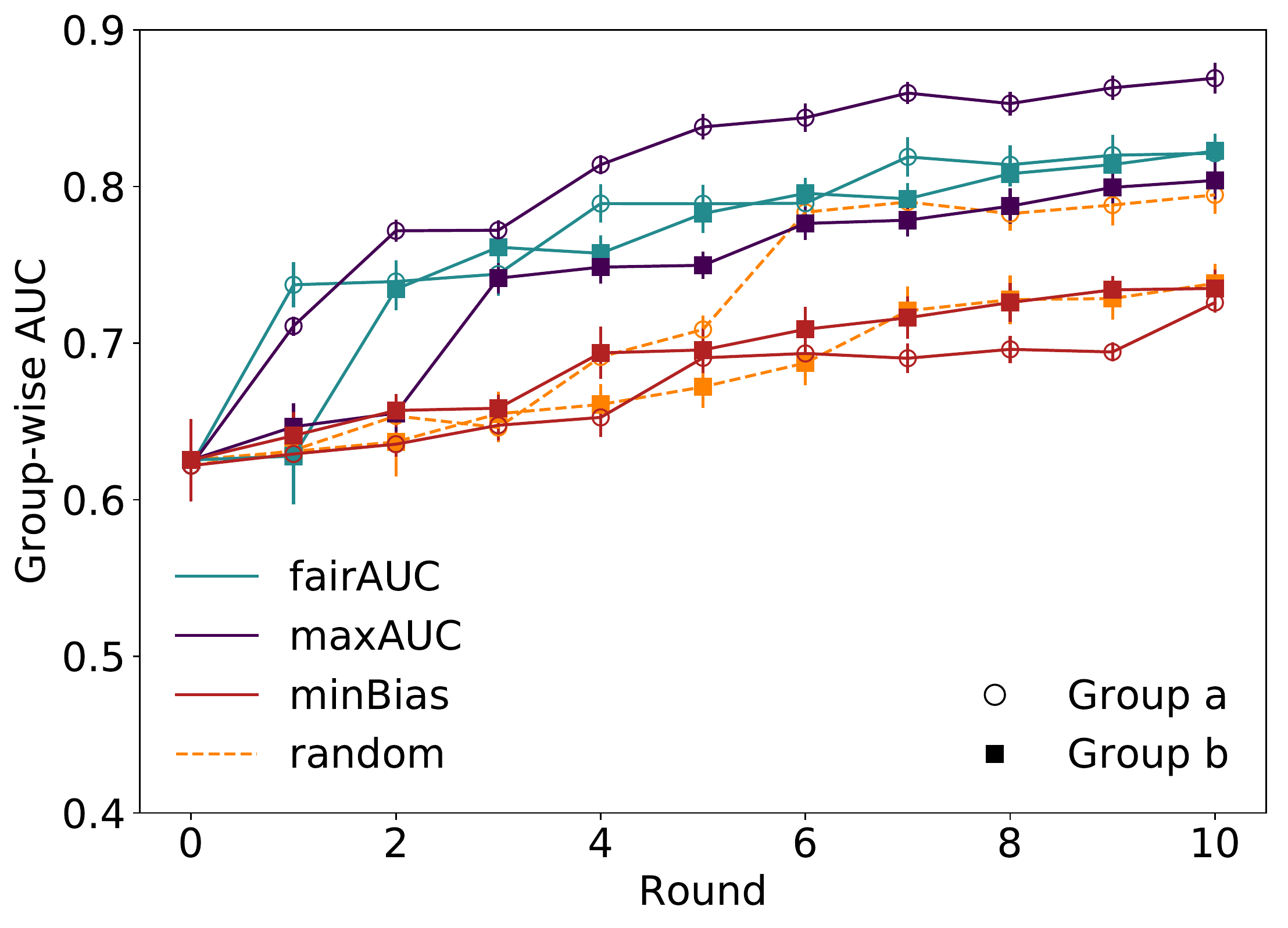}
                                    \label{figure:normal_separate_forest}
                                  \end{figure}

                                  \noindent Next, we compare which features are acquired depending on whether logistic regression or random forest is used in Table \ref{table:logistic_vs_forest}. Note that logistic regression and FLD select the same features in the same order on this data set over the first ten rounds. For nine out of the first ten rounds, logistic regression and random forest acquire the same features. We highlight in bold which features differ between the two classifiers. Random forest generates slightly higher AUCs.

                                  \begin{table}[htbp]
                                    \footnotesize
                                    \centering
                                    \caption{Using Logistic Regression vs. Random Forest for Classification}
                                    \begin{tabular}{c|cccccc|cccccc}
                                      \hline \hline
                                      & \multicolumn{6}{c}{Logistic Regression} &  \multicolumn{6}{c}{Random Forest} \\
                                      Round & Feature & $\AUC_a$ & $\AUC_b$ & $\AUC_{\rm All}$ & Bias & Disadv & Feature & $\AUC_a$ & $\AUC_b$ & $\AUC_{\rm All}$ & Bias & Disadv \\
                                      &&&&& & Group & & &&& & Group \\
                                      \hline
                                      0 &    & 0.6058 & 0.5083 & 0.5846 & 0.1610 & b &
                                      & 0.6217 & 0.6253 & 0.6127 & 0.0057& a \\ %
                                      1 & 47 & 0.6063 & 0.6946 & 0.6368 & 0.1271 & a &
                                      18 & 0.7372 &  0.6278 & 0.7056 & 0.1484 & b \\
                                      2 & 18 & 0.7297 & 0.6976 & 0.7205 & 0.0439 & b &
                                      47 & 0.7393 & 0.7344 & 0.7357 & 0.0066 & b \\
                                      3 & 13 & 0.7299 & 0.7266 & 0.7291 & 0.0045 & b &
                                      13 & 0.7441 & 0.7612 & 0.7482 & 0.0225 & a \\
                                      4 & 48 & 0.7300 & 0.7486 & 0.7359 & 0.0249 & a &
                                      26 & 0.7891 & 0.7573 & 0.7797 & 0.0403 & b \\
                                      5 & 26 & 0.7878 &  0.7494 & 0.7771 & 0.0488 & b &
                                      48 & 0.7890 & 0.7827 & 0.7868 & 0.0079 & b \\
                                      6 & 30 & 0.7879 & 0.7736 & 0.7840 &  0.0181 & b &
                                      30 & 0.7893 & 0.7956 & 0.7909 & 0.0079 & a \\
                                      7 & 15 & 0.7879 & 0.7861 & 0.7877 & 0.0023 & b &
                                      20 & 0.8189 & 0.7921 & 0.8095 & 0.0327 & b \\
                                      8 & \textbf{49} & 0.7880 & 0.8048 & 0.7934 & 0.0209 & a &
                                      15 & 0.8139 & 0.8082 & 0.8119 & 0.0070 & b \\
                                      9 & 20 &  0.8209 & 0.8049 & 0.8163 & 0.0194 & b &
                                      12 & 0.8199 & 0.8140 & 0.8175 & 0.0073 & b \\
                                      10 & 12 & 0.8209 & 0.8130 & 0.8186 & 0.0095 & b &
                                      \textbf{38} & 0.8212 & 0.8227 & 0.8211 & 0.0018 & a \\
                                      \hline \hline
                                    \end{tabular}
                                    \label{table:logistic_vs_forest}
                                  \end{table}

                                  \paragraph{SVM with Nonlinear Kernel:} Figure \ref{figure:normal_separate_svm} shows the performance of the four procedures on multivariate normal data \citep{guyon2003design} using SVM with a nonlinear kernel function. Again, the pattern of results is consistent with the logistic regression results. Table \ref{table:logistic_vs_svm} compares which features are acquired depending on whether logistic regression or nonlinear SVM is used. Nonlinear SVM results in much higher AUC values than logistic regression but both end up acquiring roughly the same set of features in the first ten rounds (eight out of ten features overlap).

                                  \begin{figure}[ht]
                                    \centering
                                    \caption{Performance using Nonlinear SVM as Classifier}
                                    \includegraphics[width=9.5cm]{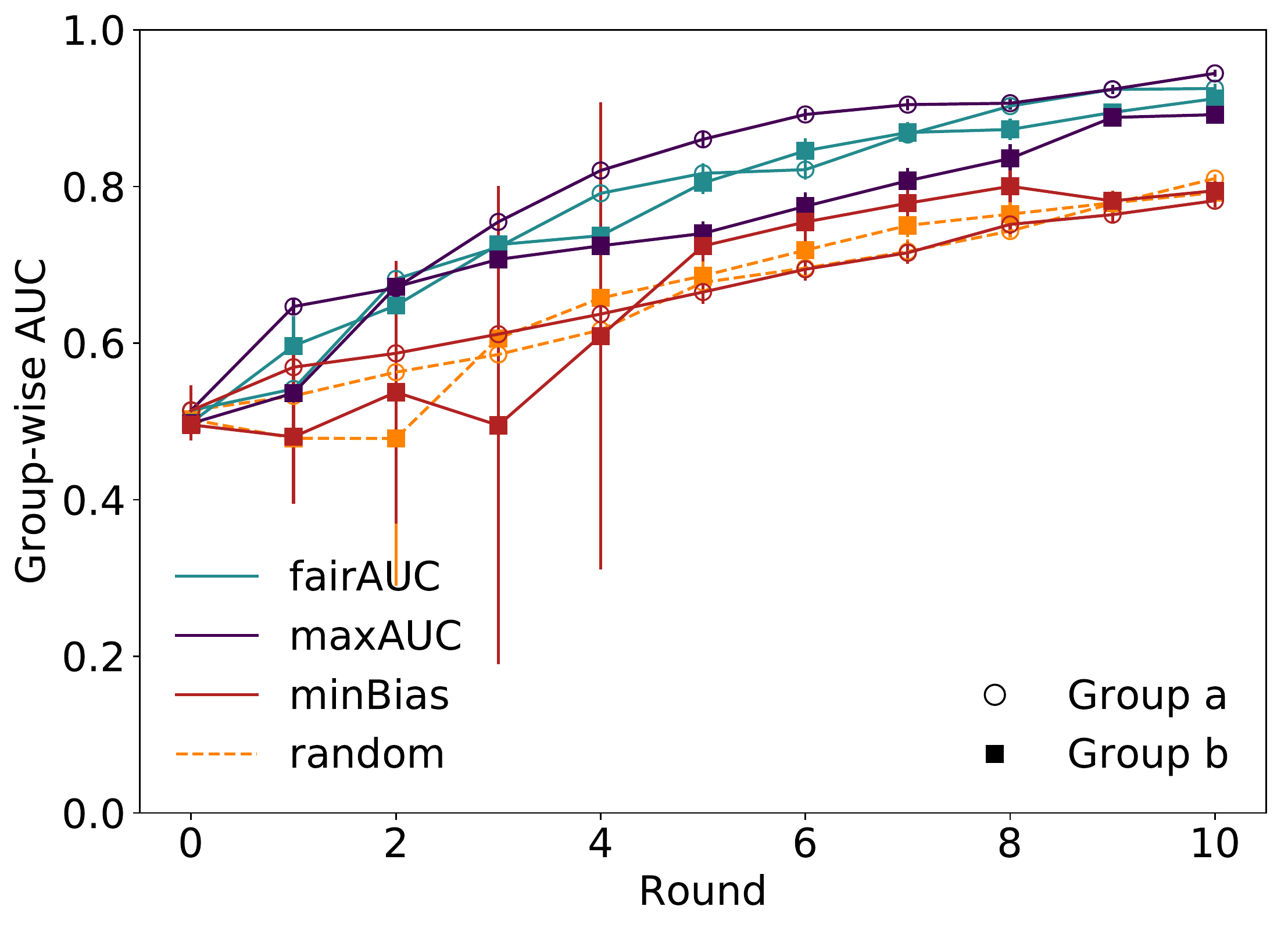}
                                    \label{figure:normal_separate_svm}
                                  \end{figure}

                                  \begin{table}[htbp]
                                    \footnotesize
                                    \centering
                                    \caption{Using Logistic Regression vs. Nonlinear SVM for Classification}
                                    \begin{tabular}{c|cccccc|cccccc}
                                      \hline \hline
                                      & \multicolumn{6}{c}{Logistic Regression} &   \multicolumn{6}{c}{Nonlinear SVM} \\
                                      Round & Feature & $\AUC_a$ & $\AUC_b$ & $\AUC_{\rm All}$ & Bias & Disadv & Feature & $\AUC_a$ & $\AUC_b$ & $\AUC_{\rm All}$ & Bias & Disadv \\
                                      &&&&& & Group & & &&& & Group \\
                                      \hline
                                      0 &    & 0.6058 & 0.5083 & 0.5846 & 0.1610 & b &
                                      & 0.5140 & 0.4983 & 0.5122 & 0.0305 & b \\
                                      1 & 47 & 0.6063 & 0.6946 & 0.6368 & 0.1271 & a &
                                      47 & 0.5415 & 0.5965 & 0.5610 & 0.0922 & a \\
                                      2 & 18 & 0.7297 & 0.6976 & 0.7205 & 0.0439 & b &
                                      18 & 0.6819 & 0.6479 & 0.6657 & 0.0498 & b \\
                                      3 & 13 & 0.7299 & 0.7266 & 0.7291 & 0.0045 & b &
                                      13 & 0.7225 & 0.7257 & 0.7221 & 0.0045 & a \\
                                      4 & \textbf{48} & 0.7300 & 0.7486 & 0.7359 & 0.0249 & a &
                                      26 & 0.7912 & 0.7372 & 0.7713 & 0.0683 & b \\
                                      5 & 26 & 0.7878 &  0.7494 & 0.7771 & 0.0488 & b &
                                      49 & 0.8168 & 0.8045 & 0.8137 & 0.0151 & b \\
                                      6 & \textbf{30} & 0.7879 & 0.7736 & 0.7840 &  0.0181 & b &
                                      \textbf{38} & 0.8215 & 0.8456 & 0.8290 & 0.0285 & a \\
                                      7 & 15 & 0.7879 & 0.7861 & 0.7877 & 0.0023 & b &
                                      \textbf{17} & 0.8663 & 0.8690 & 0.8671 & 0.0031 & a \\
                                      8 & 49 & 0.7880 & 0.8048 & 0.7934 & 0.0209 & a &
                                      20 & 0.9026 &  0.8727 & 0.8938 & 0.0331 & b \\
                                      9 & 20 &  0.8209 & 0.8049 & 0.8163 & 0.0194 & b &
                                      12 & 0.9239 & 0.8946 & 0.9157 & 0.0317 & b \\
                                      10 & 12 & 0.8209 & 0.8130 & 0.8186 & 0.0095 & b
                                      & 15 & 0.9251 & 0.9122 & 0.9214 & 0.0139 & b \\
                                      \hline \hline
                                    \end{tabular}
                                    \label{table:logistic_vs_svm}
                                  \end{table}

                                  The results suggest that the FLD heuristic for feature acquisition is robust to using different classifiers.

                                  \subsection{Acquiring Multiple Features at a Time}\label{appendix:multiple_variables}
                                  The \fairAUC\ algorithm can certainly be altered to acquire more than one feature at a time. The benefit to acquiring more than one feature at a time would be that the algorithm is then able to incorporate the covariance between the auxiliary features. However, acquiring multiple features also increases the complexity of the problem and decreases the flexibility in determining which group is the disadvantaged group. If there are $n$ auxiliary features and we plan to acquire $k$ features at a time, there are $\binom{n}{k}$ possible combinations of features to acquire, which grows faster than $2^k$ for $k<\frac{n}{2}$.

                                  We show the AUC and bias when \fairAUC\ is used to collect two features each feature acquisition round. We use Equation \ref{eq:fld_auc} to calculate the AUC associated with the acquisition of each possible pair of auxiliary features. Table \ref{table:multiple_vars} compares the AUCs and bias that result from acquiring one feature at a time versus acquiring two features at a time on the synthetic dataset generated using \cite{guyon2003design}. Group $a$ begins as the disadvantaged group. First, we observe that nine out of the ten features first acquired are the same but in slightly different orders. Second, acquiring two features at a time results in slightly higher overall AUCs across both groups. Third, acquiring one feature at a time results in lower bias on average (0.0395 for one feature vs. 0.0704 for two features). This lower bias is likely because choosing one feature at a time is less likely to overshoot on AUC since it allows flexibility in determining which is the disadvantaged group in each round. Increasing the number of features to acquire $k$ would further decrease flexibility and allow for greater overshooting of AUC.

                                  \begin{table}[htbp]
                                    \footnotesize
                                    \centering
                                    \caption{Selecting One Feature at a Time vs. Two at a Time using fairAUC}
                                    \begin{tabular}{c|cccccc|cccccc}
                                      \hline \hline
                                      & \multicolumn{6}{c}{One Feature} & \multicolumn{6}{c}{Two Features} \\
                                      Round & Feature & $\AUC_a$ & $\AUC_b$ & $\AUC_{\rm All}$ & Bias & Disadv & Features & $\AUC_a$ & $\AUC_b$ & $\AUC_{\rm All}$ & Bias & Disadv \\
                                      &&&&& & Group & & &&& & Group \\
                                      \hline
                                      0 &    & 0.5057 & 0.5810 & 0.5332 & 0.1296 & a &
                                      & 0.5057 & 0.5810 & 0.5332 & 0.1296 & a \\
                                      1 & 17 & 0.6825 & 0.6298 & 0.6679 & 0.0772 & b &
                                      \\
                                      2 & 47 & 0.6832 & 0.7247 & 0.6965 & 0.0572 & a &
                                      17,20 & 0.7547 & 0.6309 & 0.7237 & 0.1641 & b \\
                                      3 & 20 & 0.7549 & 0.7251 & 0.7463 & 0.0395 & b &
                                      \\
                                      4 & 12 & 0.7550 & 0.7599 & 0.7566 & 0.0065 & a &
                                      13,47 & 0.7551 & 0.7599 & 0.7566 & 0.0064 & a \\
                                      5 & 26 & 0.7932 & 0.7601 & 0.7839 & 0.0417 & b &
                                      \\
                                      6 & 13 & 0.7946 & 0.7831 & 0.7913 & 0.0144 & b &
                                      18,26 & 0.8302 & 0.7614 & 0.8114 & 0.0829 & b \\
                                      7 & 15 & 0.7946 & 0.7992 & 0.7962 & 0.0057 & a &
                                      \\
                                      8 & 18 & 0.8303 & 0.8000 & 0.8217 & 0.0366 & b &
                                      12,15 & 0.8303 & 0.8000 & 0.8217 & 0.0366 & b \\
                                      9 & 49 & 0.8336 & 0.8181 &  0.8292 & 0.0186 & b & \\
                                      10 & \textbf{38} & 0.8337 & 0.8278 & 0.8322 & 0.0071 & b & \textbf{7},49 & 0.8336 & 0.8312 & 0.8333 & 0.0029 & b \\
                                      \hline \hline
                                    \end{tabular}
                                    \label{table:multiple_vars}
                                  \end{table}

                                  \noindent One strategy to decrease the complexity of the problem but still allow for more than one feature to be acquired at a time is to first acquire the ``best" feature and then acquire the feature least correlated with the ``best" feature. The ``best" feature is the feature which increases the AUC of the disadvantaged group the most. We use the unconditional correlation to determine which second auxiliary feature should be acquired given the first. Once the first feature is determined, there are only $n-1$ combinations of two features.

                                  Table \ref{table:best_leastcorr_vars} shows the AUCs and bias of selecting two features simultaneously versus sequentially according to the strategy proposed above (i.e., best feature $v$ and least correlated with $v$ feature). Selecting two features simultaneously results in higher AUCs. %

                                  \begin{table}[htbp]
                                    \footnotesize
                                    \centering
                                    \caption{Selecting Two Features Simultaneously vs. Sequentially}
                                    \begin{tabular}{c|ccccc|ccccc}
                                      \hline \hline
                                      & \multicolumn{5}{c}{Simultaneous} & \multicolumn{5}{c}{Sequential} \\
                                      Round & Features & $\AUC_a$ & $\AUC_b$ & $\AUC_{\rm All}$ & Bias & Features & $\AUC_a$ & $\AUC_b$ & $\AUC_{\rm All}$ & Bias\\
                                      \hline
                                      0 & & 0.5057 & 0.5810 & 0.5332 & 0.1296 &
                                      & 0.5057 & 0.5810 & 0.5332 & 0.1296 \\
                                      1 & 17,20 & 0.7547 & 0.6309 & 0.7237 & 0.1641 &
                                      17,37 & 0.6827 & 0.6317 & 0.6685 & 0.0747 \\
                                      2 & 13,47 & 0.7551 & 0.7599 & 0.7566 & 0.0064 &
                                      47,10 & 0.7034 & 0.7256 & 0.7104 & 0.0305 \\
                                      3 & 18,26 & 0.8302 & 0.7614 & 0.8114 & 0.0829 &
                                      20,24 & 0.7717 & 0.7267 & 0.7591 & 0.0584 \\
                                      4 & 12,15 & 0.8303 & 0.8000 & 0.8217 & 0.0366 &
                                      12,32 & 0.7721 & 0.7630 & 0.7697 & 0.0119 \\
                                      5 & 7,49 & 0.8336 & 0.8312 & 0.8333 & 0.0029 &
                                      13,8 & 0.7723 & 0.7865 & 0.7769 & 0.0180 \\
                                      \hline \hline
                                    \end{tabular}
                                    \label{table:best_leastcorr_vars}
                                  \end{table}

                                  \subsection{Data Vendor Features}\label{appendix:data_vendor}

                                  We acquire auxiliary features to augment the COMPAS dataset from Aspire North\footnote{\url{https://www.aspire-north.com/}}, a data vendor. Table \ref{table:aspire_north_features} lists some of the features that the firm offers. The firm sells a core bundle of features for \$80/1,000 individuals. Certain variables, like ethnicity, net worth, spending, cost more and purchasing all available variables exceeds \$15,000/1,000 individuals.

                                  To obtain data from the vendor, we must provide identifying information on the customers, which can include the following types of data to establish identity: name, date of birth, email address, and home address.

                                  \begin{table}[htbp]
                                    \small
                                    \centering
                                    \caption{Aspire North Feature Examples}
                                    \vspace{1mm}
                                    \begin{tabular}{lp{5.5cm}rr}
                                      \hline \hline
                                      Feature Groups & Feature Examples & Cost/1,000 individuals & \# of Features  \\
                                      & & & Available \\
                                      \hline
                                      Demographics & Age, gender, education,  marital status & Core bundle & 26 \\
                                      Housing & House type, renter, homeowner & Core bundle & 14 \\
                                      Material ownership & Vehicle ownership, computer ownership & Core bundle & 23 \\
                                      Interests & Interest in crafts, gourmet cooking & Core bundle & 55 \\
                                      Consumer Financial Insights & Estimate of deposit, balances & \$40 per feature & 5 \\
                                      Consumer Spend & Prediction of annual, spend on dining & \$50 per feature & 10\\
                                      Ethnic Insights & Prediction of language preference  & \$16 per feature & 6\\
                                      Financial Personalities & Prediction of preference for rewards credit cards & \$400 per feature & 37 \\
                                      Discretionary Spend & Prediction of household spend on education  & \$50 per feature & 15\\
                                      \hline \hline
                                    \end{tabular}
                                    \label{table:aspire_north_features}
                                  \end{table}

                                  \newpage
                                  \subsection{Using Only Class-Conditional Means and Variances}\label{appendix:zero_correlation}

                                  The \fairAUC\ \algo\ requires data sharing between the firm and data vendor because of the class-conditional correlations between the score (which the firm manager has access to) and the auxiliary features (which the data vendor has access to).
                                  One strategy to simplify the \algo\ even further is to assume independence between the auxiliary features and the firm's data.
                                  While this assumption is likely to be incorrect, we evaluate how well \fairAUC\ performs in such a case.
                                  If we are willing to make such an assumption, then the data vendor can just provide means and variances corresponding to specific individuals chosen by the firm, without transferring the score or any other data about the individuals except for class.

                                  \smallskip

                                  Table \ref{table:zero_corr} shows the differences in AUC between the \algo\ that includes the class-conditional correlations and that which assumes the correlations to be zero. Specifically, we subtract the AUC with zero correlations from the AUC with correlations and do the same for the bias values. We highlight in bold which features are acquired when the correlations are assumed to be zero but are not acquired when the correlations are accounted for. Only two features fall into this category. However, incorporating the class-conditional correlations generally results in higher overall AUC and lower bias, as we might expect, which illustrates the tradeoff. Therefore, broadly, it seems like ignoring the class-conditional correlations generally results in the acquisition of the same features but in perhaps a less efficient order.

                                  \begin{table}[htbp]
                                    \small
                                    \centering
                                    \caption{Effect of Ignoring Conditional Correlation between Acquired Data and Auxiliary Features}
                                    \begin{tabular}{cccrrrr}
                                      \hline \hline
                                      & Feature with & Feature without & Difference & Difference & Difference & Difference \\
                                      Round & Correlations & Correlations & in $\AUC_a$ & in $\AUC_b$ & in $\AUC_{\rm All}$ & in Bias \\
                                      \hline
                                      1 &  17 & 17 & 0 & 0 & 0 & 0 \\
                                      2 &  47 & 47 & 0 & 0 & 0 & 0 \\
                                      3 &  20 & 18 & 0.0081 & $-$0.0011 & 0.0055 & 0.0120 \\
                                      4 &  12 & 13 & 0.0079 & $-$0.0005 & 0.0054 & $-$0.0111 \\
                                      5 &  26 & 26 & $-$0.0017 & $-$0.0008 & $-$0.0012 & $-$0.0010 \\
                                      6 &  13 & 38 & $-$0.0005 & 0.0114 & 0.0031 & $-$0.0149 \\
                                      7 &  15 & \textbf{29} & $-$0.0006 & 0.0185 & 0.0053 & $-$0.0125 \\
                                      8 &  18 & 49 & 0.0313 & $-$0.0005 & 0.0220 & 0.0347\\
                                      9 &  49 & 20 & $-$0.0001 & 0.0174 & 0.0047 & $-$0.0209 \\
                                      10 & 38 & \textbf{28} & 0.0006 & 0.0196 & 0.0058 & $-$0.0228 \\
                                      \hline \hline
                                      \multicolumn{7}{l}{\textit{Note}: Difference $=$ (with correlation) - (without correlation)}
                                    \end{tabular}
                                    \label{table:zero_corr}
                                  \end{table}

                                  \nocite{bao2021s}
                                  \bibliographystyle{plainnat}
                                  \bibliography{reference-arxiv-v2.bib}

                                  \newpage

                                  \appendix

                                  \section{Measures of Fairness Used in the Literature}\label{Appendix A}

                                  Table \ref{table:fairness_definitions} provides advantages and disadvantages to several fairness metrics that have been suggested in the literature. Let $Y\in\{0,1\}$ represent the true outcome, $\hat{Y}\in\{0,1\}$ the predicted outcome, $A$ the protected attribute, $X$ the non-protected attributes, and $C$ the classifier.

                                  \label{appendix:fairness_measures}
                                  \begin{table}[htbp]
                                    \small
                                    \centering
                                    \caption{Measures of Fairness in the Literature}
                                    \vspace{3pt}
                                    \begin{tabular}{p{2.8cm}p{13cm}}
                                      \hline
                                      Measure & Definition, Advantages, \& Disadvantages \\
                                      \hline
                                      \\
                                      Unawareness & $C = C(X)$ \newline
                                      Advantage: Addresses disparate treatment and complies with existing laws (e.g., Civil Rights Act of 1964) by not using protected attribute as an explicit variable. \newline
                                      Disadvantage: If $X$ and $A$ are correlated then the protected attribute is still incorporated into the classifier. \\
                                      \\
                                      Statistical Parity	& $\Pr[\hat{Y}=1|A=i]=\Pr[\hat{Y}=1|A=j]$ for all groups $i$ and $j$\newline
                                      Advantage: Addresses disparate impact and is the foundation for some laws (e.g., four-fifths rule). \newline
                                      Disadvantages: Can be achieved simply by selecting $x$\% from all groups regardless of justification, potentially resulting in reverse-discrimination. If $Y$ and $A$ are correlated, the ideal predictor $\hat{Y}=Y$ cannot be obtained. \\
                                      \\
                                      Predictive Rate Parity	& $\Pr[Y=1|A=i,\hat{Y}=1]=\Pr[Y=1|A=j,\hat{Y}=1]$ for all groups $i$ and $j$ and $\Pr[Y=0|A=i,\hat{Y}=0]=\Pr[Y=0|A=j,\hat{Y}=0]$ for all groups $i$ and $j$ \newline
                                      Advantage: Optimality-compatible (i.e., allows $\hat{Y}=Y$), aligning fairness with accuracy, and avoids reverse-discrimination. \newline
                                      Disadvantage: May not close the gap between groups over time if $Y$ and $A$ are correlated. \\
                                      \\
                                      Equalized Odds	& $\Pr[\hat{Y}=1|A=i,Y=1]=\Pr[\hat{Y}=1|A=j,Y=1]$ for all groups $i$ and $j$ and
                                      $\Pr[\hat{Y}=1|A=i,Y=0]=\Pr[\hat{Y}=1|A=j,Y=0]$ for all groups $i$ and $j$ \newline
                                      Advantage: Optimality-compatible (i.e., allows $\hat{Y}=Y$) and avoids reverse-discrimination. \newline
                                      Disadvantage: May not close the gap between groups over time if $Y$ and $A$ are correlated. \\
                                      \\
                                      \hline
                                    \end{tabular}
                                    \label{table:fairness_definitions}
                                  \end{table}

                                  \newpage

                                  \section{maxAUC Algorithm}\label{appendix:maxAUC}
                                  \begin{algorithm}%
                                    \SetAlgorithmName{Procedure 2}{procedure}{}
                                    \caption{{\maxAUC} ($t$-th iteration) \label{Procedure_maxAUC}}
                                    \footnotesize
                                    \KwIn{data owned $(\bm{X_i},A_i,Y_i)_{i=1}^N$, scoring algorithm $r$, bias threshold $\epsilon$, set of acquired features $Q(t)$, data available for acquisition $(\bm{Z_i})_{i=1}^N$;}
                                    \KwOut{$Q(t+1)$;}
                                    \eIf{$\bm{A}$ cannot be used}{
                                    $\bm{S} \coloneqq r(\hat{\bm{X}},\bm{Y})$\;
                                    }{
                                    $\bm{S} \coloneqq r(\hat{\bm{X}},\bm{A},\bm{Y})$\;
                                    }
                                    \For {group $g \in \{a,b\}$}{
                                    compute $\AUC_g(S)$ (Definition \ref{AUC_def}) \;
                                    }
                                    $\mathrm{Bias} \coloneqq 1 - \frac{\min_g(\mathrm{AUC}_g(S))}{\max_g(\mathrm{AUC}_g(S))}$ (Definition \ref{bias_def}) \;
                                    \eIf{$\mathrm{Bias}>\epsilon$}{
                                    \For {feature $\bm{Z}^j \in \hat{\bm{Z}},j \notin Q(t)$}{
                                    \eIf{$\bm{A}$ cannot be used}{
                                    for feature $\bm{Z}^j$ and score $\bm{S}$, obtain class-conditional means, $\bm{\mu_{0}},\bm{\mu_{1}}$, and covariance matrices, $\bm{\Sigma_{0}},\bm{\Sigma_{1}}$ (
                                    Overall Summary Statistics Subroutine)\;
                                    $h(\bm{S},\bm{Z}^j) \coloneqq \Phi \left(\sqrt{(\bm{\mu_{1}}-\bm{\mu_{0}})^\top (\bm{\Sigma_0}+\bm{\Sigma_1})^{-1}(\bm{\mu_{1}}-\bm{\mu_{0}})} \right)$\;
                                    }{
                                    \For {$g \in \{a,b\}$}{
                                    for feature $\bm{Z}^j$, group $g$, and score $\bm{S}$, obtain class-conditional means, $\bm{\mu_{0}},\bm{\mu_{1}}$, and covariance matrices, $\bm{\Sigma_{0}},\bm{\Sigma_{1}}$ (Summary Statistics by Group Subroutine)\;
                                    $\phi_g$ represents the proportion of individuals from group $g$ \;
                                    $\omega_g \coloneqq (\bm{\mu_{1}}-\bm{\mu_{0}})^\top (\bm{\Sigma_{0}}+\bm{\Sigma_{1}})^{-1}(\bm{\mu_{1}}-\bm{\mu_{0}})$ \;
                                    }
                                    $h(\bm{S},\bm{Z}^j) \coloneqq \phi_a \Phi( \sqrt{\omega_a}) + \phi_b \Phi (\sqrt{\omega_b})$\;
                                    }
                                    }
                                    $j^\star \coloneqq \arg\max_j h(\bm{S},\bm{Z}^j)$\;
                                    acquire feature $\bm{Z}^{j^\star}$\;
                                    return $Q(t+1) \coloneqq Q(t) \cup \{j^\star\}$\;
                                    }{
                                    no intervention;
                                    }
                                  \end{algorithm}

                                  \begin{algorithm}%
                                    \SetAlgorithmName{Subroutine}{subroutine}{list of subroutines}
                                    \caption{{Overall Summary Statistics} \label{Subroutine_maxAUC}}
                                    \footnotesize
                                    \KwIn{feature available for acquisition $\bm{Z}$, existing score $\bm{S}$;}
                                    \KwOut{class-conditional mean vectors $\bm{\mu_{0}},\bm{\mu_{1}}$, class-conditional covariance matrices $\bm{\Sigma_{0}},\bm{\Sigma_{1}}$;}
                                    \For {class $y \in \{0,1\}$}{
                                    $n \coloneqq n_{Y=y}$ \;
                                    $\bm{\mu_y} \coloneqq
                                    \begin{bmatrix}
                                      \Bar{S}_y\\
                                      \Bar{Z}_y
                                      \end{bmatrix} =
                                      \begin{bmatrix}
                                        \frac{1}{n}\sum_{i:
                                        \begin{smallmatrix}

                                          Y_i=y
                                          \end{smallmatrix}}S_i \\
                                          \frac{1}{n}\sum_{i:
                                          \begin{smallmatrix}
                                            Y_i=y
                                            \end{smallmatrix}}Z_i
                                            \end{bmatrix}$ ;
                                            $\bm{\Sigma_y} \coloneqq
                                            \begin{bmatrix}
                                              \sigma_{S,y}^2 & \rho_y \sigma_{S,y} \sigma_{Z,y} \\
                                              \rho_y \sigma_{S,y} \sigma_{Z,y} & \sigma_{Z,y}^2
                                              \end{bmatrix}$ \\
                                              \noindent where
                                              $\sigma_{S,y}^2=
                                              \frac{1}{n-1} \sum_{i:
                                              \begin{smallmatrix}
                                                Y_i=y
                                                \end{smallmatrix}}(S_i-\Bar{S}_y)^2$,
                                                $\sigma_{Z,y}^2=
                                                \frac{1}{n-1}  \sum_{i:
                                                \begin{smallmatrix}
                                                  Y_i=y
                                                  \end{smallmatrix}}(Z_i-\Bar{Z}_y)^2$,
                                                  and $\rho_y=
                                                  \frac{1}{(n-1)\sigma_{s,y}\sigma_{Z,y}} \sum_{i:
                                                  \begin{smallmatrix}
                                                    Y_i=y
                                                    \end{smallmatrix}}(S_i-\Bar{S}_y)(Z_i-\Bar{Z}_y)$\;
                                                    }
                                                    return $\bm{\mu_{0}},\bm{\mu_{1}},\bm{\Sigma_{0}},\bm{\Sigma_{1}}$
                                                  \end{algorithm}

                                                  \end{document}